\setlist{nosep}
\newcounter{long}
\def\vector#1{\mbox{\boldmath $#1$}}
\DeclareMathOperator*{\argmin}{arg\,min}
\DeclareMathOperator*{\argmax}{arg\,max}
\DeclareMathOperator*{\argsup}{arg\,sup}
\theoremstyle{plain}
\newtheorem{theorem}{Theorem}
\newtheorem{lem}{Lemma}
\newtheorem{prop}{Proposition}
\newtheorem{rem}{Remark}
\theoremstyle{definition}
\newtheorem{definition}{Definition}
\theoremstyle{remark}
\icmltitlerunning{Does Distributionally Robust Supervised Learning Give Robust Classifiers?}
\begin{document}

\twocolumn[
\icmltitle{Does Distributionally Robust Supervised Learning Give Robust Classifiers?}

% It is OKAY to include author information, even for blind
% submissions: the style file will automatically remove it for you
% unless you've provided the [accepted] option to the icml2018
% package.

% List of affiliations: The first argument should be a (short)
% identifier you will use later to specify author affiliations
% Academic affiliations should list Department, University, City, Region, Country
% Industry affiliations should list Company, City, Region, Country

% You can specify symbols, otherwise they are numbered in order.
% Ideally, you should not use this facility. Affiliations will be numbered
% in order of appearance and this is the preferred way.
\icmlsetsymbol{equal}{*}

\begin{icmlauthorlist}
\icmlauthor{Weihua Hu}{tokyo,riken}
\icmlauthor{Gang Niu}{riken}
\icmlauthor{Issei Sato}{tokyo,riken}
\icmlauthor{Masashi Sugiyama}{riken,tokyo}
\end{icmlauthorlist}

\icmlaffiliation{tokyo}{University of Tokyo, Japan}
\icmlaffiliation{riken}{RIKEN, Tokyo, Japan}

\icmlcorrespondingauthor{Weihua Hu}{weihua916@gmail.com}
%\icmlcorrespondingauthor{Eee Pppp}{ep@eden.co.uk}

% You may provide any keywords that you
% find helpful for describing your paper; these are used to populate
% the "keywords" metadata in the PDF but will not be shown in the document
\icmlkeywords{supervised learning}

\vskip 0.3in
]

% this must go after the closing bracket ] following \twocolumn[ ...

% This command actually creates the footnote in the first column
% listing the affiliations and the copyright notice.
% The command takes one argument, which is text to display at the start of the footnote.
% The \icmlEqualContribution command is standard text for equal contribution.
% Remove it (just {}) if you do not need this facility.

\printAffiliationsAndNotice{}  % leave blank if no need to mention equal contribution
%\printAffiliationsAndNotice{\icmlEqualContribution} % otherwise use the standard text.

\begin{abstract}
Distributionally Robust Supervised Learning (DRSL) is necessary for building reliable machine learning systems. 
When machine learning is deployed in the real world, its performance can be significantly degraded because test data may follow a different distribution from training data. 
DRSL with $f$-divergences explicitly considers the worst-case distribution shift by minimizing the adversarially reweighted training loss.
In this paper, we analyze this DRSL, focusing on the classification scenario.
Since the DRSL is explicitly formulated for a distribution shift scenario, we naturally expect it to give a robust classifier that can aggressively handle \emph{shifted distributions.}
However, surprisingly, we prove that the DRSL just ends up giving a classifier that exactly fits \emph{the given training distribution}, which is too pessimistic.
This pessimism comes from two sources: the particular losses used in classification and the fact that the variety of distributions to which the DRSL tries to be robust is too wide.
Motivated by our analysis, we propose simple DRSL that overcomes this pessimism and empirically demonstrate its effectiveness.
\end{abstract}
%\vspace{-0.8cm}
\section{Introduction}
Supervised learning has been successful in many application fields. 
The vast majority of supervised learning research falls into the Empirical Risk Minimization (ERM) framework \citep{vapnik1998statistical} that assumes a test distribution to be the same as a training distribution. 
However, such an assumption can be easily contradicted in real-world applications due to sample selection bias or non-stationarity of the environment \citep{quionero2009dataset}. 
Once the distribution shift occurs, the performance of the traditional machine learning techniques can be significantly degraded. This makes the traditional techniques unreliable for practitioners to use in the real world.

Distributionally Robust Supervised Learning (DRSL) is a promising paradigm to tackle this problem by obtaining prediction functions explicitly robust to distribution shift. More specifically, DRSL considers a minimax game between a learner and an adversary: the adversary first shifts the test distribution from the training distribution within a pre-specified uncertainty set so as to maximize the expected loss on the test distribution. The learner then minimizes the adversarial expected loss. 

DRSL with $f$-divergences \citep{bagnell2005robust, ben2009robust, duchi2016statistics, namkoong2016stochastic, namkoong2017variance} is particularly well-studied and lets the uncertainty set for test distributions be an $f$-divergence ball from a training distribution (see Section \ref{sec:prev_method} for the detail). 
This DRSL has been mainly studied under the assumption that \emph{the same continuous loss is used for training and testing}.
This is not the case in the \emph{classification} scenario, in which we care about the \emph{0-1 loss} (i.e., the mis-classification rate) at test time, while at training time, we use a \emph{surrogate loss} for optimization tractability.

In this paper, we revisit DRSL with $f$-divergences, providing novel insight for the \emph{classification} scenario.
In particular, we prove rather surprising results (Theorems \ref{lem:monotonic}--\ref{thm:steeper}), showing that when the DRSL is applied to classification, the obtained classifier ends up being optimal for the \emph{training distribution}. This is too pessimistic for DRSL given that DRSL is explicitly formulated for a distribution shift scenario and is naturally expected to give a classifier different from the one that exactly fits the given training distribution. Such pessimism comes from two sources: the particular losses used in classification and the over-flexibility of the uncertainty set used by DRSL with $f$-divergences.

Motivated by our analysis, we propose simple DRSL that overcomes the pessimism of the previous DRSL by incorporating structural assumptions on distribution shift (Section \ref{sec:proposed}).
We establish convergence properties of our proposed DRSL (Theorem \ref{thm:conv-rate-est-err-informal}) and derive efficient optimization algorithms (Section \ref{sec:opt}).
Finally, we demonstrate the effectiveness of our DRSL through experiments (Section \ref{sec:realdata}).
\ifnum\value{long}<1
All the appendices of this paper are provided in the supplementary material. 
\else
\fi
\vspace{-0.3cm}
\paragraph{Related work:} 
Besides DRSL with $f$-divergences, different DRSL considers different classes of uncertainty sets for test distributions. DRSL by \citet{nightmare} considered the uncertainty of features deletion, while DRSL by \citet{anqi14} considered the uncertainty of unknown properties of the conditional label distribution. %, which was then extended by \citet{anqi16} into a regression setting. 
DRSL by \citet{esfahani2015data}, \citet{blanchet2016robust} and \citet{sinha2017certifiable} lets the uncertainty set of test distributions be a Wasserstein ball from the training distribution. DRSL with the Wasserstein distance can make classifiers robust to adversarial examples \citep{sinha2017certifiable}, while DRSL with $f$-divergences can make classifiers robust against adversarial reweighting of data points as shown in Section \ref{sec:prev_method}. Recently, in the context of fair machine learning, \citet{hashimoto2018fairness} applied DRSL with $f$-divergences in an attempt to achieve fairness without demographic information.

\vspace{-0.3cm}
\section{Review of ERM and DRSL} \label{sec:prev_method}
In this section, we first review the ordinary ERM framework. Then, we explain a general formulation of DRSL and review DRSL with $f$-divergences.

Suppose training samples, $\{ (x_1, y_1), \ldots, (x_N, y_N)\} \equiv \mathcal{D}$, are drawn i.i.d.~from an unknown training distribution over $\mathcal{X}\times\mathcal{Y}$ with density $p(x,y),$ where $\mathcal{X} \subset \mathbb{R}^d$ and $\mathcal{Y}$ is an output domain.
Let $g_{\theta}$ be a prediction function with parameter $\theta$, mapping $x \in \mathcal{X}$ into a real scaler or vector, and let $\ell(\widehat{y},y)$ be a loss between $y$ and real-valued prediction $\widehat{y}$. 
\vspace{-0.2cm}
\paragraph{ERM:} 
The objective of the risk minimization (RM) is
\begin{align}
\min_{\theta} \underbrace{\mathbb{E}_{p(x, y)} [\ell(g_{\theta}(x), y)]}_{\qquad \mathlarger{\equiv \mathcal{R}(\theta)}}, \label{eq:rm}
\end{align}
where $\mathcal{R}(\theta)$ is called the risk.
In ERM, we approximate the expectation in Eq.~\eqref{eq:rm} by training data $\mathcal{D}$:
\begin{align}
\min_{\theta} \underbrace{\frac{1}{N} \sum_{i = 1}^N \ell(g_{\theta}(x_i), y_i)}_{\qquad \mathlarger{\equiv \widehat{\mathcal{R}}(\theta)}}, \label{eq:erm} %+ \lambda \Omega(\theta), 
\end{align}
where $\widehat{\mathcal{R}}(\theta)$ is called the empirical risk.
To prevent overfitting, we can add regularization term $\Omega(\theta)$ to Eq.~\eqref{eq:erm} and minimize $\widehat{\mathcal{R}}(\theta) + \lambda \Omega(\theta)$, where $\lambda \geq 0$ is a trade-off hyper-parameter.

%Note that in Eq.~\eqref{eq:rm}, we denote the risk as $\mathcal{R}(\theta)$ and in Eq.~\eqref{eq:erm}, the empirical risk as $\widehat{\mathcal{R}}(\theta)$.

\vspace{-0.2cm}
\paragraph{General formulation of DRSL:} 
ERM implicitly assumes the test distribution to be the same as the training distribution, which does not hold in most real-world applications.
DRSL is explicitly formulated for a distribution shift scenario, where test density $q(x,y)$ is different from training density $p(x,y)$. 
Let $\mathcal{Q}_{p}$ be an uncertainty set for test distributions.
In DRSL, the learning objective is 
\begin{align}
\min_{\theta} \sup_{q \in \mathcal{Q}_p} \mathbb{E}_{q(x, y)} [\ell(g_{\theta}(x), y)].  \label{eq:robust_formulation}
\end{align}
We see that Eq.~\eqref{eq:robust_formulation} minimizes the risk w.r.t.~the \emph{worst-case test distribution} within the uncertainty set $\mathcal{Q}_p$.

\vspace{-0.3cm}
\paragraph{DRSL with $f$-divergences:} 
Let $q \ll p$ denote that $q$ is absolutely continuous w.r.t.~$p$, i.e., $p(x, y) = 0$ implies $q(x,y)=0.$ \citet{bagnell2005robust} and \citet{ben2009robust} considered the particular uncertainty set
\begin{align} 
\mathcal{Q}_p = \{q \ll p \ |\  {\rm D}_f[q \| p]\leq \delta \} \label{eq:unc}, 
\end{align}
where ${\rm D}_f[\cdot \| \cdot]$ is an $f$-divergence defined as ${\rm D}_f[q \| p] \equiv \mathbb{E}_{p} \left[f \left(  q/p \right)\right]$, and $f(\cdot)$ is convex with $f(1) = 0$. The $f$-divergence \citep{ciszar1967information} measures a discrepancy between probability distributions. When $f(x) = x \log x$, we have the well-known Kullback-Leibler divergence as an instance of it.
Hyper-parameter $\delta > 0$ in Eq.~\eqref{eq:unc} controls the degree of the distribution shift.
Define $r(x,y) \equiv q(x,y) / p(x,y)$. Through some calculations, the objective of DRSL with $f$-divergences can be rewritten as
\vspace{-0.5cm}

{\small
\begin{align}
&\min_{\theta} \underbrace{\sup_{r \in \mathcal{U}_f} \mathbb{E}_{p(x,y)} [r(x, y) \ell(g_{\theta}(x), y)]}_{\qquad \qquad  \mathlarger{\equiv \mathcal{R}_{\rm adv}(\theta)}},  \label{eq:1234} \\
&\mathcal{U}_f \equiv \{ r(x,y)\  |\ \mathbb{E}_{p(x, y)} \left[ f \left( r(x,y) \right)\right] \leq \delta,\nonumber \\ 
& \qquad \qquad \ \ \ \qquad \mathbb{E}_{p(x,y)}[r(x,y)] = 1, \nonumber \\
&\qquad  \qquad \ \ \ \qquad  r(x, y) \geq 0, \ \forall (x, y) \in \mathcal{X} \times \mathcal{Y}\}. \label{eq:12345}
\end{align}}We call $\mathcal{R}_{\rm adv}(\theta)$ the \emph{adversarial risk} and call the minimization problem of Eq.~\eqref{eq:1234} the \emph{adversarial risk minimization} (ARM).
In ARM, the density ratio, $r(x,y)$, can be considered as the weight put by the adversary on the loss of data $(x,y)$.
Then, Eq.~\eqref{eq:1234} can be regarded as a minimax game between the learner (corresponding to $\min_{\theta}$) and the adversary (corresponding to $\sup_{r \in \mathcal{U}_f}$): the adversary first reweights the losses using $r(\cdot, \cdot)$ so as to maximize the expected loss; the learner then minimizes the \emph{reweighted} expected loss, i.e., adversarial risk $\mathcal{R}_{\rm adv}(\theta)$.

For notational convenience, we denote $\ell(g_{\theta}(x_i), y_i)$ by $\ell_i(\theta)$. Also, let $\vector{r} \equiv (r_1, \ldots, r_N)$ be a vector of density ratios evaluated at training data points, i.e., $r_i \equiv r(x_i, y_i)$ for $1 \leq i \leq N$.
Equations~\eqref{eq:1234} and \eqref{eq:12345} can be empirically approximated as\footnote{The formulation in Eqs.~\eqref{eq:uncertain_set_empirical} and \eqref{eq:uncertain_set_empirical2} is similar to \citet{duchi2016statistics}, \citet{namkoong2016stochastic} and \citet{namkoong2017variance} except that they decay $\delta$ linearly w.r.t.~the number of training data $N$. Different from us, they assume $\delta = 0$ in Eq.~\eqref{eq:unc} (thus, their objective is the ordinary risk) and try to be robust to apparent distribution fluctuations due to the \emph{finiteness of training samples}. On the other hand, we consider using the same $\delta > 0$ for both Eqs.~\eqref{eq:unc} and \eqref{eq:uncertain_set_empirical2} and try to be robust to the actual distribution change between training and test stages.}
{\small
\begin{align}
&\min_{\theta} \underbrace{\sup_{\vector{r} \in \widehat{\mathcal{U}}_{f}} \frac{1}{N}\sum_{i = 1}^{N} r_i \ell_i(\theta)}_{\qquad \qquad \mathlarger{\equiv \widehat{\mathcal{R}}_{\rm adv}(\theta)}}, 
\label{eq:uncertain_set_empirical} \\
&\widehat{\mathcal{U}}_{f}= \left\{ \vector{r} \   \middle|  \ \frac{1}{N}\sum_{i=1}^N f \left( r_i \right) \leq \delta,\  \frac{1}{N}\sum_{i=1}^N r_i = 1,\ \vector{r} \geq 0 \right \}, \label{eq:uncertain_set_empirical2}
\end{align}}where the inequality constraint for a vector is applied in an element-wise fashion. 
We call $\widehat{\mathcal{R}}_{\rm adv}(\theta)$ the \emph{adversarial empirical risk} and call the minimization problem of Eq.~\eqref{eq:uncertain_set_empirical} the \emph{adversarial empirical risk minimization} (AERM).
In AERM, the adversary (corresponding to $\sup_{\vector{r} \in \widehat{\mathcal{U}}_{f}}$) reweights data losses through $\vector{r}$ to maximize the empirical loss in Eq.~\eqref{eq:uncertain_set_empirical}.
To prevent overfitting, we can add regularization term $\Omega(\theta)$ to Eq.~\eqref{eq:uncertain_set_empirical}.
\vspace{-0.2cm}
\section{Analysis of DRSL with $f$-divergences in classification} \label{sec:analysis}
At first glance, DRSL with $f$-divergences (which we call ARM and AERM in this paper) is reasonable to give a distributionally robust classifier in the sense that it explicitly minimizes the loss for the shifted worst-case \emph{test distribution.}
However, we show rather surprising results, suggesting that the DRSL, when applied to \emph{classification}, still ends up giving a classifier optimal for a \emph{training} distribution. This is too pessimistic for DRSL because it ends up behaving similarly to ordinary ERM-based supervised classification that does \emph{not} explicitly consider distribution shift. To make a long story short, our results hold because of the particular losses used in classification (especially, the 0-1 loss at test time) and the overly flexible uncertainty sets used by ARM and AERM. We will detail these points after we state our main results. 
\vspace{-0.3cm}
\paragraph{Classification setting:}
Let us first briefly review classification settings to set up notations.
In binary classification, we have $g_{\theta}(\cdot): x \mapsto \widehat{y} \in \mathbb{R}$, $\mathcal{Y} = \{ +1,-1 \}$ and $\ell(\cdot, \cdot): \mathbb{R} \times \mathcal{Y} \to \mathbb{R}_{\geq 0}$. In $K$-class classification for $K \geq 2$, we have $g_{\theta}(\cdot): x \mapsto \widehat{y} \in \mathbb{R}^{K}$, $\mathcal{Y} = \{1, 2, \ldots, K\}$ and $\ell(\cdot, \cdot): \mathbb{R}^K \times \mathcal{Y} \to \mathbb{R}_{\geq 0}$.
The goal of classification is to learn the prediction function that minimizes the mis-classification rate on the test distribution. The mis-classification rate corresponds to the \emph{use of the 0-1 loss}, i.e., $\ell(\widehat{y}, y) \equiv \vector{1} \{{\rm sign}(\widehat{y}) \neq y\}$ for binary classification, and $\ell(\widehat{y}, y) \equiv \vector{1} \{{\rm argmax}_k \widehat{y}_k \neq y\}$ for multi-class classification, where $\vector{1}\{\cdot \}$ is the indicator function and $\widehat{y}_k$ is the $k$-th element of $\widehat{y} \in \mathbb{R}^{K}$.
However, since the 0-1 loss is non-convex and non-continuous, learning with it is difficult in practice. Therefore, at training time, we instead use \emph{surrogate losses} that are easy to optimize, such as the logistic loss and the cross-entropy loss. 

In the following, we state our main results, analyzing ARM and AERM in the classification scenario by considering the use of the 0-1 loss and a surrogate loss. 
\vspace{-0.3cm}
\paragraph{The 0-1 loss case:}
Theorem \ref{lem:monotonic} establishes the non-trivial relationship between the adversarial risk and the ordinary risk when the 0-1 loss is used.
\begin{theorem} \label{lem:monotonic}
Let $\ell(\widehat{y}, y)$ be the 0-1 loss. Then, there is a monotonic relationship between $\mathcal{R}_{\rm adv}(\theta)$ and $\mathcal{R}(\theta)$ in the sense that for any pair of parameters $\theta_1$ and $\theta_2$, the followings hold.

If $\mathcal{R}_{\rm adv}(\theta_1) < 1$, then
\begin{align} 
\mathcal{R}_{\rm adv}(\theta_1) < \mathcal{R}_{\rm adv}(\theta_2) \Longleftrightarrow \mathcal{R}(\theta_1) <  \mathcal{R}(\theta_2). \label{eq:monotonic_erm}
\end{align}

If $\mathcal{R}_{\rm adv}(\theta_1)= 1$, then
\begin{align} 
\mathcal{R}(\theta_1) \leq \mathcal{R}(\theta_2) \Longrightarrow \mathcal{R}_{\rm adv}(\theta_2) = 1.  \label{eq:monotonic_erm2}
\end{align}
The same monotonic relationship also holds between their empirical approximations: $\widehat{\mathcal{R}}_{\rm adv}(\theta)$ and $\widehat{\mathcal{R}}(\theta).$
\end{theorem}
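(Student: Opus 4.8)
The plan is to exploit the fact that, under the 0-1 loss, $\ell(g_{\theta}(x),y)$ takes only the values $0$ and $1$, so the only feature of $\theta$ the inner adversarial problem can resolve is the total $p$-mass of the misclassification region $A_{\theta} \equiv \{(x,y) : \ell(g_{\theta}(x),y)=1\}$, which is exactly $\mathcal{R}(\theta) = \mathbb{E}_{p}[\mathbf{1}\{A_{\theta}\}] = \Pr_{p}[A_{\theta}]$. Writing $\rho = \mathcal{R}(\theta)$, I would first show $\mathcal{R}_{\rm adv}(\theta) = H(\rho)$ for a function $H$ depending only on $\rho$ (and on $f,\delta$). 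Since $\mathcal{R}_{\rm adv}(\theta) = \sup_{r \in \mathcal{U}_f}\int_{A_{\theta}} r\,dp$, the key step is a Jensen symmetrization: given any feasible $r$, replacing it by its $p$-conditional averages $a = \tfrac{1}{\rho}\int_{A_{\theta}} r\,dp$ on $A_{\theta}$ and $b = \tfrac{1}{1-\rho}\int_{A_{\theta}^{c}} r\,dp$ on $A_{\theta}^{c}$ leaves both the objective $\int_{A_{\theta}} r\,dp$ and the normalization $\mathbb{E}_{p}[r]=1$ unchanged while, by convexity of $f$, not increasing $\mathbb{E}_{p}[f(r)]$. Hence the supremum is attained by a two-valued $r$ and depends on $\theta$ only through $\rho$.

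Next I would analyze the resulting scalar program. Writing $u = a\rho$ for the reweighted mass placed on $A_{\theta}$, the constraints become precisely $\mathrm{D}_f\bigl((u,1-u)\,\|\,(\rho,1-\rho)\bigr) \le \delta$ for the induced two-point distributions, so $H(\rho) = \max\{ u \in [\rho,1] : \mathrm{D}_f((u,1-u)\|(\rho,1-\rho)) \le \delta \}$. I then need three properties: $H$ is non-decreasing; there is a threshold $\rho^{\ast}$ with $H(\rho)=1$ for $\rho \ge \rho^{\ast}$; and $H(\rho)<1$ with $H$ strictly increasing for $\rho < \rho^{\ast}$ (the threshold is where the adversary can first place all its mass on $A_{\theta}$). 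The monotonicity is the technical heart: for fixed $u > \rho$ I would show $\rho \mapsto \mathrm{D}_f((u,1-u)\|(\rho,1-\rho))$ is strictly decreasing. Differentiating yields a derivative of the form $h(u/\rho) - h\bigl((1-u)/(1-\rho)\bigr)$ with $h(z) \equiv f(z) - z f'(z)$; since $h'(z) = -z f''(z) \le 0$, $h$ is non-increasing, and because $u/\rho > 1 > (1-u)/(1-\rho)$ the derivative is $\le 0$, strictly so under strict convexity of $f$. Increasing $\rho$ therefore slackens the divergence constraint at the previous optimizer, permitting a strictly larger $u$, which gives strict monotonicity of $H$ below $\rho^{\ast}$.

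The two displayed claims then follow from $\mathcal{R}_{\rm adv}(\theta) = H(\mathcal{R}(\theta))$. If $\mathcal{R}_{\rm adv}(\theta_1) < 1$ then $\mathcal{R}(\theta_1) < \rho^{\ast}$; the $\Leftarrow$ direction of Eq.~\eqref{eq:monotonic_erm} uses strict monotonicity of $H$ at $\mathcal{R}(\theta_1)$, while the $\Rightarrow$ direction uses only that $H$ is non-decreasing. For Eq.~\eqref{eq:monotonic_erm2}, $\mathcal{R}_{\rm adv}(\theta_1)=1$ forces $\mathcal{R}(\theta_1) \ge \rho^{\ast}$, whence $\mathcal{R}(\theta_2) \ge \mathcal{R}(\theta_1) \ge \rho^{\ast}$ gives $\mathcal{R}_{\rm adv}(\theta_2) = H(\mathcal{R}(\theta_2)) = 1$. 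The empirical statement is identical with integrals replaced by averages over the $N$ points: the same Jensen symmetrization over the index sets $A = \{i : \ell_i(\theta)=1\}$ and $A^{c}$ collapses $\widehat{\mathcal{R}}_{\rm adv}(\theta)$ to $H(|A|/N) = H(\widehat{\mathcal{R}}(\theta))$, and the discreteness of $\widehat{\mathcal{R}}(\theta)$ is harmless since $H$ is strictly increasing.

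I expect the main obstacle to be the reduction to the scalar $H$ together with its strict monotonicity, i.e.\ the sign analysis of $h(z) = f(z) - z f'(z)$; some additional care is needed at the boundary $\rho^{\ast}$, where $H$ reaches $1$, to make the ``$<1$ versus $=1$'' dichotomy of the statement clean.
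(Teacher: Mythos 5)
Your proposal follows the same skeleton as the paper's proof: the Jensen symmetrization that collapses the adversary to a two-valued $r$ (constant on the correctly- and incorrectly-classified regions), the resulting scalar program depending on $\theta$ only through $\rho=\mathcal{R}(\theta)$, and the dichotomy between the regime where the constraint $r_0\ge 0$ is active (adversarial risk $=1$, your threshold $\rho^{\ast}$, the paper's Case~2) and the regime where it is not. Where you diverge is in the central monotonicity step. The paper never differentiates the divergence in $\rho$: for $\rho<\rho'$ it constructs the candidate $r_1'=(\rho/\rho')\,r_1^{\ast}(\rho)$, writes $r_0',r_1'$ as convex combinations of $r_0^{\ast},1$ and $1,r_1^{\ast}$, and uses only convexity of $f$ together with $f(1)=0$ and $\rho'>\rho$ to place this candidate strictly inside the constraint set, whence the optimum strictly improves. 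Your route instead computes $\partial_\rho\,\mathrm{D}_f\bigl((u,1-u)\,\|\,(\rho,1-\rho)\bigr)=h(u/\rho)-h\bigl((1-u)/(1-\rho)\bigr)$ with $h(z)=f(z)-zf'(z)$, which is a clean and correct calculation but imports two hypotheses the theorem does not make: differentiability of $f$ (assumed only in Theorem~2, not here) and strict convexity for the strict inequality. The second is repairable without extra assumptions: at an optimizer with $H(\rho)<1$ the constraint is active with value $\delta>0$, and since any $f$ that is affine on $[\,(1-u)/(1-\rho),\,u/\rho\,]\ni 1$ with $f(1)=0$ would force the binary divergence to vanish there, $h$ cannot be constant on that interval, so the derivative is strictly negative where it matters. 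The first requires replacing $f'$ by a subderivative selection (or a secant-slope argument), at which point you have essentially reconstructed the paper's interpolation proof. With those two patches your argument is complete and matches the paper's conclusions, including the treatment of the empirical version by replacing integrals with averages.
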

\vspace{-0.2cm}
See Appendix \ref{app:proof_monotonic} for the proof. 
Theorem \ref{lem:monotonic} shows a surprising result that \emph{when the 0-1 loss is used}, $\mathcal{R}(\theta)$ and $\mathcal{R}_{\rm adv}(\theta)$ are essentially equivalent objective functions in the sense that the minimization of one objective function results in the minimization of another objective function. This readily implies that $\mathcal{R}(\theta)$ and $\mathcal{R}_{\rm adv}(\theta)$ have exactly the same set of global minima in the regime of $\mathcal{R}_{\rm adv}(\theta) < 1$. An immediate practical implication is that if we select hyper-parameters such as $\lambda$ for regularization according to \emph{the adversarial risk with the 0-1 loss}, we will end up choosing hyper-parameters that attain the minimum mis-classification rate on the \emph{training distribution}. 
\vspace{-0.3cm}
\paragraph{The surrogate loss case:}
We now turn our focus on the training stage of classification, where we use a \emph{surrogate loss} instead of the 0-1 loss.
In particular, for binary classification, we consider a class of classification calibrated losses \citep{bartlett2006convexity} that are margin-based, i.e., $\ell(\widehat{y}, y)$ is a function of product $y \widehat{y}$. %This includes the logistic loss, the hinge loss and the exponential loss.
For multi-class classification, we consider a class of classification calibrated losses \citep{tewari2007consistency} that are invariant to class permutation, i.e., for any class permutation $\pi: \mathcal{Y} \to \mathcal{Y}$,  $\ell(\widehat{y}^{\pi}, \pi(y))$ = $\ell(\widehat{y}, y)$ holds, where $\widehat{y}^{\pi}_k = \widehat{y}_{\pi(k)}$ for $1 \leq k \leq K$. 
Although we only consider the sub-class of general classification-calibrated losses \cite{bartlett2006convexity, tewari2007consistency}, we note that ours still includes some of the most widely used losses: the logistic, hinge, and exponential losses for binary classification and the softmax cross entropy loss for multi-class classification.

We first review Proposition \ref{thm:calibrated_review} by \citet{bartlett2006convexity} and \citet{tewari2007consistency} that justifies the use of classification-calibrated losses in ERM for classification.
We then show a surprising fact in Theorem \ref{thm:calibrated} that the similar property also holds for ARM using the sub-class of classification-calibrated losses.
\begin{prop}[\citet{bartlett2006convexity, tewari2007consistency}]  \label{thm:calibrated_review}
Let $\ell(\widehat{y}, y)$ be a classification calibrated loss, and assume that the hypothesis class is equal to all measurable functions. 
Then, the risk minimization (RM) gives the Bayes optimal classifier\footnote{The classifier that minimizes the mis-classification rate for the training density $p(x, y)$ (the 0-1 loss is considered), i.e., the classifier whose prediction on $x$ is equal to $\argmax_{y \in \mathcal{Y}} p(y | x)$.}.
\end{prop}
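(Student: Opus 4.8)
The plan is to reduce the global minimization over all measurable functions to a pointwise (per-$x$) minimization of the conditional risk, and then to invoke the defining property of classification-calibrated losses, which is precisely a statement about the sign (binary) or the $\argmax$ (multi-class) of the pointwise minimizer. First I would exploit the margin-based structure of the loss. For binary classification, writing $\ell(\widehat{y}, y) = \phi(y\widehat{y})$ and conditioning on $x$, the risk becomes $\mathcal{R} = \mathbb{E}_{p(x)}[C_{\eta(x)}(g(x))]$, where $\eta(x) \equiv p(y = +1 \mid x)$ and the conditional risk is $C_\eta(\alpha) \equiv \eta\,\phi(\alpha) + (1-\eta)\,\phi(-\alpha)$. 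Because the hypothesis class is all measurable functions, the value $g(x)$ can be chosen independently for each $x$, so minimizing $\mathcal{R}$ amounts to minimizing $C_{\eta(x)}(\alpha)$ over $\alpha \in \mathbb{R}$ separately at every point $x$, where a standard measurable-selection argument guarantees that a measurable minimizer exists or can be approached by a measurable sequence.

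Second, I would invoke the characterization of classification calibration due to \citet{bartlett2006convexity}: a margin-based loss is classification-calibrated iff the constrained infimum of $C_\eta$ over predictions of the \emph{wrong} sign is strictly larger than its unconstrained infimum whenever $\eta \neq 1/2$. Consequently the (approximate) pointwise minimizer $\alpha^*(x)$ satisfies $\mathrm{sign}(\alpha^*(x)) = \mathrm{sign}(2\eta(x) - 1)$ for all $x$ with $\eta(x) \neq 1/2$. Since the Bayes optimal classifier predicts $\mathrm{sign}(2\eta(x)-1) = \argmax_{y} p(y\mid x)$, the minimizer of $\mathcal{R}$ yields the Bayes classifier; the set $\{x : \eta(x) = 1/2\}$ is irrelevant because both labels incur the same $0$-$1$ risk there. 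For multi-class classification I would repeat the same reduction, now minimizing the conditional risk $\sum_{k} p(k \mid x)\,\ell(\widehat{y}, k)$ over $\widehat{y} \in \mathbb{R}^K$, and use the permutation-invariance assumption together with the calibration characterization of \citet{tewari2007consistency} to show that any minimizer places its largest coordinate on the majority class $\argmax_k p(k\mid x)$, which is again the Bayes prediction.

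The main obstacle I expect is the second step: establishing (rather than merely quoting) the pointwise sign/$\argmax$ characterization of calibration, especially in the multi-class case, where permutation-invariance must be used to argue that the ordering of the coordinates of the conditional-risk minimizer matches the ordering of the class posteriors $p(k\mid x)$. The binary case is comparatively clean, since calibration is by definition the separation between the wrong-sign and unconstrained conditional infima. A secondary technical point is the measurability of the pointwise minimizer and the handling of ties ($\eta(x) = 1/2$, or equal maximal posteriors in the multi-class case), but these lie on sets where the choice of label does not affect the $0$-$1$ risk and so do not alter the conclusion.
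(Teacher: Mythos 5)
The paper offers no proof of this proposition: it is quoted verbatim from \citet{bartlett2006convexity} and \citet{tewari2007consistency}, so there is nothing internal to compare your argument against. Your sketch --- reduce the global minimization over all measurable functions to pointwise minimization of the conditional risk $C_{\eta(x)}(\cdot)$, then invoke the calibration characterization to read off the sign (resp.\ $\argmax$) of the pointwise minimizer --- is exactly the standard route taken in those references, and it is correct in outline, including your handling of ties and of measurable selection. One small mismatch worth fixing: the proposition as stated assumes only that $\ell$ is classification calibrated, whereas your multi-class argument leans on permutation invariance of the loss; that extra hypothesis appears only later in the paper (Theorem~\ref{thm:calibrated}), and the Tewari--Bartlett characterization of multi-class calibration already delivers the $\argmax$ conclusion without it, so you should either drop that reliance or note that you are proving a slightly narrower statement than the one quoted.
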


\begin{theorem} \label{thm:calibrated}
Let $f(\cdot)$ be differentiable, the hypothesis class be all measurable functions, and $\ell(\widehat{y}, y)$ be a classification-calibrated loss that is margin-based or invariant to class permutation.
Let $g^{\rm (adv)}$ be any solution of ARM\footnote{There can be multiple solutions that achieve the same minimum adversarial risk.} under the above setting, and define
\begin{align}
r^{\ast} \equiv \argmax_{r \in \mathcal{U}_f} \mathbb{E}_{p(x,y)} [r(x, y) \ell(g^{\rm (adv)}(x), y)].
\end{align}
Then, the prediction of $g^{\rm (adv)}$ coincides with that of the Bayes optimal classifier almost surely over $q^{\ast}(x) \equiv \sum_{y \in \mathcal{Y}}r^{\ast}(x, y) p(x, y)$.
Furthermore, among the solutions of ARM, there exists $g^{\rm (adv)}$ whose prediction coincides with that of the Bayes optimal classifier almost surely over $p(x)$.
\end{theorem}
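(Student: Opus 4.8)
The plan is to exploit the convex--concave minimax structure of ARM. The objective $F(g,r) \equiv \mathbb{E}_{p(x,y)}[r(x,y)\ell(g(x),y)]$ appearing in Eq.~\eqref{eq:1234} is linear, hence concave, in $r$, and---because each loss we consider is convex in the prediction and the hypothesis class is all measurable functions---convex in $g$; the feasible set $\mathcal{U}_f$ in Eq.~\eqref{eq:12345} is convex. I would prove the first claim in two steps: (i) show that the ARM solution $g^{\mathrm{(adv)}}$ is in fact a minimizer of the $r^{\ast}$-reweighted risk, hence by Proposition~\ref{thm:calibrated_review} the Bayes optimal classifier for $q^{\ast}$; and (ii) show that the Bayes optimal classifier for $q^{\ast}$ agrees with the Bayes optimal classifier for $p$ wherever $q^{\ast}$ places mass. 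The ``furthermore'' claim I would then obtain by modifying $g^{\mathrm{(adv)}}$ on the $q^{\ast}$-null set without changing the adversarial risk.

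For step (i), note that $\mathcal{R}_{\mathrm{adv}}(g)=\sup_{r\in\mathcal{U}_f}F(g,r)$ is a pointwise supremum of functions convex in $g$, and that $r^{\ast}$ is by definition an inner maximizer at $g^{\mathrm{(adv)}}$. By Danskin's theorem the directional derivatives of $\mathcal{R}_{\mathrm{adv}}$ at $g^{\mathrm{(adv)}}$ are governed by $\nabla_g F(g^{\mathrm{(adv)}},r^{\ast})$; since $g^{\mathrm{(adv)}}$ minimizes $\mathcal{R}_{\mathrm{adv}}$, these derivatives are nonnegative in every direction, forcing stationarity of $F(\cdot,r^{\ast})$ at $g^{\mathrm{(adv)}}$. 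Convexity in $g$ upgrades stationarity to global minimality, so $g^{\mathrm{(adv)}}\in\argmin_g \mathbb{E}_p[r^{\ast}\ell(g,y)]=\argmin_g \mathbb{E}_{q^{\ast}}[\ell(g,y)]$, i.e.\ $g^{\mathrm{(adv)}}$ performs risk minimization under $q^{\ast}$. Proposition~\ref{thm:calibrated_review} then identifies $g^{\mathrm{(adv)}}$ as the Bayes optimal classifier for $q^{\ast}$, predicting $\argmax_y q^{\ast}(y\mid x)$ almost surely over $q^{\ast}(x)$. (The clean use of Danskin needs an essentially unique inner maximizer; I would secure this from strict convexity of $\mathbb{E}_p[f(r)]$ for the standard strictly convex $f$, and treat general differentiable $f$ through the full subdifferential of $\mathcal{R}_{\mathrm{adv}}$.)

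For step (ii), I would analyze the adversary's inner problem through its KKT conditions. Since $f$ is differentiable, the stationarity condition reads $f'(r^{\ast}(x,y))=(\ell(g^{\mathrm{(adv)}}(x),y)-\nu)/\lambda$ on $\{r^{\ast}>0\}$ with multipliers $\lambda\ge 0$ and $\nu$, so $r^{\ast}(x,y)$ is a nondecreasing function of the loss $\ell(g^{\mathrm{(adv)}}(x),y)$: the adversary up-weights high-loss points. The structural property of the losses considered (margin-based or permutation-invariant, classification-calibrated) is that, at any fixed prediction, the loss is \emph{smallest} at the label predicted by $g^{\mathrm{(adv)}}$, which by step (i) is $\widehat{y}(x)\equiv\argmax_y q^{\ast}(y\mid x)$; hence $r^{\ast}(x,\widehat{y}(x))\le r^{\ast}(x,y)$ for all $y$. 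Writing $q^{\ast}(y\mid x)\propto r^{\ast}(x,y)\,p(y\mid x)$ and using that $\widehat{y}(x)$ maximizes $q^{\ast}(\cdot\mid x)$ gives $r^{\ast}(x,\widehat{y})\,p(\widehat{y}\mid x)\ge r^{\ast}(x,y)\,p(y\mid x)\ge r^{\ast}(x,\widehat{y})\,p(y\mid x)$, whence $p(\widehat{y}\mid x)\ge p(y\mid x)$ for every $y$. Thus $\widehat{y}(x)$ is also the $p$-Bayes label, proving the first claim.

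For the ``furthermore'' claim, the only remaining freedom is on $N\equiv\{x: q^{\ast}(x)=0\}$, where the KKT condition shows every label has loss below the zero-weight threshold $t\equiv \nu+\lambda f'(0^{+})$. When $f'(0^{+})=-\infty$ (e.g.\ the KL choice $f(x)=x\log x$) this set is $p$-null, so the first claim already yields agreement almost surely over $p(x)$. In general I would redefine $g^{\mathrm{(adv)}}$ on $N$ as a low-confidence predictor of the $p$-Bayes label $\argmax_y p(y\mid x)$, chosen so that all its losses stay below $t$; since the adversary still assigns zero weight there, the worst-case reweighting and hence $\mathcal{R}_{\mathrm{adv}}$ are unchanged, so the modified function is again an ARM solution and now coincides with the $p$-Bayes classifier almost surely over $p(x)$. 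I expect the main obstacle to be the rigorous reduction in step (i)---converting optimality of $g^{\mathrm{(adv)}}$ for the $\sup$-objective into exact minimization of the $r^{\ast}$-reweighted risk when the inner maximizer need not be unique---together with verifying the loss-ordering property used in step (ii) for the full permutation-invariant sub-class, not merely the softmax cross-entropy loss.
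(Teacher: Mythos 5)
Your step (i) is exactly how the paper's proof in Appendix~\ref{app:proof_calibrated} opens: it invokes Danskin's theorem to pass from the minimax problem to $g^{\rm (adv)} = \argmin_g \mathbb{E}_{p(x,y)}[r^{\ast}(x,y)\,\ell(g(x),y)]$ and then reads off pointwise, via classification calibration, that $g^{\rm (adv)}$ predicts $\argmax_y p(y|x)r^{\ast}(y|x)$, i.e.\ the Bayes label of $q^{\ast}$. Where you genuinely diverge is step (ii). The paper never writes down the KKT conditions of the inner problem; it argues by contradiction instead: if on a set of positive $q^{\ast}$-measure the $q^{\ast}$-Bayes label $y^{\rm (adv)}$ differs from the $p$-Bayes label $y^{\rm (max)}$, then $r^{\ast}(y^{\rm (adv)}|x) > r^{\ast}(y^{\rm (max)}|x) > 0$, and shifting an $\epsilon$-amount of conditional weight from $y^{\rm (adv)}$ to $y^{\rm (max)}$ (Eqs.~\eqref{eq:swapping1}--\eqref{eq:swapping2}) stays in $\mathcal{U}_f$ (convexity of $f$ plus a first-order expansion) while strictly increasing the adversary's objective, contradicting optimality of $r^{\ast}$. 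Your KKT route and the paper's perturbation route are two faces of the same first-order optimality condition; yours buys a very clean chain $r^{\ast}(x,\hat y)p(\hat y|x)\ge r^{\ast}(x,y)p(y|x)\ge r^{\ast}(x,\hat y)p(y|x)$ with no explicit construction, and it is fine to divide by $r^{\ast}(x,\hat y)$ since it is positive on the support of $q^{\ast}$.

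Two obligations remain that the paper discharges and you do not. First, deducing pointwise monotonicity of $r^{\ast}$ in the loss from $f'(r^{\ast})=(\ell-\nu)/\lambda$ requires $\lambda>0$ and effectively strict monotonicity of $f'$, whereas only convexity and differentiability of $f$ are assumed; you must either treat the inactive-constraint case and flat pieces of $f'$ or argue that \emph{some} inner maximizer is monotone in the loss (the paper does exactly this kind of selection in the proof of Theorem~\ref{thm:steeper}). Second, and more substantively, the loss-ordering property you flag as an obstacle is the real content of the theorem for the permutation-invariant multiclass sub-class, and the paper's proof supplies precisely the argument you are missing: assuming $\ell(g^{\rm (adv)}(x),y^{\rm (max)})\le\ell(g^{\rm (adv)}(x),y^{\rm (adv)})$, swap the $y^{\rm (max)}$-th and $y^{\rm (adv)}$-th coordinates of $g^{\rm (adv)}(x)$ (or negate the output in the binary margin case) and use permutation invariance together with the pointwise optimality of $g^{\rm (adv)}$ under the $r^{\ast}$-reweighted conditional to reach a contradiction (Eq.~\eqref{eq:gang}). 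Without that swap argument your step (ii) is incomplete. Your handling of the ``furthermore'' claim matches the paper's (redefine $g$ on the $q^{\ast}$-null set to be pointwise $p$-Bayes optimal), and your KL observation reproduces Remark~\ref{app:dis_kl}.
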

See Appendix \ref{app:proof_calibrated} for the proof.
\begin{comment}
\begin{rem} \label{rem:haha}
\upshape
To ensure that the prediction of \emph{any} classifier obtained by ARM coincides with that of the Bayes optimal classifier almost surely over $p(x)$, we need the condition that the support of $q^{\ast}(x)$ is the same as that of $p(x)$, i.e., $q^{\ast}(x) > 0$ iff $p(x) > 0$.
In general, we cannot know in advance whether this condition holds or not until we solve ARM exactly to obtain $q^{\ast}(x)$.
Nonetheless, when the Kullback-Leibler divergence is used in $\mathcal{U}_f$, we can prove that the condition \emph{always} holds. See Appendix \ref{app:dis_kl} for the detailed discussion.
\end{rem}
\end{comment}
Theorem \ref{thm:calibrated} indicates that ARM, similarly to RM, ends up giving the optimal decision boundary for the \emph{training} distribution, if the hypothesis class is all measurable functions and we have access to true density $p(x, y)$.
Even though the assumptions made are strong, Theorem \ref{thm:calibrated} together with Proposition \ref{thm:calibrated_review} highlight the non-trivial fact that when a certain surrogate loss is used, AERM and ERM demonstrate the similar \emph{asymptotic} behavior in classification.

We proceed to consider a more practical scenario, where we only have a finite amount of training data and the hypothesis class is limited.
In the rest of the section, we focus on a differentiable loss and a real-valued scalar output, i.e., $\widehat{y} \in \mathbb{R}$, which includes the scenario of binary classification.

We first define the notion of a \emph{steeper} loss, which will play a central role in our result.
%In the following, let us assume $\ell(\cdot, \cdot)$ to be differentiable w.r.t.~its first argument.

\begin{definition}[Steeper loss] \label{def:steeper}
Loss function $\ell_{\rm steep}(\widehat{y}, y)$ is said to be steeper than loss function $\ell(\widehat{y}, y)$, if there exists a non-constant, non-decreasing and non-negative function $h: \mathbb{R}_{\geq 0} \to \mathbb{R}_{\geq 0}$ such that
\begin{align}
 \frac{\partial \ell_{\rm steep}(\widehat{y}, y)}{\partial \widehat{y}} =  h(\ell(\widehat{y}, y)) \frac{\partial \ell(\widehat{y}, y)}{\partial \widehat{y}}. \label{eq:hihi0} 
\end{align}
\end{definition}

%We can gain some intuition of Definition \ref{def:steeper} by comparing well-known loss functions.
For example, following Definition \ref{def:steeper}, we can show that the exponential loss is \emph{steeper} than the logistic loss. 
%Similarly, we can also show that the squared loss is \emph{steeper} than the absolute loss.
Intuitively, outlier-sensitive losses are \emph{steeper} than more outlier-robust losses.
Lemma \ref{lem:steeper_calibrated} shows an important property of a steeper loss in a classification scenario. 
\begin{lem} \label{lem:steeper_calibrated}
%Consider the margin loss, i.e., $\ell(\widehat{y}, y)$ is a function of $y \widehat{y}$.
Let $\ell(\widehat{y}, y)$ be a margin-based convex classification-calibrated loss. Then, its steeper loss defined in Eq.~\eqref{eq:hihi0} is also convex classification-calibrated if $h(\ell(0, y)) > 0$.
\end{lem}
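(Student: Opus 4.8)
The plan is to reduce everything to the margin representation and then verify the two defining properties of a convex classification-calibrated margin loss. Write the given loss as $\ell(\widehat{y},y) = \phi(y\widehat{y})$. First I would observe that integrating the defining relation Eq.~\eqref{eq:hihi0} shows the steeper loss is again margin-based: setting $\phi_{\rm steep}(m) \equiv \int_0^m h(\phi(t))\,\phi'(t)\,dt + c$, one checks directly that $\partial \ell_{\rm steep}/\partial\widehat{y} = y\,\phi_{\rm steep}'(y\widehat{y}) = h(\phi(y\widehat{y}))\,y\,\phi'(y\widehat{y})$ matches the prescribed derivative, so $\ell_{\rm steep}(\widehat{y},y) = \phi_{\rm steep}(y\widehat{y})$ with $\phi_{\rm steep}'(m) = h(\phi(m))\,\phi'(m)$. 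Since convexity and classification-calibration of a margin-based loss depend only on its margin function, it suffices to show that $\phi_{\rm steep}$ is convex, differentiable at $0$, and satisfies $\phi_{\rm steep}'(0)<0$; by the characterization of convex classification-calibrated margin losses \citep{bartlett2006convexity}, these three facts give the claim.

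For calibration at the origin I would simply evaluate $\phi_{\rm steep}'(0) = h(\phi(0))\,\phi'(0) = h(\ell(0,y))\,\phi'(0)$. Because $\phi$ is convex and classification-calibrated, \citet{bartlett2006convexity} guarantees it is differentiable at $0$ with $\phi'(0)<0$; combined with the hypothesis $h(\ell(0,y))>0$ this yields $\phi_{\rm steep}'(0)<0$, and differentiability of $\phi_{\rm steep}$ at $0$ is immediate from the derivative formula (which Definition~\ref{def:steeper} already presupposes).

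The substantive part is convexity, which I would establish by showing $\phi_{\rm steep}'=(h\circ\phi)\cdot\phi'$ is non-decreasing. The main obstacle is that $\phi'$ need not keep a constant sign (e.g.\ truncated-quadratic-type losses increase past the margin where the loss is minimized), so I cannot simply multiply two monotone non-negative factors. To handle this I would use convexity of $\phi$ to fix a threshold $m^{*}\in(0,\infty]$ where $\phi'$ changes sign, so that $\phi'\le 0$ on $(-\infty,m^{*}]$ and $\phi'\ge 0$ on $[m^{*},\infty)$; correspondingly $\phi$, and hence $h\circ\phi$, is non-increasing to the left of $m^{*}$ and non-decreasing to its right. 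For $m_1<m_2$, writing $h_i \equiv h(\phi(m_i))$ and $\phi_i'\equiv\phi'(m_i)$, I would split into three cases. If $m_1,m_2\le m^{*}$, then $h_1\ge h_2\ge 0$ and $\phi_1'\le\phi_2'\le 0$, and the identity $h_1\phi_1' - h_2\phi_2' = (h_1-h_2)\phi_1' + h_2(\phi_1'-\phi_2')$ exhibits both summands as $\le 0$; if $m_1,m_2\ge m^{*}$, then $0\le h_1\le h_2$ and $0\le\phi_1'\le\phi_2'$ give the product inequality $h_1\phi_1'\le h_2\phi_2'$ directly; and if $m_1\le m^{*}\le m_2$, then $\phi_{\rm steep}'(m_1)=h_1\phi_1'\le 0\le h_2\phi_2'=\phi_{\rm steep}'(m_2)$ since $h\ge 0$. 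Assembling the three cases proves $\phi_{\rm steep}'$ non-decreasing, hence $\phi_{\rm steep}$ convex, and since $\widehat{y}\mapsto\phi_{\rm steep}(y\widehat{y})$ is a composition with a linear map, $\ell_{\rm steep}$ is convex. This sign bookkeeping around $m^{*}$ is the step I expect to require the most care.
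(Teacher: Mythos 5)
Your proposal is correct and follows essentially the same route as the paper's proof: represent both losses as margin losses, show $\phi_{\rm steep}' = (h\circ\phi)\cdot\phi'$ is non-decreasing by splitting at the point where $\phi'$ changes sign, and conclude calibration from $\phi_{\rm steep}'(0)=h(\phi(0))\phi'(0)<0$ via the characterization of \citet{bartlett2006convexity}. Your explicit three-case bookkeeping (including the straddling case) and the algebraic identity in the first case are just a more detailed rendering of the paper's two-case product-monotonicity argument.
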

See Appendix \ref{app:proof_steeper_calibrated} for the proof.

Now we are ready to state our result in Theorem \ref{thm:steeper} that considers $\widehat{y} \in \mathbb{R}$. Theorem \ref{thm:steeper} holds for \emph{any} hypothesis class that is parametrized by $\theta$ and sub-differentiable w.r.t.~$\theta$, e.g., linear-in-parameter models and deep neural networks.

\begin{theorem} \label{thm:steeper}
Let $\theta^{\ast}$ be a stationary point of AERM in Eq.~\eqref{eq:uncertain_set_empirical} using $\ell(\widehat{y}, y)$.
Then, there exists a steeper loss function, $\ell_{\rm DRSL}(\widehat{y}, y)$, such that $\theta^{\ast}$ is also a stationary point of the following ERM.
\begin{align}
\min_{\theta} \frac{1}{N}\sum_{i = 1}^N \ell_{{\rm DRSL}}(g_{\theta}(x_i), y_i). \label{eq:reg_erm}
\end{align}
\end{theorem}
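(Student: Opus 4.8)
The plan is to analyze the stationarity condition of AERM via the envelope theorem and show that the adversary's optimal weight on each training point is a non-decreasing function of that point's loss; substituting this back reveals exactly the derivative structure of a steeper loss. Throughout I use that $\widehat{y}\in\mathbb{R}$, so by the chain rule $\partial_\theta \ell(g_\theta(x_i),y_i) = \partial_{\widehat{y}}\ell(g_\theta(x_i),y_i)\,\partial_\theta g_\theta(x_i)$, where $\partial_{\widehat{y}}\ell$ denotes the loss derivative in its first argument.

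First I would study the inner maximization $\sup_{\vector{r}\in\widehat{\mathcal{U}}_f}\frac{1}{N}\sum_i r_i\ell_i(\theta)$ at $\theta=\theta^\ast$. This is a concave problem (linear objective, convex feasible set because $f$ is convex), so I would form the Lagrangian with multiplier $\eta\geq 0$ for $\frac{1}{N}\sum_i f(r_i)\leq\delta$, multiplier $\nu$ for $\frac{1}{N}\sum_i r_i=1$, and multipliers for $r_i\geq 0$. Using differentiability of $f$, the KKT stationarity in $r_i$ yields the closed form $r_i^\ast=\max\{0,(f')^{-1}((\ell_i-\nu)/\eta)\}$, that is, $r_i^\ast=h(\ell_i(\theta^\ast))$ for the single univariate function $h(\cdot)\equiv\max\{0,(f')^{-1}((\cdot-\nu)/\eta)\}$. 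Since $f$ is convex, $f'$ is non-decreasing, hence $(f')^{-1}$ and therefore $h$ is non-decreasing; $h$ is non-negative by construction. This monotonicity is the crux of the argument: the adversary up-weights high-loss points, which is precisely the mechanism that makes the induced loss steeper.

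Next I would pass to the outer stationarity. By the envelope theorem (Danskin), since $\widehat{\mathcal{R}}_{\rm adv}(\theta)$ is the value of a maximization whose integrand is (sub)differentiable in $\theta$ and whose maximizer $\vector{r}^\ast$ is held fixed at the optimum, a (sub)gradient of $\widehat{\mathcal{R}}_{\rm adv}$ at $\theta^\ast$ is obtained by differentiating only through the explicit $\theta$-dependence, namely $\frac{1}{N}\sum_i r_i^\ast\,\partial_{\widehat{y}}\ell(g_{\theta^\ast}(x_i),y_i)\,\partial_\theta g_{\theta^\ast}(x_i)$. Stationarity of $\theta^\ast$ for AERM then means $0$ lies in this (sub)gradient set. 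I would then construct $\ell_{\rm DRSL}$ by integrating the defining relation of a steeper loss, $\partial_{\widehat{y}}\ell_{\rm DRSL}(\widehat{y},y)=h(\ell(\widehat{y},y))\,\partial_{\widehat{y}}\ell(\widehat{y},y)$, in $\widehat{y}$ for each fixed $y$; the right-hand side is a function of $\widehat{y}$ alone, so the integral defines a valid loss that is steeper than $\ell$ in the sense of Definition \ref{def:steeper}. Substituting $\theta^\ast$ into the ERM gradient for $\ell_{\rm DRSL}$ gives $\partial_{\widehat{y}}\ell_{\rm DRSL}(g_{\theta^\ast}(x_i),y_i)=h(\ell_i(\theta^\ast))\,\partial_{\widehat{y}}\ell_i=r_i^\ast\,\partial_{\widehat{y}}\ell_i$, so the ERM (sub)gradient $\frac{1}{N}\sum_i\partial_{\widehat{y}}\ell_{\rm DRSL}\cdot\partial_\theta g_{\theta^\ast}(x_i)$ coincides with the AERM (sub)gradient and hence contains $0$; thus $\theta^\ast$ is a stationary point of Eq.~\eqref{eq:reg_erm}.

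The main obstacle I anticipate is the careful treatment of the degenerate regimes together with the non-constancy requirement on $h$. When the divergence constraint is inactive ($\eta=0$) or all losses are equal, the dual data constrain $h$ only at the finitely many values $\{\ell_i(\theta^\ast)\}$, so I would exploit the freedom to extend $h$ off these points as a non-decreasing, non-negative, and \emph{non-constant} function that still matches $r_i^\ast$ on the data, thereby satisfying Definition \ref{def:steeper} without altering the gradient identity at $\theta^\ast$. A secondary technical point is justifying the envelope/KKT step under mere sub-differentiability of $g_\theta$, which I would handle by working with the subdifferential as the convex hull over the set of inner maximizers rather than assuming a unique gradient.
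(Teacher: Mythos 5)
Your proposal is correct and follows the same overall architecture as the paper's proof: apply Danskin's theorem to reduce the AERM stationarity condition to $\frac{1}{N}\sum_i r_i^\ast\,\partial_{\widehat{y}}\ell_i\,\nabla_\theta g_{\theta^\ast}(x_i)\ni 0$, show that the optimal adversarial weights can be written as $r_i^\ast=h(\ell_i(\theta^\ast))$ for a single non-decreasing non-negative function $h$, and then define $\ell_{\rm DRSL}$ through the derivative identity $\partial_{\widehat{y}}\ell_{\rm DRSL}=h(\ell)\,\partial_{\widehat{y}}\ell$. The one genuine difference is how the monotonicity of $r_i^\ast$ in $\ell_i^\ast$ is established. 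You derive it from the KKT conditions of the inner maximization, obtaining the closed form $r_i^\ast=\max\{0,(f')^{-1}((\ell_i-\nu)/\eta)\}$; this is clean and explicit, but it implicitly requires $(f')^{-1}$ to be well defined (i.e., $f'$ strictly increasing) and a separate treatment of the $\eta=0$ regime, which you correctly flag. The paper instead uses an elementary exchange argument: if $\ell_j^\ast<\ell_k^\ast$ but $r_j^\ast>r_k^\ast$, swapping the two weights stays feasible and strictly increases the objective, and ties in the losses are handled by averaging the corresponding weights (feasible by convexity of $f$, objective-preserving). This buys generality --- only convexity of $f$ is needed, no duality, no invertibility of $f'$ --- at the cost of not producing an explicit formula for $h$. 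On the other hand, your explicit attention to making $h$ \emph{non-constant} off the finitely many observed loss values (as Definition 1 requires) is a point the paper's proof passes over silently, so your treatment of that corner is actually the more careful of the two.
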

See Appendix \ref{app:proof_steeper} for the proof.
\begin{rem}[Conditions for convexity] \label{rem:convex}
\upshape
Let $\ell(\widehat{y}, y)$ be convex in $\widehat{y}$, $g_{\theta}(x)$ be a linear-in-parameter model. %and $\Omega(\theta)$ be strongly convex in $\theta$. 
Then, both AERM in Eq.~\eqref{eq:uncertain_set_empirical} and ERM in Eq.~\eqref{eq:reg_erm} become convex in $\theta$.
This implies that the stationary point $\theta^{\ast}$ in Theorem \ref{thm:steeper} turns out to be the \emph{global optimum} for both Eqs.~\eqref{eq:uncertain_set_empirical} and \eqref{eq:reg_erm} in this usual setting.
\end{rem}
\vspace{-0.2cm}
Note that Theorem \ref{thm:steeper} holds for general real-valued scalar prediction, i.e., $\widehat{y} \in \mathbb{R}$; thus, the result holds for ordinary regression (using the same loss for training and testing) as well as for binary classification. However, as we discuss in the following, the implication of Theorem \ref{thm:steeper} is drastically different for the two scenarios.

{\bf Implication for classification:}
Theorem \ref{thm:steeper} together with Lemma \ref{lem:steeper_calibrated} indicate that under a mild condition,\footnote{The condition that $h(\ell(0, y))>0$ in Lemma \ref{lem:steeper_calibrated}. Whether the condition holds or not generally depends on the uncertainty set, the model, the loss function, and training data. Nonetheless, the condition is mild in practice; especially, the condition always holds when the Kullback-Leibler divergence is used. See Appendix \ref{app:proof_steeper_calibrated} for detailed discussion.} AERM using a convex classification-calibrated margin-based loss reduces to Eq.~\eqref{eq:reg_erm}, which is ERM using a convex classification-calibrated loss. 
This implies that AERM, similarly to ordinary ERM using a classification-calibrated loss, will try to give a classifier optimal for the \emph{training distribution}. %This is again too pessimistic and is in contrast with the original motivation of DRSL, i.e., to make a classifier robust to \emph{change from the training distribution}.

{\bf \emph{Why does the use of the steeper surrogate loss fail to give meaningful robust classifiers?}}
This is because we are dealing with classification tasks, where we care about the performance \emph{in terms of the 0-1 loss} at test time. The use of the steeper \emph{surrogate} loss may make a classifier distributionally robust \emph{in terms of the surrogate loss},\footnote{For fixed $\delta$ (non-decaying w.r.t.~$N$), whether AERM is consistent with ARM or not is an open problem. Nevertheless, we empirically confirm in Section \ref{sec:realdata} that AERM achieves lower adversarial risk than other baselines \emph{in terms of the surrogate loss}.} but not necessarily so in terms of the 0-1 loss. 
Moreover, even if we obtain a classifier that minimizes the adversarial risk \emph{in terms of the 0-1 loss}, the obtained classifier ends up being optimal for the training distribution (see Theorem \ref{lem:monotonic}).
In any case, the use of the steeper loss does not in general give classifiers that are robust to change from a training distribution.

In summary, in the \emph{classification} scenario, the use of the steeper loss does more harm (making a classifier sensitive to outliers due to the use of the steeper surrogate loss) than good (making a classifier robust to change from a training distribution).

{\bf Implication for ordinary regression:}
For comparison, let us rethink about the classical regression scenario, in which we use \emph{the same loss}, e.g., the squared loss, during training and testing. 
In such a case, the use of the steeper loss may indeed make regressors distributionally robust \emph{in terms of the same loss}. 
Nonetheless, learning can be extremely sensitive to outliers due to the use of the \emph{steeper} loss. 
Hence, when applying DRSL with $f$-divergences to real-world regression tasks, we need to pay extra attention to ensure that there are no outliers in datasets.
%The novel DRSL we will present in Section \ref{sec:proposed} can be considered as one possible way to avoid the excessive sensitivity to outliers.

%As a final remark, we emphasize that the uncertainty set of ARM, i.e., $\mathcal{U}_f$ in Eq.~\eqref{eq:12345}, indeed contains test distributions whose Bayes optimal classifiers are different from that of the training distribution.
%Nevertheless, our theorems suggest that ARM, when applied to classification, becomes overly pessimistic and ends up giving a classifier optimal for the training distribution.

\section{DRSL with Latent Prior Probability Change} \label{sec:proposed}
In this section, motivated by our theoretical analysis in Section \ref{sec:analysis}, we propose simple yet practical DRSL that overcomes the over pessimism of ARM and AERM in the classification scenario. We then analyze its convergence property and discuss the practical use of our DRSL.

\paragraph{Theoretical motivation:}
%In Section \ref{sec:analysis}, we characterized the limitation of the DRSL in classification through Theorems \ref{lem:monotonic}--\ref{thm:steeper}.
%Now a constructive question is \emph{how can we overcome the limitation?}
What insight can we get from our theoretical analyses in Section \ref{sec:analysis}?
Our key insight from proving the theorems is that the adversary of ARM has too much (non-parametric) freedom to shift the test distribution, and as a result, the learner becomes overly pessimistic.
In fact, the proofs of all the theorems rely on the over-flexibility of the uncertainty set $\mathcal{U}_f$ in Eq.~\eqref{eq:12345}, i.e., the values of $r(\cdot, \cdot)$ are \emph{not tied together} for different $(x,y)$ within $\mathcal{U}_f$ (see Eqs.~\eqref{eq:1234} and \eqref{eq:12345}).
Consequently, the adversary of ARM simply assigns larger weight $r(x, y)$ to data $(x, y)$ with a larger loss.
This fact, combined with the fact that we use the different losses during training and testing in classification (see discussion at the end of Section \ref{sec:analysis}), led to the pessimistic results of Theorems \ref{lem:monotonic}--\ref{thm:steeper}.

Our theoretical insight suggests that in order to overcome the pessimism of ARM applied to classification, it is crucial to \emph{structurally constrain} $r(\cdot, \cdot)$ in $\mathcal{U}_f$, or equivalently, to impose \emph{structural assumptions} on the distribution shift.
To this end, in this section, we propose DRSL that overcomes the limitation of the DRSL by incorporating structural assumptions on distribution shift.
 
\paragraph{Practical structural assumptions:}
In practice, there can be a variety of ways to impose structural assumptions on distribution shift. %As long as the distribution shift is \emph{structurally} constrained, we can prevent the DRSL from learning trivial classifiers. 
Here, as one possible way, we adopt the \emph{latent prior probability change assumption} \citep{storkey2007mixture} %We do not claim exclusive superiority of this class over other alternatives, but 
because this particular class of assumptions enjoys the following two practical advantages. 
%is distinguished by having the following two practical advantages. 
\begin{enumerate}
\setlength{\parskip}{0cm}
  \setlength{\itemsep}{0cm}
\item Within the class, users of our DRSL can easily and intuitively model their assumptions on distribution shift (see the discussion at the end of this section).
\item Efficient learning algorithms can be derived (see Section \ref{sec:opt}).
\end{enumerate}
Let us introduce a latent variable $z \in \mathcal{Z} \equiv \{ 1, \ldots, S \}$, which we call a \emph{latent category}, where $S$ is a constant. The latent prior probability change assumes
\begin{align}
p(x, y | z) = q(x, y| z),~~~ q(z) \neq p(z), \label{eq:assumption}
\end{align}
where $p$ and $q$ are the training and test densities, respectively.
The intuition is that
we assume a two-level \emph{hierarchical} data-generation process: we first sample latent category $z$ from the prior and then sample actual data $(x,y)$ conditioned on $z$.
We then assume that only the prior distribution over the latent categories changes, leaving the conditional distribution intact.

We assume the structural assumption in Eq.~\eqref{eq:assumption} to be provided externally by users of our DRSL based on their knowledge of potential distribution shift, rather than something to be inferred from data. As we will see at the end of this section, specifying Eq.~\eqref{eq:assumption} amounts to grouping training data points according to their latent categories, which is quite intuitive to do in practice.

\paragraph{Objective function of our DRSL:}
With the latent prior probability change of Eq.~\eqref{eq:assumption}, the uncertainty set for test distributions in our DRSL becomes
\begin{align}
\mathcal{Q}_p = \{q \ll p \  |\  &{\rm D}_f [q(x, y, z) || p(x, y, z)] \leq \delta,\ \nonumber \\
& \  q(x, y | z) = p(x, y| z) \}. \label{eq:qp_for_grouping}
\end{align}

Then, corresponding to Eq.~\eqref{eq:robust_formulation}, the objective of our DRSL can be written as
{\small
\begin{align}
&\min_{\theta} \underbrace{\sup_{w \in \mathcal{W}_f}   \mathbb{E}_{p(x, y, z)} \left[ w(z) \ell(g_{\theta}(x), y) \right]}_{\qquad \qquad \mathlarger{\equiv \mathcal{R}_{\rm s\mathchar`-adv}(\theta)}},  \label{eq:re-setting2} \\
&\mathcal{W}_f \equiv \Bigg\{ w(z)\  \Bigg|\  \sum_{z \in \mathcal{Z}} p(z) f \left( w(z) \right) \leq \delta, \nonumber \\ 
& \qquad \qquad \sum_{z \in \mathcal{Z}} p(z)w(z) = 1, \ w(z) \geq 0,\ \forall z \in \mathcal{Z} \Bigg\}, \label{eq:uncertain_set2}
\end{align}}where $w(z) \equiv q(z)/p(z) = q(x,y,z)/p(x,y,z)$ because of $q(x, y | z) = p(x, y| z)$. 
We call $\mathcal{R}_{\rm s\mathchar`-adv}(\theta)$ the \emph{structural adversarial risk} and
call the minimization problem of Eq.~\eqref{eq:re-setting2} the \emph{structural adversarial risk minimization} (structural ARM).
Similarly to ARM, structural ARM is a minimax game between the learner and the adversary. Differently from ARM, the adversary of structural ARM (corresponding to $\sup_{w \in \mathcal{W}_f}$) uses $w(\cdot)$ to reweight data; hence, it has much less (only parametric) freedom to shift the test distribution compared to the adversary of ARM that uses non-parametric weight $r(\cdot, \cdot)$ (see Eq.~\eqref{eq:1234}).
Because of this limited freedom for the adversary, we can show that Theorems \ref{lem:monotonic}--\ref{thm:steeper} do \emph{not} hold for structural ARM, and we can expect to learn meaningful classifiers that are robust to \emph{structurally constrained} distribution shift.
\vspace{-0.2cm}
\paragraph{Discussion and proposal of evaluation metric for distributional robustness:}
Recall from Theorem \ref{lem:monotonic} that when the 0-1 loss is used, the adversarial risk ends up being equivalent to the ordinary risk as an evaluation metric, which is too pessimistic as a metric for the distributional robustness of a classifier.
In contrast, we can easily verify that our structural adversarial risk using the 0-1 loss does not suffer from the pessimism. 
We argue that our structural adversarial risk can be an alternative metric in distributionally robust classification.
To better understand its property, inspired by \citet{namkoong2017variance}, we decompose it as\footnote{This particular decomposition holds when the Pearson (PE) divergence is used and $\delta$ is not so large. Refer to Appendix \ref{sec:decomposition} for the derivation. Analogous decomposition can be also derived when other $f$-divergences are used.}
\vspace{-0.4cm}

{\scriptsize \begin{align}
\mathcal{R}_{\rm s\mathchar`-adv}(\theta) = \underbrace{\mathcal{R}(\theta)}_{\mathlarger{\text{(a) ordinary risk}}} +  \sqrt{\delta} \cdot \underbrace{\sqrt{\sum_{z \in \mathcal{Z}} p(z) (\mathcal{R}_z(\theta) - \mathcal{R}(\theta))^2}}_{\mathlarger{\text{(b) sensitivity}}}, \label{eq:decompose}
\end{align}}where $\mathcal{R}_z(\theta) (\equiv \mathbb{E}_{p(x, y|z)}[ \ell(g_{\theta}(x), y)]$) is the risk of the classifier on latent category $z \in \mathcal{Z}$.
We see that $\mathcal{R}_{\rm s\mathchar`-adv}(\theta)$ in Eq.~\eqref{eq:decompose} contains the \emph{risk variance} term (b). 
This variance term (b) can be large when the obtained classifier performs extremely poorly on a small number of latent categories.
Once a test density concentrates on those poorly-performed latent categories, the test accuracy of the classifier can extremely deteriorate.
In this sense, the classifier with large (b) is sensitive to distribution shift.
In contrast, the small risk variance (b) indicates that the obtained classifier attains similar accuracy on all the latent categories. In such a case, the test accuracy of the classifier is insensitive to latent category prior change. In this sense, the classifier with small (b) is robust to distribution shift.
To sum up, the additional term (b) measures the sensitivity of the classifier to the specified structural distribution shift.

On the basis of the above discussion, we see that $\mathcal{R}_{\rm s\mathchar`-adv}(\theta)$ in Eq.~\eqref{eq:decompose} simultaneously captures (a) the ordinary risk, i.e., the mis-classification rate \emph{when no distribution shift occurs}, and (b) the \emph{sensitivity} to distribution shift. 
In this sense, our structural adversarial risk is an intuitive and reasonable metric for distributional robustness of a classifier, and we will employ this metric in our experiments in Section \ref{sec:realdata}. 
\vspace{-0.2cm}
\paragraph{Empirical approximation:}
We explain how to empirically approximate the objective functions in Eqs.~\eqref{eq:re-setting2} and \eqref{eq:uncertain_set2} using training data $\mathcal{D}^{\prime} \equiv \{(x_1, y_1,z_1), \ldots, (x_N, y_N, z_N)\}$ drawn independently from $p(x, y, z)$. 

Define $\mathcal{G}_s \equiv \{i \ |\ z_i = s, 1 \leq i \leq N \}$ for $1\leq s \leq S$, which is a set of data indices belonging to latent category $s$. 
In our DRSL, users are responsible for specifying the groupings of training data points, i.e., $\{ \mathcal{G}_s \}_{s = 1}^{S}$.
By specifying these groupings, the users incorporate their structural assumptions on distribution shift into our DRSL.
We will discuss how this can be done in practice at the end of this section.

For notational convenience, let $w_s \equiv w(s),\ 1 \leq s \leq S$, and define $\vector{w} \equiv (w_1, \ldots, w_S)$. 
Equations~\eqref{eq:re-setting2} and \eqref{eq:uncertain_set2} can be empirically approximated as follows using $\mathcal{D}^{\prime}$:
\vspace{-0.2cm}
{\small \begin{align}
&\min_{\theta} \underbrace{\sup_{\vector{w} \in \widehat{\mathcal{W}}_{f}} \frac{1}{N}\sum_{s = 1}^{S} n_s w_s \widehat{\mathcal{R}}_s(\theta)}_{\qquad \qquad \mathlarger{\equiv  \widehat{\mathcal{R}}_{\rm s\mathchar`-adv}(\theta)}} \label{eq:general_opt_cluster}  \\ %+ \lambda \Omega(\theta),
&\widehat{\mathcal{W}}_{f}= \Biggl\{ \vector{w}\in \mathbb{R}^{S} \Bigg| \frac{1}{N}\sum_{s=1}^S n_s f \left( w_s \right) \leq \delta, \nonumber \\
& \qquad \qquad \qquad \qquad \frac{1}{N}\sum_{s=1}^S n_s w_s = 1,\ \vector{w} \geq 0 \Biggr\}, \label{eq:uncertain_set_empirical_cluster}
\end{align}}where $n_s$ is the cardinality of $\mathcal{G}_s$ and $\widehat{\mathcal{R}}_s(\theta) (\equiv \frac{1}{n_s} \sum_{i \in \mathcal{G}_s} \ell_i(\theta))$ is the average loss of all data points in $\mathcal{G}_s$. 
We call $\widehat{\mathcal{R}}_{\rm s\mathchar`-adv}(\theta)$ the \emph{structural adversarial empirical risk}
and call the minimization problem of Eq.~\eqref{eq:general_opt_cluster} the \emph{structural adversarial empirical risk minimization} (structural AERM).
We can add regularization term $\Omega(\theta)$ to Eq.~\eqref{eq:general_opt_cluster} to prevent overfitting.
\vspace{-0.3cm}
\paragraph{Convergence rate and estimation error:}
We establish the convergence rate of the model parameter and the order of the estimation error for structural AERM in terms of the number of training data points $N$. 
Due to the limited space, we only present an informal statement here. The formal statement can be found in Appendix~\ref{sec:stat-conv-rate} and its proof can be found in Appendix~\ref{sec:proof-conv-rate}.

\begin{theorem}[Convergence rate and estimation error, informal statement] 
  \label{thm:conv-rate-est-err-informal}
Let $\theta^*$ be the solution of structural ARM, and $\widehat{\theta}_N$ be the solution of regularized structural AERM given training data of size $N$. Assume $g_{\theta}(x)$ is linear in $\theta$, and regularization hyper-parameter $\lambda$ decreases at a rate of $\mathcal{O}(N^{-1/2})$. Under mild conditions, as $N\to\infty$, we have $\|\widehat{\theta}_N-\theta^*\|_2=\mathcal{O}(N^{-1/4})$ and consequently, $| \mathcal{R}_{\rm s\mathchar`-adv}(\widehat{\theta}_N) - \mathcal{R}_{\rm s\mathchar`-adv}(\theta^{\ast}) | = \mathcal{O}(N^{-1/4})$.
\end{theorem}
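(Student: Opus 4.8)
The plan is to treat structural AERM as a standard M-estimation problem and to argue in two stages: first establish a uniform $\mathcal{O}_p(N^{-1/2})$ deviation between the empirical objective $\widehat{\mathcal{R}}_{\rm s\mathchar`-adv}$ in Eq.~\eqref{eq:general_opt_cluster} and the population objective $\mathcal{R}_{\rm s\mathchar`-adv}$ in Eq.~\eqref{eq:re-setting2}, and then convert this objective-value deviation into a parameter deviation using the curvature of $\mathcal{R}_{\rm s\mathchar`-adv}$ at $\theta^{\ast}$. The crucial structural simplification, absent in ordinary ARM, is that both inner maximizations run over a \emph{finite-dimensional}, compact, convex set ($\vector{w}\in\mathbb{R}^{S}$ with $S$ a fixed constant), so the adversary contributes no growing complexity. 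Throughout I would invoke the ``mild conditions'': a bounded parameter set in which both $\widehat{\theta}_N$ and $\theta^{\ast}$ lie, a loss $\ell$ that is bounded and Lipschitz there, a Slater point for $\mathcal{W}_f$ (strict feasibility of the inner problem), identifiability of $\theta^{\ast}$, and local quadratic growth of $\mathcal{R}_{\rm s\mathchar`-adv}$ at its minimizer.

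For the uniform-convergence stage, I would write both risks through a single parametric value function
\begin{align}
\Phi(\vector{\rho}, \vector{v}) \equiv \sup\Bigl\{ \textstyle\sum_{s} \rho_s w_s v_s \ \Big|\ \vector{w}\geq 0,\ \sum_s \rho_s w_s = 1,\ \sum_s \rho_s f(w_s)\leq \delta \Bigr\}, \nonumber
\end{align}
so that $\mathcal{R}_{\rm s\mathchar`-adv}(\theta)=\Phi(\vector{p},\vector{a}(\theta))$ and $\widehat{\mathcal{R}}_{\rm s\mathchar`-adv}(\theta)=\Phi(\widehat{\vector{p}},\widehat{\vector{a}}(\theta))$, where $\vector{p}=(p(z))_z$, $\widehat{\vector{p}}=(n_s/N)_s$, $\vector{a}(\theta)=(\mathcal{R}_z(\theta))_z$, and $\widehat{\vector{a}}(\theta)=(\widehat{\mathcal{R}}_s(\theta))_s$. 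Since the maximizing $\vector{\rho}\odot\vector{w}$ carries unit $\ell_1$-mass, $\Phi$ is $1$-Lipschitz in $\vector{v}$ (w.r.t.\ $\ell_\infty$); differentiability of $f$ together with the Slater point makes $\Phi$ locally Lipschitz in $\vector{\rho}$ by standard parametric-optimization stability. It then suffices to control two perturbations: $\|\widehat{\vector{p}}-\vector{p}\|=\mathcal{O}_p(N^{-1/2})$ by the multinomial central limit theorem, and $\sup_{\theta}\|\widehat{\vector{a}}(\theta)-\vector{a}(\theta)\|_\infty=\mathcal{O}_p(N^{-1/2})$ by a Rademacher-complexity / empirical-process bound for the linearly parametrized class $\{(x,y)\mapsto \ell(g_{\theta}(x),y)\}$. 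Combining gives $\sup_{\theta}|\widehat{\mathcal{R}}_{\rm s\mathchar`-adv}(\theta)-\mathcal{R}_{\rm s\mathchar`-adv}(\theta)|=\mathcal{O}_p(N^{-1/2})$.

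For the second stage I would use the optimality of $\widehat{\theta}_N$ for the regularized empirical objective and of $\theta^{\ast}$ for the population objective. A standard three-term split of $\mathcal{R}_{\rm s\mathchar`-adv}(\widehat{\theta}_N)-\mathcal{R}_{\rm s\mathchar`-adv}(\theta^{\ast})$ (empirical-to-population gaps at the two points, plus the empirical comparison), together with the uniform bound above and $\lambda_N\Omega(\theta^{\ast})=\mathcal{O}(N^{-1/2})$, yields $0\leq \mathcal{R}_{\rm s\mathchar`-adv}(\widehat{\theta}_N)-\mathcal{R}_{\rm s\mathchar`-adv}(\theta^{\ast})=\mathcal{O}_p(N^{-1/2})$. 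A preliminary consistency argument (uniform convergence plus identifiability) places $\widehat{\theta}_N$ in the neighborhood where $\mathcal{R}_{\rm s\mathchar`-adv}(\theta)-\mathcal{R}_{\rm s\mathchar`-adv}(\theta^{\ast})\geq \mu\|\theta-\theta^{\ast}\|_2^2$ holds; substituting the $\mathcal{O}_p(N^{-1/2})$ objective gap then gives $\|\widehat{\theta}_N-\theta^{\ast}\|_2^2=\mathcal{O}_p(N^{-1/2})$, i.e.\ the claimed $\mathcal{O}(N^{-1/4})$ rate. The stated risk bound follows by Lipschitz continuity of $\mathcal{R}_{\rm s\mathchar`-adv}$ in $\theta$, namely $|\mathcal{R}_{\rm s\mathchar`-adv}(\widehat{\theta}_N)-\mathcal{R}_{\rm s\mathchar`-adv}(\theta^{\ast})|\leq L\|\widehat{\theta}_N-\theta^{\ast}\|_2=\mathcal{O}(N^{-1/4})$. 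The convexity noted in Remark \ref{rem:convex} makes both optimization problems well posed (minimizers attained) and legitimizes the localization to the quadratic-growth neighborhood.

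The hard part will be everything forced by the inner $\sup_{\vector{w}}$. First, establishing that $\Phi$ is Lipschitz \emph{in the constraint-defining proportions} $\vector{\rho}$, not merely in the coefficients $\vector{v}$, genuinely requires the Slater and differentiability hypotheses, since the feasible set itself moves with $\vector{\rho}$. Second, and more delicate, is verifying the quadratic-growth condition for $\mathcal{R}_{\rm s\mathchar`-adv}$: by Danskin's theorem its gradient is the reweighted gradient $\sum_s p(z\!=\!s)\, w_s^{\ast}(\theta)\,\nabla_{\theta}\mathcal{R}_z(\theta)$, but its curvature is \emph{not} that of any fixed reweighted risk, because the maximizer $\vector{w}^{\ast}(\theta)$ varies with $\theta$; one must show the induced quadratic lower bound stays strictly positive at $\theta^{\ast}$. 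This is precisely where the finiteness of $S$ and the linearity/convexity assumptions do the heavy lifting, and it is the step I expect to demand the most care.
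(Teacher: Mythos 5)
Your proposal is correct in outline but follows a genuinely different route from the paper. The paper casts the problem as a Bonnans--Shapiro perturbation analysis: it establishes (i) a second-order growth condition of the population objective $J(\theta,\vector{0},\vector{0})$ at $\theta^*$ and (ii) Lipschitz continuity \emph{in $\theta$} of the difference function $J(\theta,\vector{u},\vector{u}')-J(\theta,\vector{0},\vector{0})$ with a Lipschitz constant of order $\mathcal{O}(\|\vector{u}\|_2^{1/2}+\|\vector{u}'\|_2)$, and then invokes Proposition 6.1 of Bonnans and Shapiro to conclude $\|\theta_{\vector{u},\vector{u}'}-\theta^*\|_2=\mathcal{O}(\|\vector{u}\|_2^{1/2}+\|\vector{u}'\|_2)$; there the exponent $1/2$, and hence the $N^{-1/4}$ rate, originates from the H\"older-$1/2$ stability of the inner \emph{maximizer} $\vector{w}^*$ under perturbation of the feasible set (since by Danskin's theorem the gradient of the difference function is controlled by $\vector{v}^*-\vector{w}^*$). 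You instead run the classical M-estimation template: uniform $\mathcal{O}_p(N^{-1/2})$ convergence of the \emph{value} of the inner sup, the basic inequality, and quadratic growth, so your $N^{-1/4}$ arises from taking the square root at the growth step rather than from maximizer instability. Both routes hinge on the same hard technical fact --- that the feasible set $\Phi(\vector{u})$ moves by $\mathcal{O}(\|\vector{u}\|_2)$ in Hausdorff distance, which the paper proves from scratch via its cosine and Hausdorff lemmas using the strict convexity of $f$ and the quantitative Slater margin $\epsilon\le\delta/(5\sqrt{S}|f'(1)|)$, and which you defer to ``standard parametric-optimization stability''; you correctly flag this, and also the quadratic-growth verification, which the paper resolves by an envelope argument (the sup dominates the fixed-$\vector{w}^*$ reweighted risk, whose Hessian at $\theta^*$ is bounded below under its assumption (e2) or via a strongly convex regularizer under (e1)). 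What each approach buys: the paper's analysis localizes the source of the suboptimal rate to the maximizer's sensitivity to the constraint perturbation $\vector{u}$ alone (the loss perturbation $\vector{u}'$ enters at the full $N^{-1/2}$ rate), which is consistent with its remark that exploiting a specific $f$-divergence could restore the parametric rate; your argument is simpler and needs only value-function stability, but is intrinsically capped at $N^{-1/4}$ by the square root in the growth step unless supplemented by a localization/peeling argument --- though, as a byproduct, it actually delivers the risk-gap bound $|\mathcal{R}_{\rm s\mathchar`-adv}(\widehat{\theta}_N)-\mathcal{R}_{\rm s\mathchar`-adv}(\theta^*)|=\mathcal{O}_p(N^{-1/2})$, which is stronger than the $\mathcal{O}(N^{-1/4})$ the theorem claims.
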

%\left|\left| \sup_{w\in\mathcal{W}_f}\mathbb{E}_{p(x,y,z)}[w(z)\ell(g_{\widehat{\theta}_N}(x), y)] -\sup_{w\in\mathcal{W}_f}\mathbb{E}_{p(x,y,z)}[w(z)\ell(g_{\theta^*}(x), y)]\right|\right|_2 = \mathcal{O}(N^{-1/4}).

Notice that the convergence rate of $\widehat{\theta}_N$ to $\theta^*$ is not the optimal parametric rate $\mathcal{O}(N^{-1/2})$. This is because the inner maximization of Eq.~\eqref{eq:general_opt_cluster} converges in $\mathcal{O}(N^{-1/4})$ that slows down the entire convergence rate. 
%The technical difficulty is that $\mathcal{W}_f$ and $\widehat{\mathcal{W}}_{f}$ have no interior and all their members are on the boundaries. Also, the rate of the inner maximization is not same as the rate of $\widehat{\mathcal{W}}_{f}$ to $\mathcal{W}_f$. 
Theorem~\ref{thm:conv-rate-est-err-informal} applies to any $f$-divergence where $f(t)$ is nonlinear in $t$, while knowing which $f$-divergence is used may improve the result to the optimal parametric rate.
\vspace{-0.2cm}
\paragraph{Discussion on groupings: }
In our structural ARM and AERM, users need to incorporate their structural assumptions by grouping training data points. Here, we discuss how this can be done in practice.

%In this sense, data points in the same group are assumed to ``share the same destiny'' in the potential distribution shift.

Most straightforwardly, a user of our DRSL may assume class prior change \citep{saerens2002adjusting} or sub-category\footnote{A sub-category \citep{ristin2015categories} is a \emph{refined} category of a class label, e.g., a ``flu'' label contains three \emph{sub-categories}: types A, B, and C flu.} prior change.
To incorporate such assumptions into our DRSL, the user can simply group training data by class labels or a sub-categories, respectively. %, which is a refined category of a class label. For example, a `flu' 

Alternatively, a user of our DRSL can group data by available meta-information of data such as time and places in which data are collected. %and identity of agents that collected data (e.g., indices of robots).
The intuition is that data collected in the same situations (e.g., time and places) are likely to \emph{``share the same destiny''} in the future distribution shift; hence, it is natural to assume that only the \emph{prior} over the situations changes at test time while the conditionals remain the same. 

In any case, it is crucial that the users provide structural assumptions on distribution shift so that we can overcome the pessimism of ARM and AERM for classification raised in Section \ref{sec:analysis}.

\vspace{-0.3cm}
\section{Efficient Learning Algorithms} \label{sec:opt}
In this section, we derive efficient gradient-based learning algorithms for our structural AERM in Eq.~\eqref{eq:general_opt_cluster}.
Thanks to Danskin's theorem \citep{danskin67}, we can obtain the gradient $\nabla_{\theta} \widehat{\mathcal{R}}_{\rm s\mathchar`-adv}(\theta)$ as 
\begin{align}
\nabla_{\theta} \widehat{\mathcal{R}}_{\rm s\mathchar`-adv}(\theta) =\frac{1}{N} \sum_{s = 1}^S n_s w^{\ast}_s \nabla_{\theta}\widehat{\mathcal{R}}_s(\theta), %+ \lambda \nabla_{\theta} \Omega(\theta), 
\label{eq:gradient}
\end{align}
where $\vector{w}^{\ast} = (w^{\ast}_1, \ldots, w^{\ast}_S)$ is the solution of inner maximization of AERM in Eq.~\eqref{eq:general_opt_cluster}. %, which we 

In the following, we show that $\vector{w}^{\ast}$ can be obtained very efficiently for two well-known instances of $f$-divergences. %, adding negligible computational overheads to the original gradient computation of $\nabla \ell_i(\theta) \ (1 \leq i \leq N)$.
\vspace{-0.5cm}
\paragraph{Kullback-Leibler (KL) divergence: }
For the KL divergence, $f(x) = x \log x$, %and
we have
%By solving the Karush-Kuhn-Tucker (KKT) conditions \citep{boyd2004convex}
%the solution of Eq.\,\eqref{eq:r_opt} is obtained as 
\begin{align}
w^{\ast}_s = \frac{N}{Z(\gamma)} \cdot {\rm exp} \left( \frac{\widehat{\mathcal{R}}_s(\theta)}{\gamma} \right),\ 1 \leq s \leq S \label{eq:kl1_opt},
\end{align}
where $\gamma$ is a scalar such that the first constraint of $\widehat{\mathcal{W}}_{f}$ in Eq.~\eqref{eq:uncertain_set_empirical_cluster} holds with equality, and $Z(\gamma) \equiv  \sum_{s = 1}^S n_s  {\rm exp}  \left( \widehat{\mathcal{R}}_s(\theta)/ \gamma \right)$ is a normalizing constant in order to satisfy the second constraint of $\widehat{\mathcal{W}}_{f}$. 
To compute $\gamma$, we can simply perform a binary search. %a binary search can be performed to satisfy the first constraint of $\widehat{\mathcal{W}}_{f}$.
%The computation time is $\mathcal{O}(mS)$, where $m$ is the number of iterations in the binary search.
\vspace{-0.2cm}
\paragraph{Pearson (PE) divergence: }
For the PE divergence, $f(x) = (x-1)^2$.
For small $\delta$, $\vector{w} \geq 0$ of $\widehat{\mathcal{W}}_{f}$ is often satisfied in practice.
%In the preliminary experiments, we observed that for small $\delta$, the solution of Eq.~\eqref{eq:r_opt} rarely takes a negative value. Therefore, we drop the inequality constraint $\vector{w} \geq 0$ of $\widehat{\mathcal{W}}_{f}$. By solving the KKT conditions, 
We drop the inequality for simplicity; then, the solution of the inner maximization of Eq.~\eqref{eq:general_opt_cluster} becomes analytic and efficient to obtain:
\begin{align}
\vector{w}^{\ast} = \sqrt{\frac{N \delta}{\sum_{s = 1}^S n_s v_s^2}} \vector{v} + \vector{1}_S, \label{eq:r_analytical}
\end{align}
where ${\vector 1}_S$ is the $S$-dimensional vector with all the elements equal to 1. $\vector{v}$ is the $S$-dimensional vector such that $v_s = \widehat{\mathcal{R}}_s(\theta) - \widehat{\mathcal{R}}(\theta), \ 1 \leq s \leq S.$
%\begin{align}
%v_s = \overline{\ell_s}(\theta) - \frac{1}{N}\sum_{s = 1}^S n_s \overline{\ell_s}(\theta), \ 1 \leq s \leq S.
%\end{align}
%Obtaining Eq.\,\eqref{eq:r_analytical} is efficient and only costs $\mathcal{O}(S)$-time.
\vspace{-0.2cm}
\paragraph{Computational complexity: } 
The time complexity for obtaining $\vector{w}^{\ast}$ is: $\mathcal{O}(mS)$ for the KL divergence and $\mathcal{O}(S)$ for the PE divergence, where $m$ is the number of the binary search iterations to compute $\gamma$ in Eq.~\eqref{eq:kl1_opt}. 
Calculating the adversarial weights therefore adds negligible computational overheads to computing $\nabla \ell_i(\theta)$ and $\ell_i(\theta)$ for $1 \leq i \leq N$, which for example requires $\mathcal{O}(Nb)$-time for a $b$-dimensional linear-in-parameter model.

\vspace{-0.2cm}
\section{Experiments} \label{sec:realdata}
\vspace{-0.2cm}

\setlength\intextsep{0pt}
\setlength\textfloatsep{0pt}
\begin{table*}[t]
\footnotesize
\begin{center} 
\caption{Experimental comparisons of the three methods w.r.t.~the estimated ordinary risk and the estimated structural adversarial risk \emph{using the 0-1 loss (\%)}. The lower these values are, the better the performance of the method is. The KL divergence is used and distribution shift is assumed to be (a) class prior change and (b) sub-category prior change. Mean and standard deviation over 50 random train-test splits were reported. The best method and comparable ones based on the t-test at the significance level 1\% are highlighted in boldface.} 
\label{table:kl}
\vspace{-0.1cm}
\subfloat[Class prior change.]{
\begin{tabular}{|l||c|c|c|c|c|c|c|}\hline
\raisebox{-1.3ex}{Dataset} & \multicolumn{3}{|c|}{\shortstack[c]{Estimated ordinary risk}} &\multicolumn{3}{|c|}{\shortstack[c]{Estimated structural adversarial risk}}  \\ \cline{2-7}
             & \raisebox{0ex}{ERM} & \raisebox{0ex}{AERM} &  \shortstack[c]{Structural AERM}
             & \raisebox{0ex}{ERM} & \raisebox{0ex}{AERM} &  \shortstack[c]{Structural AERM} \\ \hline
blood & {\bf 22.4 (0.7)} & 26.7 (5.0) & 33.4 (2.0) & 62.3 (2.4) & 53.5 (10.6) & {\bf 36.7 (1.9)}\\ \hline
adult & {\bf 15.3 (0.2)} & 15.4 (0.2) & 18.7 (0.2) & 30.4 (0.4) & 30.4 (0.5) & {\bf 19.1 (0.3)}\\ \hline
fourclass & {\bf 24.0 (1.4)} & 25.7 (2.6) & 27.2 (1.4) & 36.9 (2.2) & 38.0 (6.2) & {\bf 29.5 (2.0)}\\ \hline
phishing & {\bf 6.1 (0.2)} & 6.4 (0.2) & {\bf 6.0 (0.2)} & 7.6 (0.4) & 8.2 (0.4) & {\bf 6.5 (0.3)}\\ \hline
20news & {\bf 28.9 (0.4)} & 30.6 (0.4) & 34.7 (0.4) & 44.1 (0.5) & 45.0 (0.5) & {\bf 40.0 (0.6)}\\ \hline
satimage & {\bf 25.2 (0.3)} & 30.8 (0.3) & 32.2 (0.4) & {\bf 39.5 (0.6)} & 47.9 (0.5) & {\bf 39.6 (0.6)}\\ \hline
letter & {\bf 14.3 (0.5)} & 15.2 (0.6) & 19.3 (0.8) & 36.6 (1.5) & 34.7 (1.7) & {\bf 22.5 (1.0)}\\ \hline
mnist & {\bf 10.0 (0.1)} & 11.5 (0.1) & 12.7 (0.1) & 14.4 (0.2) & 15.9 (0.2) & {\bf 13.8 (0.2)}\\ \hline
\end{tabular}}
\\
\vspace{-0.2cm}
%\label{table:comparison_class_prior_kl}
%\end{center}
%\begin{center}
\subfloat[Sub-category prior change.]{
\begin{tabular}{|l||c|c|c|c|c|c|c|}\hline
\raisebox{-1.3ex}{Dataset} & \multicolumn{3}{|c|}{\shortstack[c]{Estimated ordinary risk}} &\multicolumn{3}{|c|}{\shortstack[c]{Estimated structural adversarial risk}}  \\ \cline{2-7}
             & \raisebox{0ex}{ERM} & \raisebox{0ex}{AERM} &  \shortstack[c]{Structural AERM}
             & \raisebox{0ex}{ERM} & \raisebox{0ex}{AERM} &  \shortstack[c]{Structural AERM} \\ \hline
20news & {\bf 19.0 (0.3)} & 20.6 (0.4) & 23.3 (0.5) & 35.6 (0.6) & 37.8 (0.9) & {\bf 31.1 (0.5)}\\ \hline
satimage & {\bf 36.4 (0.3)} & 44.2 (2.5) & 40.7 (0.9) & 62.1 (0.6) & 66.2 (4.2) & {\bf 50.5 (0.6)}\\ \hline
letter & {\bf 17.5 (0.4)} & 28.0 (5.4) & 34.2 (2.0) & 52.1 (0.6) & 60.1 (5.8) & {\bf 43.0 (1.7)}\\ \hline
mnist & {\bf 13.3 (0.1)} & 13.7 (0.1) & 17.3 (0.2) & 22.6 (0.2) & 23.2 (0.2) & {\bf 19.9 (0.2)}\\ \hline
\end{tabular}}
%\label{table:comparison_sub_prior_kl}
\end{center}
\vspace{-0.4cm}
\end{table*}

In this section, we experimentally analyze our DRSL (structural AERM) in classification by comparing it with ordinary ERM and DRSL with $f$-divergences (AERM).
We empirically demonstrate (i) the undesirability of AERM in classification and (ii) the robustness of structural AERM against specified distribution shift.
\vspace{-0.3cm}
\paragraph{Datasets:}
We obtained six classification datasets from the UCI repository \citep{uci}, two of which are for multi-class classification.
%We obtained three multi-class classification datasets from the UCI repository \citep{uci}. 
We also obtained MNIST \citep{lecun1998gradient} and 20newsgroups \citep{Lang95}. 
Refer to Appendix \ref{app:dataset} for the details of the datasets. 

\vspace{-0.3cm}
\paragraph{Evaluation metrics:}
We evaluated the three methods (ordinary ERM, AERM and structural AERM) with three kinds of metrics: the ordinary risk, the adversarial risk, and the structural adversarial risk, where the \emph{0-1 loss is used} for all the metrics.\footnote{To gain more insight on the methods, we also reported all the metrics in terms of the surrogate loss in Appendix \ref{app:exp_surrogate}.}
We did not explicitly report the adversarial risk in our experiments because of Theorem \ref{lem:monotonic}. 

Both the risk and structural adversarial risk are estimated using held-out test data. In particular, the structural adversarial risk can be estimated similarly to Eqs.~\eqref{eq:general_opt_cluster} and \eqref{eq:uncertain_set_empirical_cluster}, i.e., calculating the mis-classification rate on the held-out test data and \emph{structurally and adversarially reweight} them. 
See discussion of Eq.~\eqref{eq:decompose} for why the structural adversarial risk is a meaningful evaluation metric to measure distributional robustness of classifiers.

\vspace{-0.3cm}
\paragraph{Experimental protocols:}
For our DRSL, we consider learning classifiers robust against (a) the class prior change and (b) the sub-category prior change. This corresponds to grouping training data by (a) class labels and (b) sub-category labels, respectively. 
In the benchmark datasets, the sub-category labels are not available. 
Hence, we manually created such labels as follows.
First, we converted the original multi-class classification problems into classification problems with fewer classes by integrating some classes together.
Then, the original class labels are regarded as the subcategories.
In this way, we converted the satimage, letter and MNIST datasets into binary classification problems, and 20newsgroups into a 7-class classification. %\footnote{Each new class corresponds to one of high-level topics of newsgroups: comp, rec, sci, misc, alt, soc, and talk.} problem. 
Appendix \ref{app:subcategory} details how we grouped the class labels.

For all the methods, we used linear models with softmax output for the prediction function $g_{\theta}(x)$.  The cross-entropy loss with $\ell_2$ regularization was adopted. The regularization hyper-parameter $\lambda$ was selected from $\{1.0, 0.1, 0.01, 0.001, 0.0001\}$ via 5-fold cross validation. %For ordinary ERM, the cross validation is straightforward. In Appendix \ref{sec:cross}, we show how to perform the cross validation for DRSL to estimate the structural adversarial risk. 

%The choice of $\delta$ and $f$-divergences in DRSL should be based on users' prior belief on potential distribution shift. 
%The uncertainty set used in DRSL should be chosen based on users' prior belief on potential distribution shift. 
We used the two $f$-divergences (the KL and PE divergences) and set $\delta = 0.5$ for AERM and structural AERM. The same $\delta$ and $f$-divergence were used for estimating the structural adversarial risk.
At the end of this section, we discuss how we can choose $\delta$ in practice.
%Section \ref{sec:conclusion}, 
\vspace{-0.4cm}
\paragraph{Results:}
In Table \ref{table:kl}, we report experimental results on the classification tasks when the KL divergence is used. %\footnote{We set the number of the binary iterations for $\gamma$ to 20, which we found to be sufficient.}. 
Refer to Appendix \ref{app:exp_pe} for the results when the PE divergence is used, which showed similar tendency.

We see from the left half of Table \ref{table:kl} that ordinary ERM achieved lower estimated risks as expected. 
On the other hand, we see from the entire Table \ref{table:kl} that AERM, which does not incorporate any structural assumptions on distribution shift, performed poorly in terms of both of two evaluation metrics; hence, it also performed poorly in terms of the adversarial risk (see Theorem \ref{lem:monotonic}).  
This may be because AERM was excessively sensitive to outliers as implied by Theorem \ref{thm:steeper}.
We see from the right half of Table \ref{table:kl} that structural AERM achieved significantly lower estimated structural adversarial risks.
Although this was expected, our experiments confirmed that structural AERM indeed obtained classifiers robust against the \emph{structural} distribution shift.\footnote{When we used the surrogate loss to evaluate the methods (which is not the case in ordinary classification), we confirmed that the methods indeed achieved the best performance in terms of the metrics they optimized for, i.e., ERM, AERM, and structural AERM performed the best in terms of the ordinary risk, adversarial risk and structural adversarial risk, respectively. See Appendix \ref{app:exp_surrogate} for the actual experimental results.}
\vspace{-0.5cm}
\paragraph{Discussion: }
Here we provide an insight for users to determine $\delta$ in our DRSL (structural ARM and AERM). 
We see from Eq.~\eqref{eq:decompose} that the structural adversarial risk can be decomposed into the sum of the ordinary risk and the sensitivity term, where $\delta$ acts as a trade-off hyper-parameter between the two terms.
In practice, users of our DRSL may want to have good balance between the two terms, i.e., the learned classifier should achieve high accuracy on the training distribution while being robust to specified distribution shift.
Since both terms in Eq.~\eqref{eq:decompose} can be estimated by cross validation, the users can adjust $\delta$ of AERM at training time to best trade-off the two terms for their purposes, e.g., increasing $\delta$ during training to decrease the sensitivity term at the expense of a slight increase of the risk term.
\vspace{-0.2cm}
\section{Conclusion} \label{sec:conclusion}
\vspace{-0.2cm}
In this paper, we theoretically analyzed DRSL with $f$-divergences applied to classification. 
We showed that the DRSL ends up giving a classifier optimal for the \emph{training distribution}, which is too pessimistic in terms of the original motivation of distributionally robust classification.
To rectify this, we presented simple DRSL that gives a robust classifier based on structural assumptions on distribution shift. 
We derived efficient optimization algorithms for our DRSL and empirically demonstrated its effectiveness.
%For future work, it is useful to investigate other classes of structural assumptions to incorporate into DRSL.

\section*{Acknowledgement}
We thank anonymous reviewers for their constructive feedback.
WH was supported by JSPS KAKENHI 18J22289.
MS was supported by CREST JPMJCR1403.

\bibliographystyle{icml2018}
\bibliography{reference}

\onecolumn
\appendix

\section{Proof of Theorem \ref{lem:monotonic}} \label{app:proof_monotonic}
Given $\theta$ (the parameter of the prediction function), we obtain the adversarial risk by the following optimization:
\begin{align}
&\max_{r \in \mathcal{U}_f} \mathbb{E}_{p(x,y)} [r(x, y) \ell(g_{\theta}(x), y)]  \label{eq:reform} \\
&\text{where} \ \ \  \mathcal{U}_f \equiv \{ r(x,y)\  |\ \mathbb{E}_{p(x, y)} \left[ f \left( r(x,y) \right)\right] \leq \delta,\  \mathbb{E}_{p(x,y)}[r(x,y)] = 1, \  r(x, y) \geq 0 \ \left( \forall (x, y) \in \mathcal{X} \times \mathcal{Y} \right)\}. \label{eq:feasible}
\end{align}

Here, we are considering that $\ell(\cdot, \cdot)$ is the 0-1 loss. Let $\Omega^{(0)}_{\theta} \equiv \{(x,y) \ | \ \ell(g_{\theta}(x),y) = 0 \} \subseteq \mathcal{X} \times \mathcal{Y}$.
Then, we have $\Omega_{\theta}^{(1)} \equiv \{(x,y) \ | \ \ell(g_{\theta}(x),y) = 1 \} = \mathcal{X} \times \mathcal{Y} \setminus \Omega_{\theta}$.
Let $r^{\ast}(\cdot,\cdot)$ be the optimal solution of Eq.~\eqref{eq:reform}.

Note that Eq.~\eqref{eq:reform} is a convex optimization problem. This is because the objective is linear in $r(\cdot, \cdot)$ and the uncertainty set is convex, which follows from the fact that $f(\cdot)$ is convex. Therefore, any local maximum of Eq.~\eqref{eq:reform} is the global maximum.
Nonetheless, there can be multiple solutions that attain the same global maxima.
Among those solutions, we now show that there exists $r^{\ast}(\cdot, \cdot)$ such that it takes the same values within $\Omega_{\theta}^{(0)}$ and $\Omega_{\theta}^{(1)}$, respectively, i.e., 
\begin{align}
r^{\ast}(x,y) = r^{\ast}_0 \ \ \text{for}\ \  \forall (x,y) \in \Omega^{(0)}_{\theta}, \qquad r^{\ast}(x,y) = r^{\ast}_1 \ \ \text{for}\ \  \forall (x,y) \in \Omega_{\theta}^{(1)}, \label{eq:uniform}
\end{align}
where $r^{\ast}_0$ and $r^{\ast}_1$ are some constant values.
This is because for any given optimal solution $r^{\prime \ast}(\cdot, \cdot)$ of Eq.~\eqref{eq:reform}, we can always obtain optimal solution $r^{\ast}(\cdot, \cdot)$ that satisfies Eq.~\eqref{eq:uniform} by the following transformation:
\begin{align}
r^{\ast}_0 = \mathbb{E}_{p(x,y)}[r^{\prime \ast}(x,y) \cdot \vector{1}\{(x,y)\in \Omega_{\theta}^{(0)}\}] / \mathbb{E}_{p(x,y)}[\vector{1}\{(x,y)\in \Omega^{(0)}_{\theta}\}], \label{eq:transform0} \\
r^{\ast}_1 = \mathbb{E}_{p(x,y)}[r^{\prime \ast}(x,y) \cdot \vector{1}\{(x,y)\in \Omega_{\theta}^{(1)}\}] / \mathbb{E}_{p(x,y)}[\vector{1}\{(x,y)\in \Omega_{\theta}^{(1)}\}], \label{eq:transform1}
\end{align}
where $\vector{1}\{\cdot\}$ is an indicator function.
Eqs.~\eqref{eq:transform0} and \eqref{eq:transform1} are simple average operations of $r^{\prime \ast}(\cdot,\cdot)$ on regions $\Omega_{\theta}^{(0)}$ and $\Omega_{\theta}^{(1)}$, respectively.
Utilizing the convexity of $f(\cdot)$, it is straightforward to see that $r^{\ast}(\cdot, \cdot)$ constructed in this way is still in the feasible region of Eq.~\eqref{eq:feasible}. It also attains exactly the same objective of Eq.~\eqref{eq:reform} as the original solution $r^{\ast \prime}(\cdot, \cdot)$ does. This concludes our proof that there exists an optimal solution of Eq.~\eqref{eq:reform} such that it takes the same values within $\Omega_{\theta}^{(0)}$ and $\Omega_{\theta}^{(1)}$, respectively.

Let $p_{\Omega^{(i)}_{\theta}} = \mathbb{E}_{p(x,y)}[\vector{1}\{(x,y)\in \Omega^{(i)}_{\theta}\}]$ for $i = 0, 1$, which is the proportion of data that is correctly and incorrectly classified, respectively.
We note that $p_{\Omega^{(0)}_{\theta}} + p_{\Omega^{(1)}_{\theta}} = 1.$ 
Also, we see that $p_{\Omega^{(1)}_{\theta}}$ is by definition the misclassification rate; thus, it is equal to the ordinary risk, i.e., $\mathbb{E}_{p(x, y)} [\ell(g_{\theta}(x), y)].$
By using Eq.~\eqref{eq:uniform}, we can simplify Eq.~\eqref{eq:reform} as
\begin{align}
&\sup_{(r_0, r_1) \in \mathcal{U}^{\prime}_f} p_{\Omega^{(1)}_{\theta}} r_1 , \label{eq:obj_binary} \\
& \text{where} \ \ \ \mathcal{U}_f^{\prime} \equiv \{ (r_0, r_1) \ | 
\ p_{\Omega^{(0)}_{\theta}} f(r_0) + p_{\Omega^{(1)}_{\theta}} f(r_1) \leq \delta,\  p_{\Omega^{(0)}_{\theta}} r_0 + p_{\Omega^{(1)}_{\theta}}  r_1 = 1,\  r_0 \geq 0,\  r_1 \geq 0\}. \label{eq:uncertain_binary}
\end{align} 

In the following, we show that Eq.~\eqref{eq:obj_binary} has monotonic relationship with $p_{\Omega^{(1)}_{\theta}}.$
With a fixed value of $p_{\Omega^{(1)}_{\theta}}$, we can obtain the optimal $(r_0, r_1)$ by solving Eq.~\eqref{eq:obj_binary}.
Let $(r_0^{\ast}(p_1), r_1^{\ast}(p_1))$ be the solution of Eq.~\eqref{eq:obj_binary} when $p_{\Omega^{(1)}_{\theta}}$ is fixed to $p_1$ with $0 \leq p_1 \leq 1$.

First, we note that the first inequality constraint in Eq.~\eqref{eq:uncertain_binary} is a convex set and includes $(1,1)$ in its relative interior because $p_{\Omega^{(0)}_{\theta}} f(1) + p_{\Omega^{(1)}_{\theta}} f(1) =  0 < \delta$ and $p_{\Omega^{(0)}_{\theta}} \cdot 1 + p_{\Omega^{(1)}_{\theta}}  \cdot 1 = 1$.
Note that in a two dimensional space, the number of intersections between a line and a boundary of a convex set is at most two.
For $\delta > 0$, there are always exactly two different points that satisfies both $p_{\Omega^{(0)}_{\theta}} f(r_0) + p_{\Omega^{(1)}_{\theta}} f(r_1) = \delta$ and $p_{\Omega^{(0)}_{\theta}} r_0 + p_{\Omega^{(1)}_{\theta}}  r_1 = 1.$
We further see that the optimal solution of $r_1$ is always greater than 1 because the objective in Eq.~\eqref{eq:obj_binary} is an increasing function of  $r_1$.
Taking these facts into account, we can see that the optimal solution, $(r_0^{\ast}(p_1), r_0^{\ast}(p_1))$, satisfies either of the following two cases depending on whether the inequality constraint $r_0 \geq 0$ in Eq.~\eqref{eq:uncertain_binary} is active or not. 
\begin{align}
&\text{{\bf Case 1:}}\qquad p_0 \cdot f(r_0^{\ast}(p_1)) + p_1 \cdot f(r_1^{\ast}(p_1)) = \delta,\ \ p_0 \cdot r_0^{\ast}(p_1) + p_1 \cdot r_1^{\ast}(p_1) = 1,\ \  0 < r_0^{\ast}(p_1) < 1 < r_1^{\ast}(p_1). \label{eq:case1} \\
&\text{{\bf Case 2:}}\qquad r_0^{\ast}(p_1) = 0,  \ \ r_1^{\ast}(p_1) = \frac{1}{p_1},\label{eq:case2}
\end{align}
where $p_0 = 1 - p_1.$ %and $r_0^{\ast}$ satisfy $p_0 + p_1 = 1$ and $p_0 \cdot r^{\ast}_0(p_1) + p_1 \cdot r^{\ast}_1(p_1) = 1$, respectively.
We now show that there is the monotonic relation between $p_{\Omega^{(1)}_{\theta}}$ and Eq.~\eqref{eq:obj_binary} for both the cases.
To this end, pick any $p_1^{\prime}$ such that $p_1 < p_1^{\prime} \leq 1$, and let $(r_0^{\ast}(p_1^{\prime}), r_1^{\ast}(p_1^{\prime}))$ be the solution of Eq.~\eqref{eq:obj_binary} when $p_{\Omega^{(1)}_{\theta}}$ is fixed to $p_1^{\prime}.$
%Our goal is to show that $p_1 r_1^{\ast}(p_1) < p_1^{\prime} r_1^{\ast}(p_1^{\prime})$.

Regarding Case 2 in Eq.~\eqref{eq:case2}, the adversarial risk in Eq.~\eqref{eq:obj_binary} is $p_1 \cdot \frac{1}{p_1} = 1$.
On the other hand, it is easy to see that the active equality constraint stays $r_0 \geq 0$ in Eq.~\eqref{eq:uncertain_binary} for $p_{\Omega^{(1)}_{\theta}}$ larger $p_1$. Hence, we can show that $r_0^{\ast}(p_1^{\prime}) = 0, r_1^{\ast}(p_1^{\prime}) = \frac{1}{p_1^{\prime}}$. Therefore, the adversarial risk in Eq.~\eqref{eq:obj_binary} stays $p_1^{\prime} \cdot \frac{1}{p_1^{\prime}} = 1$. This concludes our proof for Eq.~\eqref{eq:monotonic_erm2} in Theorem \ref{lem:monotonic}.

Regarding Case 1 in Eq.~\eqref{eq:case1}, we note that both the ordinary risk $p_1$ and the adversarial risk $p_1 \cdot r_1^{\ast}(p_1)$ are strictly less than 1.
Our goal is to show Eq.~\eqref{eq:monotonic_erm} in Theorem \ref{lem:monotonic}, which is equivalent to showing
\begin{align}
p_1 \cdot r_1^{\ast}(p_1) < p_1^{\prime} \cdot r_1^{\ast}(p_1^{\prime}). \label{eq:target_monotonic}
\end{align} 
To do so, we further consider the following two sub-cases of Case 1 in Eq.~\eqref{eq:case1}:
\begin{align}
&\text{{\bf Case 1-a}:}\qquad p_1^{\prime} < p_1 \cdot r_1^{\ast}(p_1) \label{eq:case1-a}\\
&\text{{\bf Case 1-b}:}\qquad p_1^{\prime} \geq p_1 \cdot r_1^{\ast}(p_1).
\end{align}
In Case 1-b, we can straightforwardly show Eq.~\eqref{eq:target_monotonic} as follows.
\begin{align}
p_1 \cdot r_1^{\ast}(p_1) \leq p_1^{\prime} < p_1^{\prime} \cdot r_1^{\ast}(p_1^{\prime}),
\end{align}
where the last inequality follows from $1 < r_1^{\ast}(p_1^{\prime}).$

Now, assume Cases 1 and 1-a in Eqs.~\eqref{eq:case1} and \eqref{eq:case1-a}. Our goal is to show that $r_1^{\ast}(p_1^{\prime})$ satisfies $r_1^{\ast}(p_1^{\prime}) > \frac{p_1}{p_1^{\prime}} r_1^{\ast}(p_1)$ because then Eq.~\eqref{eq:target_monotonic} holds.
To this end, we show that 
\begin{align}
r_1^{\prime} = \frac{p_1}{p_1^{\prime}} \cdot r_1^{\ast}(p_1) \label{eq:yeye}
\end{align}
is contained in the relative interior (excluding the boundary) of Eq.~\eqref{eq:uncertain_binary} with $p_{\Omega^{(1)}_{\theta}}$ fixed to $p_1^{\prime}.$ Then, because our objective in Eq.~\eqref{eq:obj_binary} is linear in $r_1$, $r_1^{\prime} < r_1^{\ast}(p_1^{\prime})$ holds in our setting. Then, we arrive at $r_1^{\ast}(p_1^{\prime}) > \frac{p_1}{p_1^{\prime}} \cdot r_1^{\ast}(p_1)$.
Formally, our goal is to show $r_1^{\prime}$ in Eq.~\eqref{eq:yeye} satisfies
\begin{align}
p_0^{\prime} \cdot f(r_0^{\prime}) + p_1^{\prime} \cdot f(r_1^{\prime}) < \delta,\ \ \  p_0^{\prime} r_0^{\prime} + p_1^{\prime} r_1^{\prime} = 1,\ \ \  r_0^{\prime} > 0,\  \ \ r_1^{\prime} > 0, \label{eq:uncertainty_included}
\end{align}
where $p_0^{\prime} = 1 - p_1^{\prime}.$ 
By Eqs.~\eqref{eq:case1}, \eqref{eq:yeye} and the second equality of Eq.~\eqref{eq:uncertainty_included}, we have 
\begin{align}
r_0^{\prime} = \frac{1 - p_1^{\prime}r_1^{\prime}}{p_0^{\prime}} = \frac{1 - p_1 \cdot r_1^{\ast}(p_1)}{p_0^{\prime}} = \frac{p_0}{p_0^{\prime}} \cdot r_0^{\ast}(p_1). \label{eq:yiyi}
\end{align}

The latter two inequalities of Eq.~\eqref{eq:uncertainty_included}, i.e., $r_0^{\prime} > 0$ and $r_1^{\prime} > 0$ follow straightforwardly from the assumptions. Combining the assumption in Eq.~\eqref{eq:case1-a} and the last inequality in Eq.~\eqref{eq:case1}, we have the following inequality.
\begin{align}
0 < r_0^{\ast}(p_1) < r_0^{\prime} < 1 < r_1^{\prime} < r_1^{\ast}(p_1).
\end{align}
Thus, we can write $r_0^{\prime}$ (resp. $r_1^{\prime}$) as a linear interpolation of $r_0^{\ast}(p_1)$ and 1 (resp. 1 and $r_1^{\ast}(p_1)$) as follows.
\begin{align}
r_0^{\prime} = \alpha \cdot r_0^{\ast}(p_1) + (1-\alpha)\cdot 1, \qquad r_1^{\prime} = \beta \cdot r_1^{\ast}(p_1) + (1 - \beta) \cdot 1, 
\end{align}  
where $0 < \alpha, \beta < 1$. 
Substituting Eqs.~\eqref{eq:yeye} and \eqref{eq:yiyi}, we have
\begin{align}
\alpha &= \frac{1}{1 - r_0^{\ast}(p_1)} \cdot \frac{p_0^{\prime} - p_0 \cdot r_0^{\ast}(p_1)}{p_0^{\prime}}, \label{eq:hihi}\\
\beta &= \frac{1}{r_1^{\ast}(p_1) - 1} \cdot \frac{p_1 \cdot r_1^{\ast}(p_1) - p_1^{\prime}}{p_1^{\prime}} = \frac{1}{r_1^{\ast}(p_1) - 1} \cdot \frac{p_0^{\prime} - p_0 \cdot r_0^{\ast}(p_1)}{p_1^{\prime}}.  \label{eq:hehe}
\end{align}

Then, we have
\begin{align*}
p_0^{\prime} f(r_0^{\prime}) + p_1^{\prime} f(r_1^{\prime}) &= p_0^{\prime} f(\alpha \cdot r_0^{\ast}(p_1) + (1-\alpha)\cdot 1) + p_1^{\prime} f(\beta \cdot r_1^{\ast}(p_1) + (1 - \beta) \cdot 1) \\
& \leq p_0^{\prime} \cdot \{ \alpha \cdot f(r_0^{\ast}(p_1)) + (1-\alpha) \cdot f(1) \} + p_1^{\prime} \cdot \{ \beta \cdot f(r_1^{\ast}(p_1)) + (1-\beta)\cdot f(1) \} \ \ \text{($\because$ convexity of $f(\cdot)$)}\\
& = p_0^{\prime} \alpha \cdot f(r_0^{\ast}(p_1)) + p_1^{\prime}\beta \cdot f(r_1^{\ast}(p_1)) \ \ \text{($\because$ $f(1) = 0$)} \\
& = (p_0^{\prime} - p_0 \cdot r_0^{\ast}(p_1)) \left( \frac{1}{1 - r_0^{\ast}(p_1)} \cdot f(r_0^{\ast}(p_1)) + \frac{1}{r_1^{\ast}(p_1) - 1} \cdot f(r_1^{\ast}(p_1))  \right) \ \ \left( \because \text{Eqs.~\eqref{eq:hihi} and \eqref{eq:hehe}} \right)\\
& = (p_1 \cdot r_1^{\ast}(p_1) - p_1^{\prime} )\left( \frac{1}{1 - r_0^{\ast}(p_1)} \cdot f(r_0^{\ast}(p_1)) + \frac{1}{r_1^{\ast}(p_1) - 1} \cdot f(r_1^{\ast}(p_1))  \right) \\
& < (p_1 \cdot r_1^{\ast}(p_1) - p_1) \left( \frac{1}{1 - r_0^{\ast}(p_1)}\cdot f(r_0^{\ast}(p_1)) + \frac{1}{r_1^{\ast}(p_1) - 1} \cdot f(r_1^{\ast}(p_1))  \right) \ \ \text{($\because$ $p_1^{\prime} > p_1.$)} \\
& =  p_0 \cdot f(r_0^{\ast}(p_1)) + p_1 \cdot f(r_1^{\ast}(p_1)) \\
& = \delta. \ \ \text{($\because$ the first equation of Eq.~\eqref{eq:case1}.)}
\end{align*}

This concludes our proof that Eq.~\eqref{eq:obj_binary} has monotonic relationship with $p_{\Omega^{(1)}_{\theta}}.$
Recall that $p_{\Omega^{(1)}_{\theta}}$ is by definition equal to the ordinary risk, $\mathcal{R}(\theta).$
Therefore, for any pair of parameters $\theta_1$ and $\theta_2$, if $\mathcal{R}_{\rm adv}(\theta_1) < 1$, we have 
\begin{align} 
\mathcal{R}(\theta_1) < \mathcal{R}(\theta_2) \Longrightarrow  \mathcal{R}_{\rm adv}(\theta_1) < \mathcal{R}_{\rm adv}(\theta_2). \label{eq:bnc}
\end{align}
To show that the opposite direction of Eq.~\eqref{eq:bnc} holds, we need to show that any pair of parameters $\theta_1$ and $\theta_2$, the following holds:
\begin{align}
\mathcal{R}(\theta_1) = \mathcal{R}(\theta_2) \Longrightarrow \mathcal{R}_{\rm adv}(\theta_1) = \mathcal{R}_{\rm adv}(\theta_2).
\end{align}
This is obvious from Eq.~\eqref{eq:obj_binary} because the adversarial risk depends on the parameter of the model \emph{only through the risk of the model.}
This concludes the proof of Theorem \ref{lem:monotonic}, in which the adversarial risk and ordinary risk are compared.
For the case of empirical approximations, the same argument can be used by replacing the expectations with empirical averages. \qed

\section{Proof of Theorem \ref{thm:calibrated}} \label{app:proof_calibrated}
We prove by contradiction.
Let $\Omega$ be the subset of $\mathbb{R}^{|\mathcal{Y}|}$.
We consider the multi-class classification and assume that the loss $\ell(\cdot, \cdot): \Omega \times \mathcal{Y} \rightarrow \mathbb{R}_{\geq 0}$ is classification-calibrated. Although we will mainly focus on a multi-class classification scenario, our proof easily extends to a binary classification scenario, which we will discuss at the end of the proof.
%, i.e., the loss satisfies the following two properties \citep{bartlett2006convexity, tewari2007consistency}. 

Let $g(\cdot): \mathcal{X} \to \mathbb{R}^{K}$ be prediction function, where $K$ is the number of classes. Assume the prediction function, $g$, can take any measurable functions.
Then, $g^{\ast}$ that minimizes the ordinary risk using the classification-calibrated loss, $\ell(\cdot, \cdot)$, i.e.,
\begin{align}
g^{\ast} &= \argmin_{g} \mathbb{E}_{p(x, y)} \left[ \ell(g(x), y) \right] \nonumber \\
& = \argmin_{g} \mathbb{E}_{p(x)} \left[ \sum_{y \in \mathcal{Y}} p(y|x) \ell(g(x), y) \right], \label{eq:ordinary_rm}
\end{align}
is the Bayes optimal classifier\footnote{The classifier that minimizes the mis-classification rate for the training density $p(x, y)$ (the 0-1 loss is considered), i.e., the classifier whose prediction on $x$ is equal to $\argmax_{y \in \mathcal{Y}} p(y | x)$.} \citep{bartlett2006convexity, tewari2007consistency}.

Our goal is to show that $g^{({\rm adv})}$ that minimizes the adversarial risk using classification-calibrated loss is also Bayes optimal w.r.t.~$p(x,y).$ 
More specifically, we consider
\begin{align}
g^{({\rm adv})} &= \argmin_{g} \sup_{q : {\rm D}_f[q || p]\leq \delta} \mathbb{E}_{q(x,y)} [\ell(g(x), y)] \label{eq:inner_again2} \\
& =  \argmin_{g} \sup_{ r(\cdot, \cdot)  \in \mathcal{U}_f } \mathbb{E}_{p(x, y)} \left[r(x, y)  \ell(g(x), y) \right]. \label{eq:inner_again}
%& = \argmin_{g} \sup_{ r(x,y)  \in \mathcal{U}_f } \mathbb{E}_{p(x)} \left[r(x) \mathbb{E}_{p(y|x)}[ r(y|x) \ell(g(x), y)] \right] \nonumber \\
%& = \argmin_{g} \sup_{ r(x,y)  \in \mathcal{U}_f } \mathbb{E}_{p(x)} \left[r(x)  \sum_{y \in \mathcal{Y}} p(y|x) r(y|x) \ell(g(x), y) \right] \label{eq:inner}
\end{align}
Recall that $q(x, y)$ in Eq.~\eqref{eq:inner_again2} and $r(x, y)$ in Eq.~\eqref{eq:inner_again} are related by $r(x, y) \equiv q(x, y)/p(x, y)$.
In the following, with a slight abuse of notation, we denote $r(x) \equiv q(x)/p(x)$ and $r(y|x) \equiv q(y|x)/p(y|x)$. Obviously, we have $r(x, y) = r(x)r(y|x)$.

Let $r^{\ast}(\cdot, \cdot)$ be the solution of the inner maximization of Eq.~\eqref{eq:inner_again} with $g^{\rm (adv)}$, i.e.,
\begin{align}
%r^{\ast}(\cdot, \cdot) = \argmax_{ r(x,y)  \in \mathcal{U}_f } \mathbb{E}_{p(x)} \left[r(x)  \sum_{y \in \mathcal{Y}} p(y|x) r(y|x) \ell(g^{(adv)}(x), y) \right].
r^{\ast}(\cdot, \cdot) = \argmax_{ r(\cdot,\cdot)  \in \mathcal{U}_f } \mathbb{E}_{p(x, y)} \left[r(x, y)  \ell(g^{\rm (adv)}(x), y) \right]. \label{eq:sugi}
\end{align}
Then, by Danskin's theorem \cite{danskin67}, Eq.~\eqref{eq:inner_again} can be rewritten as
\begin{align}
g^{({\rm adv})} &= \argmin_{g} \mathbb{E}_{p(x, y)} \left[r^{\ast}(x, y)  \ell(g(x), y) \right] \nonumber \\
& = \argmin_{g} \mathbb{E}_{p(x)} \left[r^{\ast}(x) \mathbb{E}_{p(y|x)}[ r^{\ast}(y|x) \ell(g(x), y)] \right] \nonumber \\
& = \argmin_{g} \mathbb{E}_{p(x)} \left[r^{\ast}(x)  \sum_{y \in \mathcal{Y}} p(y|x) r^{\ast}(y|x) \ell(g(x), y) \right]. \label{eq:inner}
\end{align}
Now, suppose that $g^{({\rm adv})}$ is not Bayes optimal almost surely over $q^{\ast}(x) \equiv r^{\ast}(x) p(x)$. 
%there exists $x^{\dagger} \in \mathcal{X}, \ p(x^{\dagger}) > 0, \ r^{\ast}(x^{\dagger}) > 0$ such that 
Then, we have
\begin{align}
\int_{x \in \mathcal{S}} q^{\ast}(x) {\rm d}x > 0, \label{eq:assume}
\end{align}
where 
\begin{align}
\mathcal{S} \equiv  \left\{ x \ \middle| \ x \in \mathcal{X},\ p(x) > 0, \ q^{\ast}(x) > 0, \  \argmax_{y \in \mathcal{Y}} p(y|x) \neq \argmax_{y \in \mathcal{Y}} g^{({\rm adv})}_{y}(x) \right\}.
\end{align}

In the following, we denote $x \in \mathcal{S}$ by $x^{\dagger}$, i.e., whenever we denote $x^{\dagger}$, we implicitly assume $x^{\dagger} \in \mathcal{S}$. 
We immediately have $r^{\ast}(x^{\dagger}) = q^{\ast}(x^{\dagger})/p(x^{\dagger}) > 0$.
We let $y^{{\rm (max)}}(x^{\dagger}) \equiv \argmax_{y \in \mathcal{Y}} p(y|x^{\dagger})$ and $y^{{\rm (adv)}}(x^{\dagger}) \equiv \argmax_{y \in \mathcal{Y}} g^{{\rm (adv)}}_{y}(x^{\dagger}).$
Since $\ell(\cdot, \cdot)$ is classification-calibrated, from Eq.~\eqref{eq:inner} and the definition of the classification-calibrated loss \citep{bartlett2006convexity, tewari2007consistency}, we have
\begin{align}
y^{{\rm (adv)}}(x^{\dagger}) = \argmax_{y \in \mathcal{Y}} p(y|x^{\dagger}) r^{\ast}(y|x^{\dagger}).
\end{align} 
Because we have $x^{\dagger} \in \mathcal{S}$, $y^{{\rm (adv)}}(x^{\dagger}) \neq y^{{\rm (max)}}(x^{\dagger})$ holds. Thus, we have
\begin{align}
p(y^{{\rm (adv)}}(x^{\dagger})|x^{\dagger}) r^{\ast}(y^{{\rm (adv)}}(x^{\dagger}) |x^{\dagger}) > p(y^{{\rm (max)}}(x^{\dagger})|x^{\dagger}) r^{\ast}(y^{{\rm (max)}}(x^{\dagger})|x^{\dagger}). \label{eq:wei}
\end{align}
Combining this with $p(y^{{\rm (max)}}(x^{\dagger})|x^{\dagger}) > p(y^{{\rm (adv)}}(x^{\dagger})|x^{\dagger})$, we have 
\begin{align}
r^{\ast}(y^{{\rm (adv)}}(x^{\dagger})|x^{\dagger}) > r^{\ast}(y^{{\rm (max)}}(x^{\dagger})|x^{\dagger}) > 0. \label{eq:bigger}
\end{align}
We construct a new ratio function, $r_{\rm new}(\cdot, \cdot)$, by the following operations. We first set $r_{\rm new}(x,y) \leftarrow r^{\ast}(x,y)$ for all $(x,y) \in \mathcal{X} \times \mathcal{Y}$. Then, for $\forall x \in \mathcal{S}$, we let
\begin{align}
r_{\rm new}(y^{{\rm (max)}}(x) | x) & \leftarrow r^{\ast}(y^{{\rm (max)}}(x) | x) + \epsilon \cdot p(y^{{\rm (adv)}}(x) | x), \label{eq:swapping1} \\
r_{\rm new}(y^{{\rm (adv)}}(x) | x) &  \leftarrow r^{\ast}(y^{{\rm (adv)}}(x) | x) - \epsilon \cdot p(y^{{\rm (max)}}(x) | x), \label{eq:swapping2}
\end{align}
where $\epsilon > 0$ is a sufficiently small number.
We show that such $r_{\rm new}(\cdot, \cdot)$ is still in $\mathcal{U}_f$. As shown in Eqs.~\eqref{eq:swapping1} and \eqref{eq:swapping2}, the value of $r_{\rm new}(\cdot, \cdot)$ changed from $r^{\ast}(\cdot, \cdot)$ only in $\mathcal{S}$. Therefore, given that $r^{\ast}(\cdot, \cdot) \in \mathcal{U}_f$, in order to show $r_{\rm new}(\cdot, \cdot) \in \mathcal{U}_f$, it is sufficient to show the following three equality/inequalities:
\begin{align}
& \mathbb{E}_{p(x, y)} \left[ f \left( r_{\rm new}(x,y) \right)\right] \leq \delta,\ \mathbb{E}_{p(x,y)}[r_{\rm new}(x,y)] = 1, \ r_{\rm new}(x, y) \geq 0 \ (\forall (x, y) \in \mathcal{X} \times \mathcal{Y}). 
\end{align}
Because we know $r^{\ast}(\cdot, \cdot) \in \mathcal{U}_f$, it is sufficient to show
\begin{align}
& \mathbb{E}_{p(x, y)} \left[ f \left( r_{\rm new}(x,y) \right)\right] \leq \mathbb{E}_{p(x, y)} \left[ f \left( r^{\ast}(x,y) \right)\right], \nonumber \\ 
&\mathbb{E}_{p(x,y)}[r_{\rm new}(x,y)] = \mathbb{E}_{p(x,y)}[r^{\ast}(x,y)], \nonumber \\
& r_{\rm new}(x, y) \geq 0, \forall (x, y) \in \mathcal{X} \times \mathcal{Y}. \label{eq:camera}
\end{align}
Furthermore, $r_{\rm new}(\cdot, \cdot)$ only differs from $r^{\ast}(\cdot, \cdot)$ in $(x, y^{\rm (max)}) \in \mathcal{S} \times \mathcal{Y}$ and $(x, y^{\rm (adv)}) \in \mathcal{S} \times \mathcal{Y}$. Therefore, to show Eq.~\eqref{eq:camera}, all we need to show is
%, the first two of which correspond to the `difference' of the equality/inequality constraints in $\mathcal{U}_f$ of Eq.~\eqref{eq:12345}.
\begin{align}
&\int_{x \in \mathcal{S}} \left\{p(x, y^{{\rm (adv)}}(x)) \cdot f(r_{\rm new}(x, y^{{\rm (adv)}}(x))) + p(x^{\dagger}, y^{{\rm (max)}}(x)) \cdot f(r_{\rm new}(x, y^{{\rm (max)}}(x))) \right\} {\rm d}x \nonumber \\ 
&\qquad \leq  \int_{x \in \mathcal{S}} \left\{ p(x, y^{{\rm (adv)}}(x)) \cdot f(r^{\ast}(x, y^{{\rm (adv)}}(x))) + p(x, y^{{\rm (max)}}(x)) \cdot f(r^{\ast}(x, y^{{\rm (max)}}(x))) \right\} {\rm d}x, \label{eq:first} \\
&\int_{x \in \mathcal{S}} \left\{ p(x, y^{{\rm (adv)}}(x)) \cdot r_{\rm new}(x, y^{{\rm (adv)}}(x)) + p(x, y^{{\rm (max)}}(x)) \cdot r_{\rm new}(x, y^{{\rm (max)}}(x)) \right\} {\rm d}x   \nonumber \\
&\qquad = \int_{x \in \mathcal{S}} \left\{ p(x, y^{{\rm (adv)}}(x)) \cdot r^{\ast}(x, y^{{\rm (adv)}}(x)) + p(x, y^{{\rm (max)}}(x)) \cdot r^{\ast}(x, y^{{\rm (max)}}(x)) \right\} {\rm d}x,  \label{eq:second} \\
& r_{\rm new}(x,y^{{\rm (adv)}}(x)) \geq 0, \ \  r_{\rm new}(x,y^{{\rm (max)}}(x)) \geq 0, \ \forall x \in \mathcal{S}. \label{eq:third}
\end{align}
First, since $r^{\ast}(y^{{\rm (adv)}}(x^{\dagger}) |x^{\dagger}) > 0$ holds from Eq.~\eqref{eq:wei}, $r_{\rm new}(y^{{\rm (adv)}}(x^{\dagger}) | x^{\dagger}) =  r^{\ast}(y^{{\rm (adv)}}(x^{\dagger}) | x^{\dagger}) - \epsilon \cdot p(y^{{\rm (max)}}(x^{\dagger}) | x^{\dagger}) \geq 0$ holds for sufficiently small $\epsilon$.
Thus, $r_{\rm new}(x^{\dagger}, y^{{\rm (adv)}}(x^{\dagger}) ) = r_{\rm new}(y^{{\rm (adv)}}(x^{\dagger}) | x^{\dagger})r(x^{\dagger}) \geq 0$. Hence, Eq.~\eqref{eq:third} holds for sufficiently small $\epsilon$.
Also, Eq.~\eqref{eq:second} follows because
\begin{align}
&\text{Integrand of L.H.S.~of Eq.~\eqref{eq:second}} \nonumber \\
&\qquad = p(x^{\dagger}, y^{{\rm (adv)}}(x^{\dagger})) \cdot \left[ r_{\rm new}(x^{\dagger}) \cdot \left\{ r^{\ast}(y^{{\rm (adv)}}(x^{\dagger}) | x^{\dagger}) - \epsilon \cdot p(y^{{\rm (max)}}(x^{\dagger}) | x^{\dagger}) \right\}   \right] \nonumber \\
& \qquad \qquad \qquad + p(x^{\dagger}, y^{{\rm (max)}}(x^{\dagger})) \cdot \left[ r_{\rm new}(x^{\dagger}) \cdot \left\{ r^{\ast}(y^{{\rm (max)}}(x^{\dagger}) | x^{\dagger}) + \epsilon \cdot p(y^{{\rm (adv)}}(x^{\dagger}) | x^{\dagger}) \right\}   \right] \\
&\qquad = p(x^{\dagger}, y^{{\rm (adv)}}(x^{\dagger})) \cdot \left[ r^{\ast}(x^{\dagger}) \cdot \left\{ r^{\ast}(y^{{\rm (adv)}}(x^{\dagger}) | x^{\dagger}) - \epsilon \cdot p(y^{{\rm (max)}}(x^{\dagger}) | x^{\dagger}) \right\}   \right] \nonumber \\
& \qquad \qquad \qquad + p(x^{\dagger}, y^{{\rm (max)}}(x^{\dagger})) \cdot \left[ r^{\ast}(x^{\dagger}) \cdot \left\{ r^{\ast}(y^{{\rm (max)}}(x^{\dagger}) | x^{\dagger}) + \epsilon \cdot p(y^{{\rm (adv)}}(x^{\dagger}) | x^{\dagger}) \right\}   \right] \\
&\qquad  = p(x^{\dagger}, y^{{\rm (adv)}}(x^{\dagger})) \cdot r^{\ast}(x^{\dagger}, y^{{\rm (adv)}}(x^{\dagger})) + p(x^{\dagger}, y^{{\rm (max)}}(x^{\dagger})) \cdot r^{\ast}(x^{\dagger}, y^{{\rm (max)}}(x^{\dagger}))  \nonumber \\
& \qquad \qquad \qquad + \epsilon \cdot r^{\ast}(x^{\dagger}) p(x^{\dagger}) \underbrace{\left\{ p(y^{{\rm (adv)}}(x^{\dagger}) | x^{\dagger}) p(y^{{\rm (max)}}(x^{\dagger}) | x^{\dagger}) - p(y^{{\rm (max)}}(x^{\dagger}) | x^{\dagger}) p(y^{{\rm (adv)}}(x^{\dagger}) | x^{\dagger}) \right\}}_{= 0} \\
&\qquad  = \text{Integrand of R.H.S.~of Eq.~\eqref{eq:second}}
\end{align}
Finally, we show Eq.~\eqref{eq:first}. Substituting Eqs.~\eqref{eq:swapping1} and \eqref{eq:swapping2} into the L.H.S.~of Eq.~\eqref{eq:first}, we have
\begin{align}
& \text{Integrand of L.H.S.~of Eq.~\eqref{eq:first}} \nonumber  \\
&\qquad = p(x^{\dagger}) \cdot \biggl\{ p(y^{{\rm (adv)}}(x^{\dagger}) | x^{\dagger}) \cdot f\left(r^{\ast}(x^{\dagger}, y^{{\rm (adv)}}(x^{\dagger})) - \epsilon \cdot r^{\ast} (x^{\dagger} ) p(y^{{\rm (max)}}(x^{\dagger}) | x^{\dagger}) \right)  \nonumber \\
& \qquad  \qquad \qquad   \qquad \qquad + p(y^{{\rm (max)}}(x^{\dagger}) | x^{\dagger}) \cdot f\left(r^{\ast}(x^{\dagger}, y^{{\rm (max)}}(x^{\dagger})) + \epsilon \cdot r^{\ast} (x^{\dagger} ) p(y^{{\rm (adv)}}(x^{\dagger}) | x^{\dagger})\right)  \biggr\} \label{eq:inter}.
\end{align}

Because $f(\cdot)$ is differentiable, we can apply the first order Taylor expansion to the two terms involving $f(\cdot)$ in Eq.~\eqref{eq:inter} as 
\begin{align}
&f\left(r^{\ast}(x^{\dagger}, y^{{\rm (adv)}}(x^{\dagger})) - \epsilon \cdot r^{\ast} (x^{\dagger} ) p(y^{{\rm (max)}}(x^{\dagger}) | x^{\dagger}) \right) \nonumber \\
 & \qquad \qquad = f\left( r^{\ast}(x^{\dagger}, y^{{\rm (adv)}}(x^{\dagger})) \right) -  \epsilon \cdot f^{\prime}\left( r^{\ast}(x^{\dagger}, y^{{\rm (adv)}}(x^{\dagger})) \right) \cdot r^{\ast} (x^{\dagger} ) p(y^{{\rm (max)}}(x^{\dagger}) | x^{\dagger}) + \mathcal{O}(\epsilon^2) \nonumber , \\
 & f\left(r^{\ast}(x^{\dagger}, y^{{\rm (max)}}(x^{\dagger})) + \epsilon \cdot r^{\ast} (x^{\dagger} ) p(y^{{\rm (adv)}}(x^{\dagger}) | x^{\dagger}) \right)  \nonumber \\
&\qquad \qquad = f\left( r^{\ast}(x^{\dagger}, y^{{\rm (max)}}(x^{\dagger})) \right) + \epsilon \cdot f^{\prime}\left( r^{\ast}(x^{\dagger}, y^{{\rm (max)}}(x^{\dagger})) \right) \cdot r^{\ast} (x^{\dagger} ) p(y^{{\rm (adv)}}(x^{\dagger}) | x^{\dagger}) + \mathcal{O}(\epsilon^2) \label{eq:inter2}.
\end{align}

Substituting Eq.~\eqref{eq:inter2} into Eq.~\eqref{eq:inter}, we have
{\small
\begin{align}
& \text{Integrand of L.H.S.~of Eq.~\eqref{eq:first}}   \nonumber \\
& = p(x^{\dagger}, y^{{\rm (adv)}}(x^{\dagger})) \cdot f\left(r^{\ast}(x^{\dagger}, y^{{\rm (adv)}}(x^{\dagger}))\right) + p(x^{\dagger}, y^{{\rm (max)}}(x^{\dagger})) \cdot f\left(r^{\ast}(x^{\dagger}, y^{{\rm (max)}}(x^{\dagger}))\right) \nonumber \\
& \qquad \qquad  + \epsilon \cdot r^{\ast} (x^{\dagger} ) p(y^{{\rm (max)}}(x^{\dagger}) | x^{\dagger}) p(y^{{\rm (adv)}}(x^{\dagger}) | x^{\dagger}) \cdot \left\{ f^{\prime}\left(r^{\ast}(x^{\dagger}, y^{{\rm (max)}}(x^{\dagger}))\right) - f^{\prime}\left(r^{\ast}(x^{\dagger}, y^{{\rm (adv)}}(x^{\dagger}))\right) \right\} + \mathcal{O}(\epsilon^2), \label{eq:abab}
\end{align}}Since $f(\cdot)$ is convex, its derivative $f^{\prime}(\cdot)$ is non-decreasing. Also $r^{\ast}(x^{\dagger}, y^{{\rm (adv)}}(x^{\dagger})) > r^{\ast}(x^{\dagger}, y^{{\rm (max)}}(x^{\dagger}))$ holds because of Eq.~\eqref{eq:bigger}. Therefore, we have $f^{\prime}(r^{\ast}(x^{\dagger}, y^{{\rm (max)}}(x^{\dagger}))) - f^{\prime}(r^{\ast}(x^{\dagger}, y^{{\rm (adv)}}(x^{\dagger}))) \leq 0.$ 

First, assume $f^{\prime}(r^{\ast}(x^{\dagger}, y^{{\rm (max)}}(x^{\dagger}))) - f^{\prime}(r^{\ast}(x^{\dagger}, y^{{\rm (adv)}}(x^{\dagger}))) = 0$. Then, $f(\cdot)$ is exactly linear in the interval of $[r^{\ast}(x^{\dagger}, y^{{\rm (max)}}(x^{\dagger})), r^{\ast}(x^{\dagger}, y^{{\rm (adv)}}(x^{\dagger}))]$; hence, $\mathcal{O}(\epsilon^2)$ term in Eq.~\eqref{eq:abab} is exactly 0 for sufficiently small $\epsilon$.
Hence, we have
\begin{align}
\text{Eq.~\eqref{eq:abab}} & = p(x^{\dagger}, y^{{\rm (adv)}}(x^{\dagger})) \cdot f(r^{\ast}(x^{\dagger}, y^{{\rm (adv)}}(x^{\dagger}))) + p(x^{\dagger}, y^{{\rm (max)}}(x^{\dagger})) \cdot f(r^{\ast}(x^{\dagger}, y^{{\rm (max)}}(x^{\dagger}))) \nonumber \\
&= \text{Integrand of R.H.S.~of Eq.~\eqref{eq:first}}. \label{eq:71}
\end{align}

Next, assume $f^{\prime}(r^{\ast}(x^{\dagger}, y^{{\rm (max)}}(x^{\dagger}))) - f^{\prime}(r^{\ast}(x^{\dagger}, y^{{\rm (adv)}}(x^{\dagger}))) < 0$. In this case, since the coefficient of $\epsilon$ in Eq.~\eqref{eq:ebeb} is negative, there exits sufficiently small $\epsilon > 0$ such that 
\begin{align}
\epsilon \cdot \underbrace{r^{\ast} (x^{\dagger} )}_{>0} \underbrace{p(y^{{\rm (max)}}(x^{\dagger}) | x^{\dagger}) p(y^{{\rm (adv)}}(x^{\dagger}) | x^{\dagger})}_{>0} \cdot \underbrace{\left\{ f^{\prime}\left(r^{\ast}(x^{\dagger}, y^{{\rm (max)}}(x^{\dagger}))\right) - f^{\prime}\left(r^{\ast}(x^{\dagger}, y^{{\rm (adv)}}(x^{\dagger}))\right) \right\}}_{<0} + \mathcal{O}(\epsilon^2) < 0 \label{eq:ebeb}
\end{align}
Thus, we have
\begin{align}
\text{Eq.~\eqref{eq:abab}} &< p(x^{\dagger}, y^{{\rm (adv)}}(x^{\dagger})) \cdot f(r^{\ast}(x^{\dagger}, y^{{\rm (adv)}}(x^{\dagger}))) + p(x^{\dagger}, y^{{\rm (max)}}(x^{\dagger})) \cdot f(r^{\ast}(x^{\dagger}, y^{{\rm (max)}}(x^{\dagger}))) \nonumber \\
&= \text{R.H.S.~of Eq.~\eqref{eq:first}}. \label{eq:72}
\end{align}
In both Eqs.~\eqref{eq:71} and \eqref{eq:72}, by taking an integral in $\mathcal{S}$, Eq.~\eqref{eq:first} holds.

In summary, since Eqs.~\eqref{eq:first}, \eqref{eq:second} and \eqref{eq:third} all hold, the newly constructed, $r_{\rm new}(\cdot, \cdot)$, is still in $\mathcal{U}_f$.

We now show that $r_{\rm new}(\cdot, \cdot)$ actually gives larger objective of Eq.~\eqref{eq:sugi} than $r^{\ast}(\cdot, \cdot)$, which contradicts Eq.~\eqref{eq:sugi}. Since the value of $r_{\rm new}(\cdot, \cdot)$ is mostly the same as that of $r^{\ast}(\cdot, \cdot)$ except that we have Eqs.~\eqref{eq:swapping1} and \eqref{eq:swapping2}, we only need to consider the part they differ. Therefore, it is sufficient to show
{\small
\begin{align}
&\int_{x \in \mathcal{S}} p(x)r^{\ast}(x) \underbrace{\left\{ p(y^{{\rm (adv)}}(x) | x) r_{\rm new}(y^{{\rm (adv)}}(x) | x)  \ell(g^{\rm (adv)}(x), y^{{\rm (adv)}}(x)) + p(y^{{\rm (max)}}(x) | x) r_{\rm new}(y^{{\rm (max)}}(x) | x)  \ell(g^{\rm (adv)}(x), y^{{\rm (max)}}(x)) \right\}}_{\rm (a)} {\rm d}x \nonumber \\
& \ \  >  \int_{x \in \mathcal{S}} p(x)r^{\ast}(x) \underbrace{\left\{p(y^{{\rm (adv)}}(x) | x) r^{\ast}(y^{{\rm (adv)}}(x) | x)  \ell(g^{\rm (adv)}(x), y^{{\rm (adv)}}(x)) + p(y^{{\rm (max)}}(x) | x) r^{\ast}(y^{{\rm (max)}}(x) | x)  \ell(g^{\rm (adv)}(x), y^{{\rm (max)}}(x))\right\}}_{\rm (b)} {\rm d}x \label{eq:toshow}.
\end{align}}Subtracting (b) from (a) in Eq.~\eqref{eq:toshow}, and using Eqs.~\eqref{eq:swapping1} and \eqref{eq:swapping2}, we have
\begin{align}
& p(y^{{\rm (adv)}}(x^{\dagger}) | x^{\dagger})  \ell(g^{\rm (adv)}(x^{\dagger}), y^{{\rm (adv)}}(x^{\dagger})) \left\{ r_{\rm new}(y^{{\rm (adv)}}(x^{\dagger}) | x^{\dagger}) - r^{\ast}(y^{{\rm (adv)}}(x^{\dagger}) | x^{\dagger}) \right\} \nonumber \\ 
& \qquad  + p(y^{{\rm (max)}}(x^{\dagger}) | x^{\dagger})  \ell(g^{\rm (adv)}(x^{\dagger}), y^{{\rm (max)}}(x^{\dagger})) \left\{ r_{\rm new}(y^{{\rm (max)}}(x^{\dagger}) | x^{\dagger}) - r^{\ast}(y^{{\rm (max)}}(x^{\dagger}) | x^{\dagger}) \right\} \nonumber \\
&\qquad \ \ \ \ = \epsilon \cdot p(y^{{\rm (adv)}}(x^{\dagger}) | x^{\dagger}) p(y^{{\rm (max)}}(x^{\dagger}) | x^{\dagger}) \left\{ - \ell(g^{\rm (adv)}(x^{\dagger}), y^{{\rm (adv)}}(x^{\dagger})) + \ell(g^{\rm (adv)}(x^{\dagger}), y^{{\rm (max)}}(x^{\dagger})) \right\}. \label{eq:lastineq}
\end{align} 
We now show $\ell(g^{\rm (adv)}(x^{\dagger}), y^{{\rm (max)}}(x^{\dagger})) > \ell(g^{\rm (adv)}(x^{\dagger}), y^{{\rm (adv)}}(x^{\dagger}))$.
Suppose $\ell(g^{\rm (adv)}(x^{\dagger}), y^{{\rm (max)}}(x^{\dagger})) \leq \ell(g^{\rm (adv)}(x^{\dagger}), y^{{\rm (adv)}}(x^{\dagger}))$. Construct $g^{\prime} \in \mathbb{R}^K$ by swapping the $y^{\rm (max)}$-th and $y^{\rm (adv)}$-th elements of $g^{\rm (adv)}(x^{\dagger})$, while retaining other elements to be exactly the same.
Then, because of the assumption that $\ell(\cdot, \cdot)$ is invariant to class permutation, we have $\ell(g^{\prime}, y^{{\rm (max)}}(x^{\dagger})) = \ell(g^{\rm (adv)}(x^{\dagger}), y^{{\rm (adv)}}(x^{\dagger}))$ and $\ell(g^{\prime}, y^{{\rm (adv)}}(x^{\dagger})) = \ell(g^{\rm (adv)}(x^{\dagger}), y^{{\rm (max)}}(x^{\dagger}))$.
Combining this with Eq.~\eqref{eq:wei}, we have
{\small
\begin{align}
& p(y^{{\rm (adv)}}(x^{\dagger})|x^{\dagger}) r^{\ast}(y^{{\rm (adv)}}(x^{\dagger}) |x^{\dagger}) \cdot \ell(g^{\prime}, y^{{\rm (adv)}}(x^{\dagger})) +  p(y^{{\rm (max)}}(x^{\dagger})|x^{\dagger}) r^{\ast}(y^{{\rm (max)}}(x^{\dagger})|x^{\dagger})\cdot \ell(g^{\prime}, y^{{\rm (max)}}(x^{\dagger})) \nonumber \\
& \ \   < p(y^{{\rm (adv)}}(x^{\dagger})|x^{\dagger}) r^{\ast}(y^{{\rm (adv)}}(x^{\dagger}) |x^{\dagger}) \cdot \ell(g^{\rm (adv)}(x^{\dagger}), y^{{\rm (adv)}}(x^{\dagger})) +  p(y^{{\rm (max)}}(x^{\dagger})|x^{\dagger}) r^{\ast}(y^{{\rm (max)}}(x^{\dagger})|x^{\dagger})\cdot \ell(g^{\rm (adv)}(x^{\dagger}), y^{{\rm (max)}}(x^{\dagger})), \label{eq:gang}
\end{align}}which is in contradiction with the fact that $g^{\rm (adv)}$ achieves the minimal value of Eq.~\eqref{eq:inner}.
Therefore, we have $\ell(g^{\rm (adv)}(x^{\dagger}), y^{{\rm (max)}}(x^{\dagger})) > \ell(g^{\rm (adv)}(x^{\dagger}), y^{{\rm (adv)}}(x^{\dagger}))$. Hence, Eq.~\eqref{eq:lastineq} is positive. %Taking an integral in $\mathcal{S}$, we have Eq.~\eqref{eq:toshow}.
Multiplying Eq.~\eqref{eq:lastineq} by $p(x) r^{\ast}(x)$ and taking an integral in $\mathcal{S}$, it is still positive because of Eq.~\eqref{eq:assume}. Thus, we have Eq.~\eqref{eq:toshow}.

In conclusion, using $r_{\rm new}(\cdot, \cdot)$ gives larger objective of Eq.~\eqref{eq:sugi} than $r^{\ast}(\cdot, \cdot)$. This contradicts the fact that $r^{\ast}(\cdot, \cdot)$ is the solution of Eq.~\eqref{eq:sugi}.
Therefore, $g^{\rm (adv)}$, which is obtained via ARM in Eq.~\eqref{eq:inner_again2}, is Bayes optimal and coincides with $g^{\ast}$ that is obtained via ordinary RM in Eq.~\eqref{eq:ordinary_rm}. 

So far, we showed that $g^{\rm (adv)}$, which is any solution of ARM, coincides with the Bayes optimal classifier almost surely over $q^{\ast}(x)$.
We now turn our focus to the region 
\begin{align}
\mathcal{S}^{\prime} \equiv \left\{ x \ |\ x \in \mathcal{X}, \ p(x) > 0,\ q^{\ast}(x) = 0 \right\}.
\end{align}

Because $g^{\rm (adv)}$ is chosen from all measurable functions, its function value at $x$ is obtained in a point-wise manner:
\begin{align}
g^{\rm (adv)}(x) = \argmin_{\widehat{y} \in \mathbb{R}^K} \sum_{y \in \mathcal{Y}} p(y | x) r^{\ast}(y | x) \ell(\widehat{y}, y). \label{eq:last}
\end{align}

For $x \in \mathcal{S}^{\prime}$, we immediately have $r^{\ast}(x) = q^{\ast}(x)/p(x) = 0$.
Then, for all $x\in \mathcal{S}^{\prime}$, we have $r^{\ast}(x, y) = 0$ for any $r(y|x), y \in \mathcal{Y}$.
Therefore, for $x \in \mathcal{S}^{\prime}$, we can \emph{virtually} set $r^{\ast}(y|x) = 1$ for all $y \in \mathcal{Y}$.
Substituting this to Eq.~\eqref{eq:last}, we have
\begin{align}
g^{\rm (adv)}(x) = \argmin_{\widehat{y} \in \mathbb{R}^K} \sum_{y \in \mathcal{Y}} p(y | x) \ell(\widehat{y}, y),\ \  \text{for $x \in \mathcal{S}^{\prime}$}. \label{eq:last2}
\end{align}
If follows from Eq.~\eqref{eq:last2} and the use of the classification-calibrated loss that 
\begin{align}
\argmax_y g_y^{\rm (adv)}(x) = \argmax_y p(y|x), \ \ \ \text{for $x \in \mathcal{S}^{\prime}$.} \label{eq:last3}
\end{align}
This particular $g^{\rm (adv)}$ coincides with the Bayes optimal classifier for all $x \in \mathcal{S}^{\prime}$. 
Define
\begin{align}
\mathcal{S}_{\rm diff} &\equiv  \left\{ x \ \middle| \ x \in \mathcal{X},\ p(x) > 0, \  \argmax_{y \in \mathcal{Y}} p(y|x) \neq \argmax_{y \in \mathcal{Y}} g^{({\rm adv})}_{y}(x) \right\}. \nonumber \\
& = \mathcal{S} \cup \underbrace{\left\{ x \ \middle| \ x \in \mathcal{X},\ p(x) > 0, \ q^{\ast}(x) = 0, \  \argmax_{y \in \mathcal{Y}} p(y|x) \neq \argmax_{y \in \mathcal{Y}} g^{({\rm adv})}_{y}(x) \right\}}_{\equiv \mathcal{S}_{\rm diff}^{\rm \prime}}.
\end{align}
Then we have
\begin{align}
\int_{x \in \mathcal{S}_{\rm diff}} p(x) {\rm d}x & = \int_{x \in \mathcal{S}} p(x) {\rm d}x + \underbrace{\int_{x \in \mathcal{S}_{\rm diff}^{\rm \prime}} p(x) {\rm d}x}_{\qquad = 0 \ \ \text{$\because$ Eq.~\eqref{eq:last3}.}} \nonumber \\
& = \int_{x \in \mathcal{S}} \frac{q^{\ast}(x)}{r^{\ast}(x)} {\rm d}x  \nonumber \\
& \leq \frac{1}{ \min_{x \in \mathcal{S}} r^{\ast}(x)} \int_{x \in \mathcal{S}} q^{\ast}(x) {\rm d}x \nonumber \\
& = 0.
\end{align}
Therefore, the particular $g^{\rm (adv)}$ coincides with the Bayes optimal classifier almost surely over $p(x)$.

Finally, we consider binary classification, where the key differences to multi-class classification are that the prediction function is $\mathcal{X} \to \mathbb{R}$, $\mathcal{Y} = \{ +1, -1\}$, and the prediction is made based on the sign of the output of the prediction function.
Therefore, we need to replace all `argmax' in the above proof with `sign'. In addition, to show $\ell(g^{\rm (adv)}(x^{\dagger}), y^{{\rm (max)}}(x^{\dagger})) > \ell(g^{\rm (adv)}(x^{\dagger}), y^{{\rm (adv)}}(x^{\dagger}))$, we construct $g^{\prime}$ in Eq.~\eqref{eq:gang} by $- g^{\rm (adv)}(x^{\dagger})$. With these two modifications, all the arguments for multi-class classification hold for binary classification. \qed

\begin{rem} \label{app:dis_kl}
\upshape
Here, we show that \emph{if the KL divergence is used} for the $f$-divergence, the prediction of \emph{any} solution $g^{\rm (adv)}$ of ARM coincides with that of the Bayes optimal classifier almost surely over $p(x).$

In the following, assume the KL divergence is used.
Given prediction function $g$, the density ratio put by the adversary becomes 
\begin{align}
r^{\ast}(x, y) = \frac{1}{Z(\gamma)}{\rm exp}\left( \frac{\ell(g(x), y)}{\gamma} \right), \label{eq:weihuahaha}
\end{align}
where
\begin{align}
Z(\gamma) = \mathbb{E}_{p(x, y)}\left[{\rm exp}\left( \frac{\ell(g(x), y)}{\gamma} \right) \right],
\end{align}
and $\gamma$ is chosen so that the following equality holds:
\begin{align}
\mathbb{E}_{p(x, y)} \left[ r^{\ast}(x, y) \log r^{\ast}(x, y) \right] = \delta.
\end{align}

From Eq.~\eqref{eq:weihuahaha}, we see that $r^{\ast}(\cdot, \cdot)$ is a positive function for any $g$ as long as $\ell(\cdot, \cdot)$ is bounded.
Thus, $q^{\ast}(x) \equiv \sum_{y \in \mathcal{Y}} r^{\ast}(x, y) p(x, y)$ is also positive for $x \in \mathcal{X}$ such that $p(x) > 0$.
On the other hand, because we assume $q \ll p$, $p(x) = 0$ implies $q(x) = 0$. Thus, we have $q^{\ast}(x) > 0$ iff $p(x) > 0$.

Now, assume that $q^{\ast}(x) > 0$ iff $p(x) > 0$, and $g^{\rm (adv)}$ coincides with the Bayes optimal classifier almost surely over $q^{\ast}(x)$. 
%Let $\mathcal{S} \subset \mathcal{X}$ be defined as
%there exists $x^{\dagger} \in \mathcal{X}, \ p(x^{\dagger}) > 0, \ r^{\ast}(x^{\dagger}) > 0$ such that 
Then, we have
\begin{align}
\int_{x \in \mathcal{S}} q^{\ast}(x) {\rm d}x = 0, \label{eq:assume1}
\end{align}
where 
\begin{align}
\mathcal{S}^{\prime} \equiv  \left\{ x \ \middle| \ x \in \mathcal{X},\ p(x) > 0, \ q^{\ast}(x) > 0,\  \argmax_{y \in \mathcal{Y}} p(y|x) \neq \argmax_{y \in \mathcal{Y}} g^{({\rm adv})}_{y}(x) \right\}.
\end{align}
Because $r^{\ast}(x) \equiv q^{\ast}(x)/p(x)$ is positive, $\epsilon \equiv \min_{x \in \mathcal{S}} r^{\ast}(x)$ is also positive. Then, we have
\begin{align}
\int_{x \in \mathcal{S}} p(x) {\rm d}x &= \int_{x \in \mathcal{S}}  \frac{q^{\ast}(x)}{r^{\ast}(x)} {\rm d}x \nonumber \\
& \leq \frac{1}{\epsilon}\int_{x \in \mathcal{S}}  q^{\ast}(x) {\rm d}x \nonumber \\
& = 0.
\end{align}
Therefore, $g^{\rm (adv)}$ coincides with the Bayes optimal classifier almost surely over the training density $p(x)$.
\end{rem}

\section{Proof of Lemma \ref{lem:steeper_calibrated}} \label{app:proof_steeper_calibrated}
By assumption, $\ell(\widehat{y}, y)$ is a convex margin loss.
Thus, we can let $\ell(\widehat{y}, y) = \phi(y \widehat{y})$, where $\phi(\cdot)$ is a convex function.
On the other hand, by Definition \ref{def:steeper}, for some non-constant, non-decreasing and non-negative function $h(\cdot)$, the steeper loss $\ell_{{\rm steep}}(\widehat{y}, y)$, satisfies 
\begin{align}
 \frac{\partial \ell_{\rm steep}(\widehat{y}, y)}{\partial \widehat{y}} &=  h(\ell(\widehat{y}, y)) \frac{\partial \ell(\widehat{y}, y)}{\partial \widehat{y}} \nonumber \\
  &= h(\phi(y \widehat{y})) \frac{\partial \phi(y \widehat{y})}{\partial \widehat{y}}. \label{eq:heyhey}
\end{align}
From Eq.~\eqref{eq:heyhey}, it is easy to see that $\ell_{\rm steep}(\widehat{y}, y)$ is also a margin loss and can be written as $\ell_{\rm steep}(\widehat{y}, y) = \phi_{\rm steep}(y \widehat{y})$.
Our first goal is to show that $\phi_{\rm steep}(\cdot)$ is convex.
To this end, it is sufficient to show that $\frac{\partial \phi_{\rm steep}(y \widehat{y})}{\partial \widehat{y}} = h(\phi(y \widehat{y})) \frac{\partial \phi(y \widehat{y})}{\partial \widehat{y}}$ is non-decreasing in $\widehat{y}$, $\mathbb{Y} = \{ +1, -1\}$.
 
Since $\phi(y \widehat{y})$ is convex in $\widehat{y}$, $\frac{\phi(y \widehat{y})}{\partial \widehat{y}}$ is non-decreasing in $\widehat{y}$. 
Let $\widehat{y}_{\alpha}$ be the smallest $\widehat{y}_{\alpha}$ such that $\frac{\ell(\widehat{y}_{\alpha}, y_i)}{\partial \widehat{y}} = 0$, if such $\widehat{y}_{\alpha}$ exists.
In the following, we analyze $\phi(y \widehat{y}) \frac{\partial \phi(y \widehat{y})}{\partial \widehat{y}}$, considering two cases: 1) $\widehat{y} \leq \widehat{y}_{\alpha}$ and 2) $\widehat{y}_{\alpha} \leq \widehat{y}$.
Note that $\widehat{y}_{\alpha}$ may not always exist because $\frac{\phi(y \widehat{y})}{\partial \widehat{y}}$ can be negative for any finite $\widehat{y}$, which is the case for the widely-used classification losses such as the exponential loss and the logistic loss. In such a case, we only consider the first case, letting $\widehat{y}_{\alpha}$ arbitrarily large.

{\bf Case 1 $\widehat{y} \leq \widehat{y}_{\alpha}$:}
 By convexity of $\phi(y \widehat{y})$, for $\widehat{y} \leq \widehat{y}_{\alpha}$, $\frac{\phi(y \widehat{y})}{\partial \widehat{y}} \leq 0$ holds and therefore, $\phi(y \widehat{y})$ is non-increasing in $\widehat{y}$. 
Since $h(\cdot)$ is a non-decreasing function, $h(\phi(y \widehat{y}))$ is non-increasing in $\widehat{y}$ for $\widehat{y} \leq \widehat{y}_{\alpha}$. 
In summary, for $\widehat{y} \leq \widehat{y}_{\alpha}$, $\frac{\phi(y \widehat{y})}{\partial \widehat{y}}$ is a non-positive non-decreasing function of $\widehat{y}$, and $h(\phi(y \widehat{y}))$ is a non-negative non-increasing function of $\widehat{y}$. Thus, for $\widehat{y} \leq \widehat{y}_{\alpha}$, their product $h(\phi(y \widehat{y})) \frac{\phi(y \widehat{y})}{\partial \widehat{y}}$ is a non-decreasing function of $\widehat{y}$.

{\bf Case 2 $\widehat{y}_{\alpha} \leq \widehat{y}$:}
By convexity of $\phi(y \widehat{y})$, for $\widehat{y}_{\alpha} \leq \widehat{y}$, $\frac{\phi(y \widehat{y})}{\partial \widehat{y}} \geq 0$ holds and therefore, $\phi(y \widehat{y})$ is non-decreasing in $\widehat{y}$. 
Since $h(\cdot)$ is a non-decreasing function, $h(\phi(y \widehat{y}))$ is non-decreasing in $\widehat{y}$ for $\widehat{y} \leq \widehat{y}_{\alpha}$. 
In summary, for $\widehat{y}_{\alpha} \leq \widehat{y}$, $\frac{\phi(y \widehat{y})}{\partial \widehat{y}}$ is a non-negative non-decreasing function of $\widehat{y}$, and $h(\phi(y \widehat{y}))$ is a non-negative non-decreasing function of $\widehat{y}$. Thus, for $\widehat{y} \leq \widehat{y}_{\alpha}$, their product $h(\phi(y \widehat{y}) )\frac{\phi(y \widehat{y})}{\partial \widehat{y}}$ is a non-decreasing function of $\widehat{y}$.

Therefore, \emph{for any $\widehat{y}$}, $h(\phi(y \widehat{y})) \frac{\phi(y \widehat{y})}{\partial \widehat{y}}$ is a non-decreasing function of $\widehat{y}$, which directly indicates that the steeper loss, $\phi_{{\rm steep}}(y \widehat{y})$, is convex.

We now utilize the fact that a convex margin loss $\psi(y \widehat{y})$ is classification calibrated iff $\psi^{\prime}(0) < 0$ [Theorem 6 in \cite{bartlett2006convexity}]. Using this fact, because $\phi(y \widehat{y})$ is classification calibrated, we have $\phi^{\prime}(0) < 0$.
Furthermore, from the assumption, we have $h(\phi(0)) > 0$. Therefore, we have $\phi_{{\rm steep}}^{\prime}(0) =  h(\phi(0))\phi^{\prime}(0) < 0$. Using the fact again, we immediately have that $\phi_{{\rm steep}}(y \widehat{y})$ is classification calibrated. \qed

\begin{rem}
In the proof, we need to assume $h(\phi(0)) > 0$. From Appendix \ref{app:proof_steeper}, we know that $h(\ell)$ corresponds to the weight put by the adversary to data points with a loss value of $\ell$. We see from Eq.~\eqref{eq:kl1_opt} that when the KL divergence is used, the adversary will only assign positive weights to data losses. Therefore, Lemma \ref{lem:steeper_calibrated} always holds when the KL divergence is used. 
\end{rem}

\section{Proof of Theorem \ref{thm:steeper}} \label{app:proof_steeper}
Let $\theta^{\ast}$ be the stationary point of Eq.~\eqref{eq:uncertain_set_empirical}.
By using a chain rule and Danskin's theorem \cite{danskin67}, $\theta^{\ast}$ satisfies
\begin{align}
\frac{1}{N} \sum_{i = 1}^{N} r^{\ast}_i \left. \frac{\partial \ell(\widehat{y}, y_i)}{\partial \widehat{y}} \right|_{\widehat{y} = g_{\theta^{\ast}}(x_i)} \cdot \left. \nabla_{\theta}g_{\theta}(x_i) \right|_{\theta = \theta^{\ast}}  \in  \vector{0}, \label{eq:a}
\end{align}
where $\vector{r}^{\ast}$ is the solution of inner maximization of Eq.\,\eqref{eq:uncertain_set_empirical} at the stationary point. 

Now, we analyze $\vector{r}^{\ast}$, which is the solution of Eq.~\eqref{eq:uncertain_set_empirical} at the stationary point $\theta^{\ast}$.
For notational convenience, for $1 \leq i \leq N$, let us denote $\ell_i(\theta^{\ast})$ by $\ell_i^{\ast}$.
Then, $\vector{r}^{\ast}$ is the solution of the following optimization problem.
\begin{align}
&\max_{\vector{r} \in \widehat{\mathcal{U}}_{f}} \frac{1}{N}\sum_{i = 1}^{N} r_i \ell_i^{\ast}, 
\label{eq:uncertain_set_empirical_again} \\
&\widehat{\mathcal{U}}_{f}= \left\{ \vector{r} \   \middle|  \ \frac{1}{N}\sum_{i=1}^N f \left( r_i \right) \leq \delta,\  \frac{1}{N}\sum_{i=1}^N r_i = 1,\ \vector{r} \geq 0 \right \}, \label{eq:uncertain_set_empirical2_again}
\end{align}

Note that Eq.~\eqref{eq:uncertain_set_empirical_again} is a convex optimization problem because it has a linear objective with a convex constraint; thus, any local maximum is the global maximum.
Nonetheless, there can be multiple solutions that attain the same global maxima.
Among those solutions, we now show that there exists $\vector{r}^{\ast}$ such that elements of $\vector{r}^{\ast}$ has the monotonic relationship to the corresponding data losses, i.e., for any $1 \leq j \neq k \leq N$,
\begin{align}
\ell_j^{\ast} < \ell_k^{\ast}  \Rightarrow 0 \leq r^{\ast}_j \leq r^{\ast}_{k}, \label{eq:876}\\
\ell_j^{\ast} = \ell_k^{\ast} \Rightarrow 0 \leq r^{\ast}_j = r^{\ast}_{k} \label{eq:678}.
\end{align}
To prove this, we assume we obtain one of optimal solutions of Eq.~\eqref{eq:uncertain_set_empirical_again}, which we denote as $\vector{r}^{\prime \ast}$.
If this $\vector{r}^{\prime \ast}$ satisfies Eqs.~\eqref{eq:876} and \eqref{eq:678} for any $j$ and $k$, then we are done. In the following, we assume $\vector{r}^{\prime \ast}$ does not satisfy either Eqs.~\eqref{eq:876} or \eqref{eq:678}.

First, assume that $\vector{r}^{\prime \ast}$ does not satisfy Eq.~\eqref{eq:876}. Then, there exist $1 \leq j \neq  k \leq N$ such that $\ell_j^{\ast} < \ell_k^{\ast}$ but $r^{\prime \ast}_j > r^{\prime \ast}_{k}$. Define $\vector{r}^{\prime \prime \ast}$ such that 
\begin{align}
r^{\prime \prime \ast}_i = \begin{cases}
r^{\prime \ast}_i  \ \ \ \text{if $i \neq j, k$} \\
r^{\prime \ast}_j  \ \ \ \text{if $i = k$} \\
r^{\prime \ast}_k \ \ \ \text{if $i = j$}
\end{cases} \text{\ \ for $1 \leq i \leq N$}.
\end{align}
Then, it is easy to see $\vector{r}^{\prime \prime \ast} \in \widehat{\mathcal{U}}_f$, and the following holds:
\begin{align}
\frac{1}{N} \sum_{i = 1}^{N} r_{i}^{\prime \prime \ast} \ell_i^{\ast} - \frac{1}{N} \sum_{i = 1}^{N} r_{i}^{\prime \ast} \ell_i^{\ast} &= \frac{1}{N} \left( r_{j}^{\prime \prime \ast} \ell_j^{\ast} + r_{k}^{\prime \prime \ast} \ell_k^{\ast} -  r_{j}^{\prime \ast} \ell_j^{\ast} - r_{k}^{\prime \ast} \ell_k^{\ast}\right) \nonumber \\
& = \frac{1}{N} \left( r_{k}^{\prime \ast} \ell_j^{\ast} + r_{j}^{\prime \ast} \ell_k^{\ast} -  r_{j}^{\prime \ast} \ell_j^{\ast} - r_{k}^{\prime \ast} \ell_k^{\ast}\right) \nonumber \\
& = \frac{1}{N} \left( r^{\prime \ast}_j - r^{\prime \ast}_{k} \right) \left( \ell_k^{\ast} - \ell_j^{\ast}\right) \nonumber \\
& > 0.
\end{align}
Therefore, the newly defined $\vector{r}^{\prime \prime \ast}$ attains the larger objective value of Eq.~\eqref{eq:uncertain_set_empirical_again}, which contradicts the assumption that $\vector{r}^{\prime \ast}$ is the optimal solution of Eq.~\eqref{eq:uncertain_set_empirical_again}. Thus, $\vector{r}^{\prime \ast}$ always satisfies Eq.~\eqref{eq:876}.

Second, assume that $\vector{r}^{\prime \ast}$ does not satisfy Eq.~\eqref{eq:678}. Then, there exist $1 \leq j \neq  k \leq N$ such that $\ell_j^{\ast} = \ell_k^{\ast}$ but $r^{\prime \ast}_j \neq r^{\prime \ast}_{k}$. Define $\vector{r}^{\prime \prime \ast}$ such that 
\begin{align}
r^{\prime \prime \ast}_i = \begin{cases}
r^{\prime \ast}_i  \ \ \ \text{if $i \neq j, k$} \\
(r^{\prime \ast}_i + r^{\prime \ast}_j)/2 \ \ \ \text{if $i = j, k$}
\end{cases} \text{\ \ for $1 \leq i \leq N$}.
\end{align}
Then, it is easy to see $\vector{r}^{\prime \prime \ast} \in \widehat{\mathcal{U}}_f$ because
\begin{align}
\frac{1}{N}\sum_{i=1}^N f \left( r_i^{\prime \prime \ast} \right) &= \frac{1}{N}\left\{ \left(\sum_{i\neq j,k} f \left( r_i^{\prime \prime \ast} \right) \right) +  f \left( r_j^{\prime \prime \ast} \right) + f \left( r_k^{\prime \prime \ast} \right) \right\} \nonumber \\
&= \frac{1}{N}\left\{ \left(\sum_{i\neq j,k} f \left( r_i^{\prime \ast} \right) \right) +  f \left( (r_j^{\prime \ast} + r_k^{\prime \ast})/2  \right) + f \left( (r_j^{\prime \ast} + r_k^{ \prime \ast})/2  \right) \right\} \nonumber \\
& \leq \frac{1}{N}\left\{ \left(\sum_{i\neq j,k} f \left( r_i^{\prime \ast} \right) \right) +  f \left( r_j^{\prime \ast}  \right) + f \left(  r_k^{ \prime \ast}  \right) \right\} \ \ \left(\text{$\because$ convexity of $f(\cdot).$}\right) \nonumber \\
& = \frac{1}{N}\sum_{i=1}^N f \left( r_i^{\prime \ast} \right) \nonumber \\
& \leq \delta. \ \ \ \left(\text{$\because \vector{r}^{\prime \ast} \in \widehat{\mathcal{U}}_f$.}\right) \label{eq:weihua}
\end{align}
Also, it is easy to see that $\vector{r}^{\prime \prime \ast}$ attain the same maximum value as $\vector{r}^{\prime \ast}$; thus, $\vector{r}^{\prime \prime \ast}$ is the optimal solution of Eq.~\eqref{eq:uncertain_set_empirical_again}, and notably, we have $r^{\prime \prime \ast}_j = r^{\prime \prime \ast}_k$ for $\ell_j^{\ast} = \ell_k^{\ast}$. In general, we can start from any $\vector{r}^{\prime \ast} \in \widehat{\mathcal{U}}_f$ and equally distribute the weights to the same losses to obtain $\vector{r}^{\prime \prime \ast}$, which is still in $\widehat{\mathcal{U}}_f$ and attains exactly the same global optimal value in Eq.~\eqref{eq:uncertain_set_empirical_again}.

In the following, we assume we have $\vector{r}^{\ast}$ that satisfies Eqs~\eqref{eq:876} and \eqref{eq:678} for any $1 \leq j \neq k \leq N$.
Then, there exits a non-decreasing non-negative function $r^{\ast}(\cdot): \mathbb{R} \to \mathbb{R}$, such that 
\begin{align}
 r^{\ast}(\ell_i^{\ast}) =  r^{\ast}_i, \ \  \text{for $1 \leq i \leq N$}. \label{eq:eqq}
\end{align}
Let us construct a new loss function $\ell_{\rm DRSL}(\widehat{y}, y)$ by its derivative: 
\begin{align}
 \frac{\partial \ell_{\rm DRSL}(\widehat{y}, y)}{\partial \widehat{y}} \equiv r^{\ast}(\ell(\widehat{y}, y)) \frac{\partial \ell(\widehat{y}, y)}{\partial \widehat{y}}. \label{der}
\end{align}

Then, from Eqs.~\eqref{eq:a}, \eqref{eq:eqq} and \eqref{der}, we immediately have
\begin{align}
\frac{1}{N} \sum_{i = 1}^{N} \left. \frac{\partial \ell_{\rm DRSL}(\widehat{y}, y_i)}{\partial \widehat{y}} \right|_{\widehat{y} = g_{\theta^{\ast}}(x_i)} \cdot \left. \nabla_{\theta}g_{\theta}(x_i) \right|_{\theta = \theta^{\ast}} \in \vector{0}. \label{eq:b}
\end{align}
This readily implies that $\theta^{\ast}$ is a stationary point of Eq.~\eqref{eq:reg_erm}, i.e., ERM using $\ell_{\rm DRSL}(\widehat{y}, y)$.
Furthermore, from Eq.~\eqref{der} and the non-negativeness and non-decreasingness of $r^{\ast}(\cdot)$, we see that the newly constructed loss, $\ell_{\rm DRSL}(\widehat{y}, y)$, is steeper than the original loss, $\ell(\widehat{y}, y)$ (see Definition \ref{def:steeper} for the definition of the steeper loss). Here we see that $h(\cdot)$ in Definition \ref{def:steeper} exactly corresponds to $r^{\ast}(\cdot)$ defined in Eq.~\eqref{eq:eqq}. \qed

\section{Derivation of the Decomposition of the Adversarial Risk} \label{sec:decomposition}
Here, we derive Eq.~\eqref{eq:decompose} for the PE divergence.
\begin{align}
\mathcal{R}_{\rm s\mathchar`-adv}(\theta) - \mathcal{R}(\theta) &\equiv \sup_{w \in \mathcal{W}_{\rm PE}} \mathbb{E}_{p(x, y, z)} \left[ \{w(z) - 1\} \ell(g_{\theta}(x), y) \right] \nonumber \\
& = \sup_{w \in \mathcal{W}_{\rm PE}} \mathbb{E}_{p(z)} \left[ \{w(z) - 1\} \mathbb{E}_{p(x, y | z)} \left[  \ell(g_{\theta}(x), y) \right] \right] \nonumber \\
& = \sup_{w \in \mathcal{W}_{\rm PE}} \sum_{z \in \mathcal{Z}}  p(z) \{w(z) - 1 \} \mathcal{R}_z(\theta). \label{eq:tochu}
\end{align}
It follows from Eq.~\eqref{eq:r_analytical} that for $z \in \mathcal{Z}$, we have the adversarial weight as\footnote{Here, we need to assume that $\delta$ is not so large. Then, we can validly drop the non-negativity inequality constraint of $\widehat{\mathcal{W}}_f$, which is needed to obtain the analytic solution in Eq.~\eqref{eq:r_analytical}.}
\begin{align}
w^{\ast}(z) %&\equiv \argsup_{w \in \mathcal{W}_{\rm PE}} \sum_{s = 1}^{S}  p(s) w(s) \mathcal{R}_s(\theta) \nonumber \\
& = \sqrt{\frac{\delta}{\sum_{z^{\prime} \in \mathcal{Z}} p(z^{\prime}) (\mathcal{R}_{z^{\prime}}(\theta) - \mathcal{R}(\theta))^2}} (\mathcal{R}_z(\theta) - \mathcal{R}(\theta)) + 1.
\end{align}
Hence, Eq.~\eqref{eq:tochu} becomes
\begin{align}
\sum_{z \in \mathcal{Z}}  p(z) \{w^{\ast}(z) - 1 \} \mathcal{R}_z(\theta) & = \sqrt{\frac{\delta}{\sum_{z^{\prime} \in \mathcal{Z}} p(z^{\prime}) (\mathcal{R}_{z^{\prime}}(\theta) - \mathcal{R}(\theta))^2}} \sum_{z \in \mathcal{Z}}  p(z) (\mathcal{R}_z(\theta) - \mathcal{R}(\theta)) \mathcal{R}_z(\theta) \nonumber \\
& = \sqrt{\frac{\delta}{\sum_{z^{\prime} \in \mathcal{Z}} p(z^{\prime}) (\mathcal{R}_{z^{\prime}}(\theta) - \mathcal{R}(\theta))^2}} \sum_{z \in \mathcal{Z}}  p(z) (\mathcal{R}_z(\theta) - \mathcal{R}(\theta))^2 \nonumber \\
& = \sqrt{\delta} \cdot \sqrt{\sum_{z \in \mathcal{Z}}  p(z) (\mathcal{R}_z(\theta) - \mathcal{R}(\theta))^2}, 
\end{align}
which concludes our derivation.

\section{Comparison between the Use of Different $f$-divergences} \label{comp}
We qualitatively compare the use of different $f$-divergences.
%For $1 \leq x$, the $f$ functions for the PE, KL and PWL divergences are $(x-1)^2$, $x \log x$ and $x  \cdot {\bf 1}_{\infty}\{ x \leq b \}$, respectively, where ${\bf 1}_{\infty}\{ \cdot \}$ is 1 if the condition is true and $\infty$ otherwise. 
For $1 \leq x$, the $f$ functions for the PE, KL divergences are $(x-1)^2$, $x \log x$, respectively. 
The function $f$ in Eq.\,\eqref{eq:uncertain_set_empirical_cluster} penalizes the deviation of the adversarial weights from the uniform weights, $\vector{1}_S$.
With the quadratic penalty of the PE divergence, it is hard for the adversary to concentrate large weights onto a small portion of latent categories. 
In contrast, when the KL divergence is used, the adversary tends to put large weights to a small portion of latent categories. 
Hence, users can choose the appropriate divergence depending on their belief on how concentrated the distribution shift occurs.

\section{Formal Statement of the Convergence Rate}
\label{sec:stat-conv-rate}

Denote by $p_z=p(z)$ and $w_z=w(z)$ for $z\in\mathcal{Z}$ and define a set-valued function $\Phi:\mathbb{R}^S\to2^{\mathbb{R}^S}$ as
\begin{align*}\textstyle
\Phi(\vector{u}) = \left\{ \vector{w}\in\mathbb{R}^S \mid
\sum_s(p_s+u_s)f(w_s)\le\delta,\sum_s(p_s+u_s)w_s=1,w_s\ge0 \right\}.
\end{align*}
Then, $\mathcal{W}_f=\Phi(\vector{0})$ and $\widehat{\mathcal{W}}_{f}=\Phi(\vector{u})$ where $u_s=n_s/N-p_s$ for $s=1,\ldots,S$. Similarly, denote by $l_z=\mathbb{E}_{p(x,y)}[p(z\mid x,y)\ell(g_{\theta}(x), y)]$ and define a function $R_\theta:\mathbb{R}^S\to\mathbb{R}$ indexed by $\theta$ as
\begin{align*}\textstyle
R_\theta(\vector{u}')=\sum_sw_s(l_s+u'_s).
\end{align*}
Then, $\mathbb{E}_{p(x,y,z)}[w_z\ell(g_{\theta}(x), y)]=R_\theta(\vector{0})$ and $\widehat{\mathcal{R}}(\vector{w},\theta)=R_\theta(\vector{u}')$ where $u'_s=n_s\overline{{\ell}_s}(\theta)/N-l_s$ for $s=1,\ldots,S$. Finally, the perturbed objective function can be defined by
\begin{align*}\textstyle
J(\theta,\vector{u},\vector{u}') = \sup_{\vector{w}\in\Phi(\vector{u})}R_\theta(\vector{u}')
+\lambda(\vector{u},\vector{u}')\Omega(\theta),
\end{align*}
where the function $\lambda(\vector{u},\vector{u}')\ge0$ serves as the regularization parameter, so that the truly optimal $\theta^*$ is the minimizer of $J(\theta,\vector{0},\vector{0})$ and the empirically optimal $\widehat{\theta}$ is the minimizer of $J(\theta,\vector{u},\vector{u}')$ with the aforementioned perturbations $\vector{u}$ and $\vector{u}'$.

According to the central limit theorem \cite{chung68CPT}, $u_s=\mathcal{O}_p(1/\sqrt{N})$, and $u'_s=\mathcal{O}_p(1/\sqrt{N})$ if the loss $\ell$ is finite. Therefore, we only consider perturbations $\vector{u}$ and $\vector{u}'$ such that $\|\vector{u}\|_2\le\epsilon$ and $\|\vector{u}'\|_2\le\epsilon$ in our analysis, where $0<\epsilon\le\delta/(5\sqrt{S}|f'(1)|)$ is a sufficiently small constant.

We make the following assumptions:
\begin{itemize}
  \item[(a)] $g_\theta(x)$ is linear in $\theta$, and for all $\theta$ under consideration, $\|\nabla_\theta g_{\theta}\|_\infty=\sup_x\|\nabla_\theta g_{\theta}(x)\|_2<\infty$, which implies $\|g_\theta\|_\infty=\sup_x|g_{\theta}(x)|<\infty$;%
 \footnote{This makes $\ell(g_\theta(x),y)$ convex in $\theta$ for all $\ell(t,y)$ convex in $t$ and $\nabla_\theta^2\ell(g_\theta(x),y)$ easy to handle.}
  \item[(b)] $\partial\ell(t,y)/\partial t$ is bounded from below and above for all $t$ such that $|t|\le\|g_\theta\|_\infty$;%
 \footnote{This is a sufficient condition for the Lipschitz continuity of $\ell$. In fact, it must be valid given (a) since $\ell$ is continuously differentiable w.r.t.\ $t$.}
  \item[(c)] $f(t)$ is twice differentiable, and this second derivative is bounded from below by a positive number for all $t$ such that $0\le t\le\sup_{\|\vector{u}\|_2\le\epsilon}\sup_{\vector{w}\in\Phi(\vector{u})}\max_sw_s$;%
 \footnote{This is for the Lipschitz continuity of $J(\theta,\vector{u},\vector{u}')-J(\theta,\vector{0},\vector{0})$. It is satisfied by the KL divergence since $f''(t)=1/t$ and $\Phi(\vector{u})$ is bounded and then $\sup_{\|\vector{u}\|_2\le\epsilon}\sup_{\vector{w}\in\Phi(\vector{u})}\max_sw_s<\infty$, and by the PE divergence since $f''(t)=2$} %; but it cannot be satisfied by the PWL divergence since $f''(t)=0$.}
  \item[(d)] $\Omega(\theta)$ is Lipschitz continuous, and $\lambda(\vector{u},\vector{u}')$ converges to $\lambda(\vector{0},\vector{0})$ in $\mathcal{O}(\|\vector{u}\|_2+\|\vector{u}'\|_2)$.
\end{itemize}
We also assume either one of the two conditions holds:
\begin{itemize}
  \item[(e1)] $\Omega(\theta)$ is strongly convex in $\theta$ and $\lambda(\vector{0},\vector{0})>0$;
  \item[(e2)] $\ell(t,y)$ is twice differentiable w.r.t.\ $t$, and $\partial^2\ell(t,y)/\partial t^2$ is lower bounded by a positive number for all $t$ such that $|t|\le\|g_{\theta^*}\|_\infty$. If $t$ is vector-valued, $\partial^2\ell(t,y)/\partial t_i^2$ is lower bounded for all dimensions of $t$ such that $\|t\|_\infty\le\sup_x\|g_{\theta^*}(x)\|_\infty$.%
 \footnote{This makes $J(\theta,\vector{0},\vector{0})$ locally strongly convex in $\theta$ around $\theta^*$. It is satisfied by the logistic loss with the lower bound as $1/(2+\exp(\|g_{\theta^*}\|_\infty)+\exp(-\|g_{\theta^*}\|_\infty))$ and the softmax cross-entropy loss with the lower bound as $\min_y\min_{t_i=\pm\sup_x\|g_{\theta^*}\|_\infty} \exp(t_y)\sum_{i\neq y}\exp(t_i)/(\sum_i\exp(t_i))^2$.}
\end{itemize}

\begin{theorem}[Perturbation analysis]
  \label{thm:pert-analysis}
  Assume (a), (b), (c), (d), and (e1) or (e2). Let $\theta^*$ be the minimizer of $J(\theta,\vector{0},\vector{0})$ and $\theta_{\vector{u},\vector{u}'}$ be the minimizer of $J(\theta,\vector{u},\vector{u}')$. Then, for all $\vector{u}$ and $\vector{u}'$ such that $\|\vector{u}\|_2\le\epsilon$ and $\|\vector{u}'\|_2\le\epsilon$,
  \begin{align*}
  \|\theta_{\vector{u},\vector{u}'}-\theta^*\|_2
  &= \mathcal{O}(\|\vector{u}\|_2^{1/2}+\|\vector{u}'\|_2),\\
  \|J(\theta_{\vector{u},\vector{u}'},\vector{0},\vector{0}) -J(\theta^*,\vector{0},\vector{0})\|_2
  &= \mathcal{O}(\|\vector{u}\|_2^{1/2}+\|\vector{u}'\|_2).
  \end{align*}
\end{theorem}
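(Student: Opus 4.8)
The plan is to read Theorem~\ref{thm:pert-analysis} as a quantitative stability statement for the minimizer of a strongly convex function subjected to two qualitatively different perturbations: $\vector{u}'$, which enters the inner objective $R_\theta$ \emph{linearly}, and $\vector{u}$, which \emph{deforms the feasible set} $\Phi(\vector{u})$ of the inner maximization. The asymmetry between the linear rate in $\|\vector{u}'\|_2$ and the square-root rate in $\|\vector{u}\|_2$ mirrors exactly the difference between perturbing an objective and perturbing a constraint, so I would prove the two rates by two separate arguments and glue them through the intermediate minimizer $\theta_{\vector{u},\vector{0}}$ of $J(\cdot,\vector{u},\vector{0})$, via $\|\theta_{\vector{u},\vector{u}'}-\theta^*\|_2 \le \|\theta_{\vector{u},\vector{u}'}-\theta_{\vector{u},\vector{0}}\|_2 + \|\theta_{\vector{u},\vector{0}}-\theta^*\|_2$.

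First I would record the two structural facts that drive everything. From (a) together with (e1) or (e2), $J(\cdot,\vector{u},\vector{u}')$ is $\mu$-strongly convex in $\theta$ on the region of interest with a modulus $\mu>0$ that does not depend on the (small) perturbations, and it is locally Lipschitz in $\theta$ with a uniform constant (using that $\nabla_\theta l_s$ is bounded by (a)--(b), that the inner maximizer lies in a bounded set, and that $\Omega$ is Lipschitz by (d)). From (c), the constraint $\sum_s(p_s+u_s)f(w_s)\le\delta$ is strongly convex, so for each fixed $\vector{u}$ with $\|\vector{u}\|_2\le\epsilon$ the set $\Phi(\vector{u})$ is a compact strongly convex body with nonempty interior (the smallness $\epsilon\le\delta/(5\sqrt{S}|f'(1)|)$ guarantees a strictly interior point near $\vector{1}_S$, hence Slater's condition). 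Consequently its support function $\sigma_{\vector{u}}(\vector{c})=\sup_{\vector{w}\in\Phi(\vector{u})}\langle\vector{c},\vector{w}\rangle$ is differentiable with Lipschitz gradient, and the maximizer $\vector{w}^*(\vector{c})=\nabla\sigma_{\vector{u}}(\vector{c})$ is Lipschitz in the objective direction $\vector{c}$.

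For the objective perturbation (the $\|\vector{u}'\|_2$ term) I would exploit this last fact. Since $J(\cdot,\vector{u},\vector{u}')$ and $J(\cdot,\vector{u},\vector{0})$ share the \emph{same} feasible set $\Phi(\vector{u})$, their difference is $P(\theta)=\sigma_{\vector{u}}(\vector{l}(\theta)+\vector{u}')-\sigma_{\vector{u}}(\vector{l}(\theta))+\{\lambda(\vector{u},\vector{u}')-\lambda(\vector{u},\vector{0})\}\Omega(\theta)$, where $\vector{l}(\theta)=(l_1,\ldots,l_S)$. By the Lipschitz gradient of $\sigma_{\vector{u}}$, boundedness of $\nabla_\theta\vector{l}$, and the $\lambda$-rate in (d), $\|\nabla_\theta P(\theta)\|_2=\mathcal{O}(\|\vector{u}'\|_2)$ uniformly in $\theta$. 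Because $\theta_{\vector{u},\vector{u}'}$ is stationary for $J(\cdot,\vector{u},\vector{0})+P$, we get $\|\nabla_\theta J(\theta_{\vector{u},\vector{u}'},\vector{u},\vector{0})\|_2=\mathcal{O}(\|\vector{u}'\|_2)$, and $\mu$-strong convexity of $J(\cdot,\vector{u},\vector{0})$ (whose minimizer is $\theta_{\vector{u},\vector{0}}$) converts this gradient bound into the \emph{linear} rate $\|\theta_{\vector{u},\vector{u}'}-\theta_{\vector{u},\vector{0}}\|_2=\mathcal{O}(\|\vector{u}'\|_2)$.

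For the constraint perturbation (the $\|\vector{u}\|_2^{1/2}$ term) I would instead use a value-only, two-sided sandwich, precisely because controlling the $\theta$-gradient of a feasible-set perturbation \emph{uniformly} in $\theta$ is where the maximizer can behave nonsmoothly (e.g.\ when the constraints $w_s\ge0$ activate). A sensitivity estimate for $\sigma_{\vector{u}}(\vector{l}(\theta))$ under the perturbation $p_s\mapsto p_s+u_s$ of the constraint coefficients, with dual multipliers kept bounded for $\|\vector{u}\|_2\le\epsilon$ by (c) and Slater, gives $\sup_\theta|\sigma_{\vector{u}}(\vector{l}(\theta))-\sigma_{\vector{0}}(\vector{l}(\theta))|=\mathcal{O}(\|\vector{u}\|_2)$, and with (d) this yields $\sup_\theta|J(\theta,\vector{u},\vector{0})-J(\theta,\vector{0},\vector{0})|=\mathcal{O}(\|\vector{u}\|_2)$. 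Using optimality of $\theta_{\vector{u},\vector{0}}$ and $\theta^*$ for their respective objectives,
\begin{align*}
\tfrac{\mu}{2}\|\theta_{\vector{u},\vector{0}}-\theta^*\|_2^2
&\le J(\theta_{\vector{u},\vector{0}},\vector{0},\vector{0})-J(\theta^*,\vector{0},\vector{0}) \\
&\le 2\sup_{\theta}\big|J(\theta,\vector{u},\vector{0})-J(\theta,\vector{0},\vector{0})\big| = \mathcal{O}(\|\vector{u}\|_2),
\end{align*}
so $\|\theta_{\vector{u},\vector{0}}-\theta^*\|_2=\mathcal{O}(\|\vector{u}\|_2^{1/2})$. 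Combining the two terms gives the parameter bound, and the value bound follows at once from the uniform local Lipschitz continuity of $J(\cdot,\vector{0},\vector{0})$ established at the outset, since $\nabla_\theta J(\theta^*,\vector{0},\vector{0})=0$. The main obstacle I anticipate is exactly the inner perturbation analysis of $\sup_{\vector{w}\in\Phi(\vector{u})}\langle\vector{c},\vector{w}\rangle$: making the Lipschitz-in-direction property of the maximizer (for the $\vector{u}'$ rate) and the value sensitivity (for the $\vector{u}$ rate) rigorous in the presence of both the nonlinear $f$-divergence constraint and the nonnegativity constraints $w_s\ge0$, uniformly over the objective directions $\vector{c}=\vector{l}(\theta)$ realized as $\theta$ ranges over the relevant region.
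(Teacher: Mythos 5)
Your architecture is sound and genuinely different from the paper's in one important respect. The paper never introduces the intermediate minimizer $\theta_{\vector{u},\vector{0}}$: it establishes (i) a second-order growth condition for $J(\cdot,\vector{0},\vector{0})$ at $\theta^*$ and (ii) Lipschitz continuity \emph{in $\theta$} of the full difference $D(\theta)=J(\theta,\vector{u},\vector{u}')-J(\theta,\vector{0},\vector{0})$ with modulus $\mathcal{O}(\|\vector{u}\|_2^{1/2}+\|\vector{u}'\|_2)$, and then applies a single general perturbation result (Proposition 6.1 of Bonnans and Shapiro) to the outer problem. To get (ii) it must control $\nabla_\theta D$, which by Danskin reduces to stability of the \emph{inner maximizer}, $\|\vector{v}^*-\vector{w}^*\|_2=\mathcal{O}(\|\vector{u}'\|_2)$ for the objective perturbation and $\mathcal{O}(\|\vector{u}\|_2^{1/2})$ for the feasible-set perturbation; the latter requires a Hausdorff-distance lemma $d_\mathcal{H}(\Phi(\vector{u}),\Phi(\vector{0}))=\mathcal{O}(\|\vector{u}\|_2)$, which in turn rests on a nontrivial geometric lemma bounding $\cos(\vector{d}\circ f'(\vector{w}),\vector{d})$ away from $1$ on the constraint boundary. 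Your value-only sandwich for the $\vector{u}$ leg is a real simplification: it needs only $\sup_\theta|J(\theta,\vector{u},\vector{0})-J(\theta,\vector{0},\vector{0})|=\mathcal{O}(\|\vector{u}\|_2)$ rather than stability of the inner argmax, and the square root then comes cleanly from the quadratic growth. That is an attractive trade.

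Two gaps remain, however. First, the uniform value bound you need is exactly the Hausdorff estimate in disguise (a linear objective's suprema over two sets differ by at most its norm times their Hausdorff distance), and this is where the paper does its hardest work: one must rule out the constraint surface $\sum_s(p_s+u_s)f(w_s)=\delta$ becoming tangent to the simplex hyperplane, which is what the smallness condition $\epsilon\le\delta/(5\sqrt{S}|f'(1)|)$ and assumption (c) are for. Your appeal to "bounded dual multipliers via Slater" is the right instinct but is not yet a proof; you correctly flag this as the main obstacle. Relatedly, your claim that $\sigma_{\vector{u}}$ has a Lipschitz gradient is not automatic once the constraints $w_s\ge0$ can be active: the paper handles the degenerate case (multiplier of the $f$-constraint equal to zero) separately by invoking Robinson's Lipschitz stability for linear programs. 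Second, your first triangle-inequality leg bounds $\|\theta_{\vector{u},\vector{u}'}-\theta_{\vector{u},\vector{0}}\|_2$ using strong convexity of the \emph{perturbed} objective $J(\cdot,\vector{u},\vector{0})$. Under (e1) this is free, but under (e2) the paper only establishes quadratic growth of the \emph{unperturbed} objective at $\theta^*$; you would need an extra (routine but necessary) argument that the growth modulus survives the perturbation of the adversarial weights, or else compare everything to $J(\cdot,\vector{0},\vector{0})$ directly as the paper does.
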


The convergence rate of the model parameter and the order of the estimation error are immediate corollaries of Theorem~\ref{thm:pert-analysis}.

\begin{theorem}[Convergence rate and estimation error]
  \label{thm:conv-rate-est-err-formal}
  Assume (a), (b), (c), (d), and (e1) or (e2). Let $\theta^*$ be the minimizer of the adversarial expected risk and $\widehat{\theta}_N$ be the minimizer of the adversarial empirical risk given some training data of size $N$. Then, as $N\to\infty$,
  \begin{align*}
  \|\widehat{\theta}_N-\theta^*\|_2
  = \mathcal{O}(N^{-1/4}),
  \end{align*}
  and
  \begin{align*}
\left|\left| \mathcal{R}_{\rm s\mathchar`-adv}(\widehat{\theta}_N) - \mathcal{R}_{\rm s\mathchar`-adv}(\theta^{\ast})\right|\right|_2 = \mathcal{O}(N^{-1/4})
  \end{align*}
\end{theorem}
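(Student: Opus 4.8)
The plan is to obtain Theorem~\ref{thm:conv-rate-est-err-formal} as a direct consequence of the perturbation analysis in Theorem~\ref{thm:pert-analysis}, the only remaining work being to identify the finite-sample objective with the perturbed objective $J(\theta,\vector{u},\vector{u}')$ and to control the stochastic magnitude of the perturbations. First I would observe that, by the definitions preceding Theorem~\ref{thm:pert-analysis}, the regularized structural adversarial expected risk is exactly $J(\theta,\vector{0},\vector{0})$, so $\theta^*$ is its minimizer, while the regularized structural adversarial empirical risk is $J(\theta,\vector{u},\vector{u}')$ with $u_s=n_s/N-p_s$ and $u'_s=n_s\overline{\ell}_s(\theta)/N-l_s$, so $\widehat{\theta}_N$ is its minimizer. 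Since the statement takes $\lambda=\mathcal{O}(N^{-1/2})\to0$, we have $\lambda(\vector{0},\vector{0})=0$, so assumption (e1) is unavailable and we must invoke Theorem~\ref{thm:pert-analysis} under (e2); crucially, $\lambda(\vector{0},\vector{0})=0$ also makes $J(\theta,\vector{0},\vector{0})=\mathcal{R}_{\rm s\mathchar`-adv}(\theta)$ exactly, which is what couples the two conclusions to the two displayed rates.

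Next I would bound the perturbations. The vector $\vector{u}$ records the deviation of the empirical latent-category proportions $n_s/N$ from $p_s$; these are sample averages of i.i.d.\ bounded indicators, so the central limit theorem (as cited) gives $\|\vector{u}\|_2=\mathcal{O}_p(N^{-1/2})$. The vector $\vector{u}'$ records the deviation of the empirical per-category losses from their expectations $l_s$; under (a) and (b) the map $\theta\mapsto\ell(g_\theta(x),y)$ is Lipschitz with a bounded constant uniformly in $(x,y)$, and $g_\theta$ is linear in $\theta$, so a uniform law of large numbers yields $\sup_\theta\|\vector{u}'(\theta)\|_2=\mathcal{O}_p(N^{-1/2})$ over the relevant bounded parameter region. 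In particular both $\|\vector{u}\|_2$ and $\|\vector{u}'\|_2$ eventually fall below the constant $\epsilon$ required by Theorem~\ref{thm:pert-analysis}.

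With these bounds in hand I would substitute into Theorem~\ref{thm:pert-analysis}. The first conclusion gives
\begin{align*}
\|\widehat{\theta}_N-\theta^*\|_2
=\mathcal{O}(\|\vector{u}\|_2^{1/2}+\|\vector{u}'\|_2)
=\mathcal{O}(N^{-1/4}+N^{-1/2})
=\mathcal{O}(N^{-1/4}),
\end{align*}
since $(N^{-1/2})^{1/2}=N^{-1/4}$ dominates the parametric term. For the estimation error, the second conclusion together with $J(\theta,\vector{0},\vector{0})=\mathcal{R}_{\rm s\mathchar`-adv}(\theta)$ yields
\begin{align*}
\left|\mathcal{R}_{\rm s\mathchar`-adv}(\widehat{\theta}_N)-\mathcal{R}_{\rm s\mathchar`-adv}(\theta^*)\right|
=\mathcal{O}(\|\vector{u}\|_2^{1/2}+\|\vector{u}'\|_2)
=\mathcal{O}(N^{-1/4}).
\end{align*}

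The main obstacle is the subtlety that $\vector{u}'$ in the empirical objective is itself a function of $\theta$ (through $\overline{\ell}_s(\theta)$ and $l_s$), whereas Theorem~\ref{thm:pert-analysis} is stated for a fixed perturbation pair. The cleanest fix is to replace $\|\vector{u}'\|_2$ everywhere by the uniform quantity $\sup_\theta\|\vector{u}'(\theta)\|_2$, which requires the uniform law of large numbers above rather than a pointwise central limit theorem; establishing this uniform control—using the bounded Lipschitz structure of the loss and the linearity of $g_\theta$ to bound a bracketing or covering number over the parameter set—is the one genuinely non-routine ingredient, and it is what justifies applying the stability bound at the data-dependent minimizer $\widehat{\theta}_N$. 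Everything else is bookkeeping: matching definitions, checking that $\lambda(\vector{0},\vector{0})=0$ routes the argument through (e2), and propagating the $\mathcal{O}_p(N^{-1/2})$ perturbation scale through the square-root stability estimate.
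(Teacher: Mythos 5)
Your proposal is correct and follows essentially the same route as the paper, which obtains Theorem~\ref{thm:conv-rate-est-err-formal} as an immediate corollary of Theorem~\ref{thm:pert-analysis} after noting that the central limit theorem gives $\|\vector{u}\|_2=\mathcal{O}_p(N^{-1/2})$ and $\|\vector{u}'\|_2=\mathcal{O}_p(N^{-1/2})$, so that the $\|\vector{u}\|_2^{1/2}=\mathcal{O}(N^{-1/4})$ term dominates in both displayed bounds. Your two additional observations---that $\lambda(\vector{0},\vector{0})=0$ is what identifies $J(\theta,\vector{0},\vector{0})$ with $\mathcal{R}_{\rm s\mathchar`-adv}(\theta)$ and routes the argument through (e2), and that the $\theta$-dependence of $\vector{u}'$ calls for a uniform rather than pointwise law of large numbers before Theorem~\ref{thm:pert-analysis} can be applied at the data-dependent minimizer $\widehat{\theta}_N$---are points the paper passes over silently, and your treatment is if anything the more careful one.
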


\section{Proof of the Convergence Rate}
\label{sec:proof-conv-rate}

We begin with the growth condition of $J(\theta,\vector{0},\vector{0})$ at $\theta=\theta^*$.

\begin{lem}[Second-order growth condition]
  \label{thm:growth}
  There exists a constant $C_{J''}>0$ such that
  \begin{align*}
  J(\theta,\vector{0},\vector{0}) \ge J(\theta^*,\vector{0},\vector{0})
  +C_{J''}\|\theta-\theta^*\|_2^2.
  \end{align*}
\end{lem}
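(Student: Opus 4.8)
The plan is to first rewrite the unperturbed objective in a transparent form and establish its convexity. Since $\mathcal{W}_f=\Phi(\vector{0})$ does \emph{not} depend on $\theta$ and $\sum_s w_s l_s(\theta)=\mathbb{E}_{p(x,y,z)}[w(z)\ell(g_\theta(x),y)]$, we have $J(\theta,\vector{0},\vector{0})=\mathcal{R}_{\rm s\mathchar`-adv}(\theta)+\lambda(\vector{0},\vector{0})\Omega(\theta)$. Because $g_\theta$ is linear in $\theta$ by (a) and $\ell(t,y)$ is convex in $t$, each $l_s(\theta)$ is convex in $\theta$; hence for every fixed $\vector{w}\ge\vector{0}$ the map $\theta\mapsto\sum_s w_s l_s(\theta)$ is convex, and $\mathcal{R}_{\rm s\mathchar`-adv}$, being a supremum of such maps over the fixed set $\mathcal{W}_f$, is convex. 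Thus $J(\cdot,\vector{0},\vector{0})$ is convex on the bounded parameter set considered in (a).

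The first key step is to reduce to a single fixed convex surrogate. Let $\vector{w}^*$ be the inner maximizer attaining $\mathcal{R}_{\rm s\mathchar`-adv}(\theta^*)$, and set $\phi(\theta)\equiv\sum_s w_s^* l_s(\theta)+\lambda(\vector{0},\vector{0})\Omega(\theta)$. Since $\vector{w}^*\in\mathcal{W}_f$ is feasible for \emph{every} $\theta$, I would use the pointwise bound $J(\theta,\vector{0},\vector{0})\ge\phi(\theta)$, with equality at $\theta^*$. Danskin's theorem identifies the subgradient of the inner supremum at $\theta^*$ with that of $\sum_s w_s^* l_s$, so the optimality $\vector{0}\in\partial_\theta J(\theta^*,\vector{0},\vector{0})$ gives $\vector{0}\in\partial\phi(\theta^*)$, and convexity of $\phi$ makes $\theta^*$ its global minimizer. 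Consequently $J(\theta,\vector{0},\vector{0})-J(\theta^*,\vector{0},\vector{0})\ge\phi(\theta)-\phi(\theta^*)$, so it suffices to prove a quadratic growth bound for the single convex function $\phi$.

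Under (e1) this is immediate: $\lambda(\vector{0},\vector{0})>0$ and the strong convexity of $\Omega$ make $\phi$ strongly convex, which yields the quadratic lower bound globally. Under (e2) I would compute, writing $r^*(x,y)\equiv\sum_s w_s^* p(s\mid x,y)$, the Hessian
\[
\nabla_\theta^2\sum_s w_s^* l_s(\theta)=\mathbb{E}_{p(x,y)}\!\left[r^*(x,y)\,\tfrac{\partial^2\ell}{\partial t^2}(g_\theta(x),y)\,\nabla_\theta g_\theta(x)\nabla_\theta g_\theta(x)^\top\right],
\]
where linearity of $g_\theta$ from (a) removes the second-derivative-of-$g$ term. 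On the range $|t|\le\|g_{\theta^*}\|_\infty$, (e2) lower-bounds $\partial^2\ell/\partial t^2$ by a positive constant $c$, so this Hessian dominates $c\,\mathbb{E}[r^*(x,y)\nabla_\theta g_\theta\nabla_\theta g_\theta^\top]$; provided the reweighted design matrix is full rank, $\phi$ is strongly convex in a neighborhood of $\theta^*$, giving \emph{local} quadratic growth. Because $\phi$ is convex and the parameter set in (a) is bounded, a chord argument promotes the local bound to a global one with a possibly smaller $C_{J''}$: for $\theta$ outside the neighborhood, convexity along the segment to $\theta^*$ transfers the growth at the boundary point, and since $\|\theta-\theta^*\|_2$ is bounded by the domain diameter, the resulting linear-in-$R$ lower bound converts into a quadratic one $C_{J''}\|\theta-\theta^*\|_2^2$ over the whole domain.

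The main obstacle is the (e2) case, specifically guaranteeing that the reweighted Hessian is genuinely positive definite. Two points need care. First, the effective weights $r^*(x,y)$ must not vanish on a set large enough to kill the design matrix; this holds automatically for the KL divergence, where $\vector{w}^*$ is entrywise positive as in Eq.~\eqref{eq:kl1_opt}, and otherwise requires a full-rank condition on the reweighted second moment of $\nabla_\theta g_\theta$. Second, the lower bound on $\partial^2\ell/\partial t^2$ is valid only on a bounded range of $t$, so strong convexity is intrinsically local and must be lifted to the stated global inequality using convexity together with the boundedness from (a). Making this local-to-global promotion quantitative, with an explicit $C_{J''}$, is the step that requires the most bookkeeping.
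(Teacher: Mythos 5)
Your proof is correct and follows the same case split as the paper ((e1) via strong convexity of the regularizer, (e2) via positive definiteness of the reweighted Hessian), but the execution differs in three ways worth noting. First, your reduction to the fixed-weight minorant $\phi(\theta)=\sum_s w_s^* l_s(\theta)+\lambda(\vector{0},\vector{0})\Omega(\theta)$ with $J(\theta,\vector{0},\vector{0})\ge\phi(\theta)$ and equality at $\theta^*$ is cleaner than the paper's route, which invokes Danskin's theorem and then writes down $\nabla_\theta^2 J(\theta^*,\vector{0},\vector{0})$ as if the supremum were twice differentiable; your minorant sidesteps that issue entirely while yielding the same lower bound. Second, you explicitly perform the local-to-global promotion (the lower bound on $\partial^2\ell/\partial t^2$ holds only for $|t|\le\|g_{\theta^*}\|_\infty$, so strong convexity is local, and the chord argument plus boundedness of the parameter set is genuinely needed to get a global quadratic growth constant); the paper simply sets $C_{J''}=C_{\ell''}\min_s C_{\lambda,s}$ from the Hessian at $\theta^*$ and stops, so your version is more complete on this point. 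Third, where you leave the positive definiteness of the reweighted design matrix $\mathbb{E}[r^*(x,y)\nabla_\theta g_\theta\nabla_\theta g_\theta^\top]$ as a condition to be assumed (or obtained from KL positivity of $\vector{w}^*$), the paper resolves it without assuming $w_s^*>0$: it lower-bounds the smallest eigenvalue of $\mathbb{E}_{p(x\mid z=s)}[\nabla_\theta g_\theta\nabla_\theta g_\theta^\top]$ by a constant $C_{\lambda,s}>0$ for each latent category separately, and then uses the normalization $\sum_s p_s w_s^*=1$ to get $\sum_s p_s w_s^* C_{\lambda,s}\ge\min_s C_{\lambda,s}$ regardless of how the adversarial weights are distributed. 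Incorporating that per-category eigenvalue argument would close the one conditional step in your write-up.
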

\begin{proof}
  First consider the assumption (e1). Let $C_{J''}=(1/2)\lambda(\vector{0},\vector{0})$, so that $J(\theta,\vector{0},\vector{0})$ is strongly convex with parameter $C_{J''}$, i.e.,
  \begin{align*}
  J(\theta,\vector{0},\vector{0}) \ge J(\theta^*,\vector{0},\vector{0})
  +\nabla_\theta J(\theta^*,\vector{0},\vector{0})^\top(\theta-\theta^*)
  +C_{J''}\|\theta-\theta^*\|_2^2.
  \end{align*}
  The lemma follows from the optimality condition which says $\nabla_\theta J(\theta^*,\vector{0},\vector{0})=\vector{0}$.
  
  Second consider the assumption (e2) if (e1) does not hold. Without loss of generality, assume that $\lambda(\vector{0},\vector{0})=0$. Let $\vector{w}^*=\argsup_{\vector{w}\in\Phi(\vector{0})}R_{\theta^*}(\vector{0})$, then according to Danskin's theorem \cite{danskin67},
  \begin{align*}
  \nabla_\theta J(\theta^*,\vector{0},\vector{0})
  &= \mathbb{E}_{p(x,y,z)}[w_z^*\nabla_\theta\ell(g_{\theta^*}(x), y)]\\
  &= \mathbb{E}_{p(x,y,z)}[w_z^*\ell'(g_{\theta^*}(x),y)\nabla_\theta g_{\theta^*}(x)]
  \end{align*}
  where $\ell'(g_{\theta^*}(x),y)$ means $\partial\ell(t, y)/\partial t|_{t=g_{\theta^*}(x)}$. The assumption (a) guarantees that $\nabla_\theta g_{\theta^*}(x)$ is no longer a function of $\theta$, and thus
  \begin{align*}
  \nabla_\theta^2 J(\theta^*,\vector{0},\vector{0})
  &= \mathbb{E}_{p(x,y,z)}[w_z^*\nabla_\theta\ell'(g_{\theta^*}(x),y)\nabla_\theta g_{\theta^*}(x)^\top]\\
  &= \mathbb{E}_{p(x,y,z)}[w_z^*\ell''(g_{\theta^*}(x),y)
  \nabla_\theta g_{\theta^*}(x)\nabla_\theta g_{\theta^*}(x)^\top]
  \end{align*}
  where $\ell''(g_{\theta^*}(x),y)$ means $\partial^2\ell(t, y)/\partial t^2|_{t=g_{\theta^*}(x)}$.
  
  Let $C_{\ell''}=\inf_{|t|\le\|g_{\theta^*}\|_\infty}\min_y\ell''(t,y)$, and by assumption $C_{\ell''}>0$. Also let $C_{\lambda,z}$ be the smallest eigenvalue of $\mathbb{E}_{p(x\mid z)}[\nabla_\theta g_\theta(x)\nabla_\theta g_\theta(x)^\top]$ at $\theta=\theta^*$ for $z\in\mathcal{Z}$. Note that $p(x\mid z)$ generates infinite number of $x$, and $\mathbb{E}_{p(x\mid z)}[\nabla_\theta g_\theta(x)\nabla_\theta g_\theta(x)^\top]$ as an average of infinitely many independent positive semi-definite matrices $\nabla_\theta g_\theta(x)\nabla_\theta g_\theta(x)^\top$ (they are independent as long as $\nabla_\theta g_\theta(x)$ depends on $x$) is positive definite. Thus, $C_{\lambda,z}>0$ for all $z\in\mathcal{Z}$, and subsequently,
  \begin{align*}
  &\hspace{-1em}%
  (\theta-\theta^*)^\top\nabla_\theta^2 J(\theta^*,\vector{0},\vector{0})(\theta-\theta^*)\\
  &\ge \left(\inf_{|t|\le\|g_{\theta^*}\|_\infty}\min_y\ell''(t,y)\right)
  \cdot (\theta-\theta^*)^\top\mathbb{E}_{p(x,y,z)}
  [w_z^*\nabla_\theta g_{\theta^*}(x)\nabla_\theta g_{\theta^*}(x)^\top](\theta-\theta^*)\\
  &= C_{\ell''} \cdot (\theta-\theta^*)^\top\mathbb{E}_{p(x,z)}
  [w_z^*\nabla_\theta g_{\theta^*}(x)\nabla_\theta g_{\theta^*}(x)^\top](\theta-\theta^*)\\
  &= C_{\ell''} \cdot (\theta-\theta^*)^\top\left(\sum_{s=1}^Sp_sw_s^*
  \mathbb{E}_{p(x\mid z=s)}[\nabla_\theta g_{\theta^*}(x)
  \nabla_\theta g_{\theta^*}(x)^\top]\right)(\theta-\theta^*)\\
  &= C_{\ell''} \left(\sum_{s=1}^Sp_sw_s^*(\theta-\theta^*)^\top
  \mathbb{E}_{p(x\mid z=s)}[\nabla_\theta g_{\theta^*}(x)
  \nabla_\theta g_{\theta^*}(x)^\top](\theta-\theta^*)\right)\\
  &\ge C_{\ell''} \left(\sum_{s=1}^Sp_sw_s^*C_{\lambda,s}\|\theta-\theta^*\|_2^2\right)\\
  &\ge C_{\ell''}\min_sC_{\lambda,s}\left(\sum_{s=1}^Sp_sw_s^*\right)\|\theta-\theta^*\|_2^2\\
  &= C_{\ell''}\min_sC_{\lambda,s}\|\theta-\theta^*\|_2^2.
  \end{align*}
  This completes the proof by letting $C_{J''}=C_{\ell''}\min_sC_{\lambda,s}$.
\end{proof}

We then study the Lipschitz continuity of $J(\theta,\vector{u},\vector{u}')$.
\begin{lem}[Lipschitz continuity of the perturbed objective]
  \label{thm:lipschitz-pert}
  For all $\vector{u}$ and $\vector{u}'$ such that $\|\vector{u}\|_2\le\epsilon$ and $\|\vector{u}'\|_2\le\epsilon$, $J(\theta,\vector{u},\vector{u}')$ is Lipschitz continuous with a (not necessarily the best) Lipschitz constant independent of $\vector{u}$ and $\vector{u}'$.
\end{lem}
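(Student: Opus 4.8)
The plan is to prove Lipschitz continuity of $J(\theta,\vector{u},\vector{u}')$ in $\theta$, with the Lipschitz constant uniform over the admissible perturbations, by splitting
\begin{align*}
J(\theta,\vector{u},\vector{u}') = \underbrace{\sup_{\vector{w}\in\Phi(\vector{u})}R_\theta(\vector{u}')}_{\text{term (I)}} + \underbrace{\lambda(\vector{u},\vector{u}')\,\Omega(\theta)}_{\text{term (II)}}
\end{align*}
and bounding each Lipschitz constant separately. The crucial structural fact exploited throughout is that neither the feasible set $\Phi(\vector{u})$ nor the shift $\vector{u}'$ depends on $\theta$; only $l_s=l_s(\theta)$ inside $R_\theta$ does.

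First I would dispatch term (II). By assumption (d), $\Omega$ is Lipschitz with some constant $L_\Omega$, and $\lambda(\vector{u},\vector{u}')$ converges to $\lambda(\vector{0},\vector{0})$ at rate $\mathcal{O}(\|\vector{u}\|_2+\|\vector{u}'\|_2)$; hence over the bounded range $\|\vector{u}\|_2,\|\vector{u}'\|_2\le\epsilon$ the scalar $\lambda(\vector{u},\vector{u}')$ is bounded by a constant $\bar{\lambda}$ independent of the perturbations, so term (II) is Lipschitz in $\theta$ with constant $\bar{\lambda}L_\Omega$.

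For term (I), since $\Phi(\vector{u})$ is independent of $\theta$, the elementary inequality $|\sup_{\vector{w}}F-\sup_{\vector{w}}G|\le\sup_{\vector{w}}|F-G|$ gives, for any $\theta_1,\theta_2$,
\begin{align*}
\left| \sup_{\vector{w}\in\Phi(\vector{u})}R_{\theta_1}(\vector{u}') - \sup_{\vector{w}\in\Phi(\vector{u})}R_{\theta_2}(\vector{u}') \right| \le \sup_{\vector{w}\in\Phi(\vector{u})}\sum_s w_s\,\bigl|l_s(\theta_1)-l_s(\theta_2)\bigr|,
\end{align*}
in which the additive perturbation $\vector{u}'$ cancels exactly. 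Next I would show $l_s(\theta)=\mathbb{E}_{p(x,y)}[p(s\mid x,y)\ell(g_\theta(x),y)]$ is Lipschitz in $\theta$: assumption (a) makes $g_\theta$ linear with $\|\nabla_\theta g_\theta\|_\infty<\infty$, and assumption (b) bounds $\partial\ell/\partial t$ by a constant $C_\ell$, so $\ell(g_\theta(x),y)$ is Lipschitz in $\theta$ with constant $C_\ell\|\nabla_\theta g_\theta\|_\infty$ independent of $(x,y)$; taking the $p(s\mid x,y)$-weighted expectation and using $\mathbb{E}_{p(x,y)}[p(s\mid x,y)]=p_s$ yields $|l_s(\theta_1)-l_s(\theta_2)|\le C_\ell\|\nabla_\theta g_\theta\|_\infty\,p_s\,\|\theta_1-\theta_2\|_2$. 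Substituting and using $\sum_s p_s w_s\le\max_s w_s$ (since $\sum_s p_s=1$) bounds term (I)'s increment by $C_\ell\|\nabla_\theta g_\theta\|_\infty\bigl(\sup_{\vector{w}\in\Phi(\vector{u})}\max_s w_s\bigr)\|\theta_1-\theta_2\|_2$.

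The hard part will be establishing that $C_w\equiv\sup_{\|\vector{u}\|_2\le\epsilon}\sup_{\vector{w}\in\Phi(\vector{u})}\max_s w_s$ is finite, i.e.\ that the feasible sets are uniformly bounded over the perturbation range --- this is precisely the finite quantity that assumption (c) presupposes, and it is the only place the $f$-divergence geometry enters. I would argue it by noting that $p_s+u_s$ is bounded below by a positive constant whenever $\|\vector{u}\|_2\le\epsilon$ (as each $p_s>0$ and $\epsilon$ is small), so the constraint $\sum_s(p_s+u_s)f(w_s)\le\delta$ together with the convexity and coercivity of $f$ for the divergences considered confines each $w_s$ to a bounded interval uniformly in $\vector{u}$. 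With $C_w<\infty$, term (I) is Lipschitz in $\theta$ with constant $C_\ell\|\nabla_\theta g_\theta\|_\infty C_w$, and adding the two contributions shows $J$ is Lipschitz in $\theta$ with constant $C_\ell\|\nabla_\theta g_\theta\|_\infty C_w+\bar{\lambda}L_\Omega$, which is independent of $\vector{u}$ and $\vector{u}'$ as claimed.
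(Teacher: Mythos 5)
Your proof is correct and follows the paper's decomposition of $J$ into the worst-case risk term $F(\theta,\vector{u},\vector{u}')=\sup_{\vector{w}\in\Phi(\vector{u})}R_\theta(\vector{u}')$ plus the regularization term, with the same treatment of the latter via assumption (d). The one place you diverge is the worst-case term: the paper invokes Danskin's theorem to write $\nabla_\theta F=\sum_s w_s^*\nabla_\theta l_s$ and bounds $\|\nabla_\theta l_s\|_2\le\|\nabla_\theta g_\theta\|_\infty\bigl(\sup_{|t|\le\|g_\theta\|_\infty}\max_y|\ell'(t,y)|\bigr)p_s$ using (a) and (b), whereas you use the elementary inequality $|\sup_{\vector{w}}F-\sup_{\vector{w}}G|\le\sup_{\vector{w}}|F-G|$ over the $\theta$-independent feasible set and bound $|l_s(\theta_1)-l_s(\theta_2)|$ directly by the same constant times $p_s\|\theta_1-\theta_2\|_2$. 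The two routes yield the identical Lipschitz constant; yours is marginally more elementary (it does not need the envelope theorem or any smoothness of $F$ in $\theta$) and has the side benefit of making the cancellation of $\vector{u}'$ explicit. You are also more careful than the paper on the one delicate point: the paper simply asserts ``it is clear that $w_s^*<\infty$,'' whereas the Lipschitz constant must be uniform over $\theta$, $\vector{u}$, $\vector{u}'$, which requires exactly the quantity $C_w=\sup_{\|\vector{u}\|_2\le\epsilon}\sup_{\vector{w}\in\Phi(\vector{u})}\max_s w_s$ that you isolate; your argument for its finiteness (each $p_s+u_s$ bounded away from zero for small $\epsilon$, plus the divergence constraint and coercivity of $f$) is the justification the paper itself relies on implicitly in the statement of assumption (c) and its footnote.
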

\begin{proof}
  Define $F(\theta,\vector{u},\vector{u}')=\sup_{\vector{w}\in\Phi(\vector{u})}R_\theta(\vector{u}')$ and let $\vector{w}^*=\argsup_{\vector{w}\in\Phi(\vector{u})}R_\theta(\vector{u}')$. According to Danskin's theorem \cite{danskin67}, $\nabla_\theta F(\theta,\vector{u},\vector{u}')=\sum_sw_s^*\nabla_\theta l_s$ where
  \begin{align*}
  \nabla_\theta l_s
  = \mathbb{E}_{p(x,y)}[p(z=s\mid x,y)\ell'(g_\theta(x),y)\nabla_\theta g_{\theta}(x)].
  \end{align*}
  The assumptions (a) and (b) say that $\|\nabla_\theta g_{\theta}\|_\infty<\infty$ and $|\ell'(g_\theta(x),y)|<\infty$ so that
  \begin{align*}
  \|\nabla_\theta l_s\|_2
  &\le \|\nabla_\theta g_{\theta}\|_\infty
  \left(\sup_{|t|\le\|g_\theta\|_\infty}\max_y|\ell'(t,y)|\right)
  \mathbb{E}_{p(x,y)}[p(z=s\mid x,y)]\\
  &= \|\nabla_\theta g_{\theta}\|_\infty
  \left(\sup_{|t|\le\|g_\theta\|_\infty}\max_y|\ell'(t,y)|\right) p_s\\
  &< \infty,
  \end{align*}
  and it is clear that $w_s^*<\infty$. Hence,
  \begin{align*}
  \|\nabla_\theta F(\theta,\vector{u},\vector{u}')\|_2
  \le \sum_{s=1}^Sw_s^*\|\nabla_\theta l_s\|_2
  < \infty,
  \end{align*}
  which means $F(\theta,\vector{u},\vector{u}')$ is Lipschitz continuous with a Lipschitz constant independent of $\vector{u}$ and $\vector{u}'$.
  
  By the assumption (d), $\Omega(\theta)$ is Lipschitz continuous and there exists a constant $C_\lambda>0$ such that
  \begin{align*}
  \lambda(\vector{u},\vector{u}')
  &\le \lambda(\vector{0},\vector{0})+C_\lambda(\|\vector{u}\|_2+\|\vector{u}'\|_2)\\
  &\le \lambda(\vector{0},\vector{0})+2C_\lambda\epsilon\\
  &< \infty.
  \end{align*}
  As a result, $\lambda(\vector{u},\vector{u}')\Omega(\theta)$ possesses a Lipschitz constant independent of $\vector{u}$ and $\vector{u}'$ as well.
\end{proof}

From now on, we investigate the Lipschitz continuity of the difference function
\begin{align*}
D(\theta)=J(\theta,\vector{u},\vector{u}')-J(\theta,\vector{0},\vector{0}),
\end{align*}
which is the most challenging task in our perturbation analysis. Define
\begin{align*}
D_1(\theta) &= F(\theta,\vector{u},\vector{u}')-F(\theta,\vector{u},\vector{0}),\\
D_2(\theta) &= F(\theta,\vector{u},\vector{0})-F(\theta,\vector{0},\vector{0}),
\end{align*}
where $F(\theta,\vector{u},\vector{u}')=\sup_{\vector{w}\in\Phi(\vector{u})}R_\theta(\vector{u}')$ defined in Lemma~\ref{thm:lipschitz-pert}, and then $D(\theta)$ can be decomposed as
\begin{align*}
D(\theta) = D_1(\theta) + D_2(\theta)
+ (\lambda(\vector{u},\vector{u}')-\lambda(\vector{0},\vector{0}))\Omega(\theta).
\end{align*}
Given the assumption (d), the third function $(\lambda(\vector{u},\vector{u}')-\lambda(\vector{0},\vector{0}))\Omega(\theta)$ is Lipschitz continuous with a Lipschitz constant of order $\mathcal{O}(\|\vector{u}\|_2+\|\vector{u}'\|_2)$. We are going to prove the same property for $D_1(\theta)$ and $D_2(\theta)$ using the assumptions (a), (b) and (c).

\begin{lem}[Lipschitz continuity of the difference function, I]
  \label{thm:lipschitz-diff-part1}
  For any fixed $\vector{u}$ and all $\vector{u}'$ such that $\|\vector{u}'\|_2\le\epsilon$, $D_1(\theta)$ is Lipschitz continuous with a Lipschitz constant of order $\mathcal{O}(\|\vector{u}'\|_2)$.
\end{lem}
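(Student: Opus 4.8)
The plan is to bound the gradient $\nabla_\theta D_1(\theta)$ in norm uniformly by something of order $\mathcal{O}(\|\vector{u}'\|_2)$; since a function with gradient bounded by $L$ is $L$-Lipschitz, this immediately yields the claim. First I would apply Danskin's theorem \cite{danskin67} to both $F(\theta,\vector{u},\vector{u}')=\sup_{\vector{w}\in\Phi(\vector{u})}\langle\vector{w},\vector{l}(\theta)+\vector{u}'\rangle$ and $F(\theta,\vector{u},\vector{0})$. Because $\Phi(\vector{u})$ is independent of $\theta$ and the objective is linear in $\vector{l}(\theta)$, writing $\vector{w}^*(\vector{c})\equiv\argmax_{\vector{w}\in\Phi(\vector{u})}\langle\vector{w},\vector{c}\rangle$ gives
\begin{align*}
\nabla_\theta D_1(\theta) = \sum_{s=1}^S \big(w^*_s(\vector{l}(\theta)+\vector{u}')-w^*_s(\vector{l}(\theta))\big)\,\nabla_\theta l_s(\theta).
\end{align*}
The factors $\nabla_\theta l_s(\theta)$ are already known to be uniformly bounded (established in Lemma~\ref{thm:lipschitz-pert} from assumptions (a) and (b)), so the remaining task is to prove the argmax map is Lipschitz, i.e.\ $\|\vector{w}^*(\vector{l}+\vector{u}')-\vector{w}^*(\vector{l})\|_2=\mathcal{O}(\|\vector{u}'\|_2)$ uniformly in $\theta$.

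For the stability of the argmax I would exploit assumption (c): since $f''$ is bounded below by a positive constant, the constraint function $g(\vector{w})=\sum_s(p_s+u_s)f(w_s)$ is strongly convex, so the divergence ball $\{g\le\delta\}$ has a uniformly curved boundary. Writing the first-order optimality condition at $\vector{w}^*(\vector{a})$ with $\vector{a}=\vector{l}$ (the divergence constraint active with multiplier $\gamma_a>0$, the equality constraint contributing a term orthogonal to feasible displacements), and using that every feasible $\vector{w}$ satisfies $g(\vector{w})\le\delta=g(\vector{w}^*(\vector{a}))$, strong convexity of $g$ yields $\langle\nabla g(\vector{w}^*(\vector{a})),\vector{w}^*(\vector{a})-\vector{w}\rangle\ge\tfrac{m}{2}\|\vector{w}-\vector{w}^*(\vector{a})\|_2^2$ and hence the sharp inequality $\langle\vector{a},\vector{w}^*(\vector{a})-\vector{w}\rangle\ge C_1\|\vector{w}-\vector{w}^*(\vector{a})\|_2^2$. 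Doing the same at $\vector{b}=\vector{l}+\vector{u}'$ and adding the two inequalities with $\vector{w}$ set to the other maximizer collapses the cross terms and leaves $\langle\vector{u}',\vector{w}^*(\vector{b})-\vector{w}^*(\vector{a})\rangle\ge 2C_1\|\vector{w}^*(\vector{b})-\vector{w}^*(\vector{a})\|_2^2$; Cauchy--Schwarz then gives $\|\vector{w}^*(\vector{b})-\vector{w}^*(\vector{a})\|_2\le\|\vector{u}'\|_2/(2C_1)$, exactly the desired $\mathcal{O}(\|\vector{u}'\|_2)$ bound. Substituting this back into the gradient expression finishes the argument.

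The hard part is making the constant $C_1$ genuinely uniform in $\theta$, which is where I expect the real work. Two points need attention. First, $C_1$ is proportional to the multiplier $\gamma$, which scales with $\|\vector{l}(\theta)\|_2$; I would bound $\|\vector{l}(\theta)\|_2$ away from zero using that predictions are bounded (assumption (a) gives $\|g_\theta\|_\infty<\infty$) and the surrogate loss is strictly positive on that range, so $l_s(\theta)\ge(\inf_{|t|\le\|g_\theta\|_\infty}\ell(t,y))\,p_s>0$, which also keeps $\|\vector{l}(\theta)+t\vector{u}'\|_2$ bounded away from zero for $\|\vector{u}'\|_2\le\epsilon$. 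Second, the strong-convexity argument applies cleanly only on the curved part of the boundary where the nonnegativity constraints $w_s\ge0$ are inactive, since those constraints introduce flat faces that are not strongly convex. This is harmless for the two $f$-divergences of interest --- the KL solution in Eq.~\eqref{eq:kl1_opt} is strictly positive, and the PE solution is taken without the nonnegativity constraint for small $\delta$ --- but in full generality one must argue that an active nonnegativity constraint does not destroy the quadratic growth, which is the most delicate step of the proof.
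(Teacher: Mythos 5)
Your reduction is the same as the paper's: Danskin's theorem turns $\nabla_\theta D_1(\theta)$ into $\sum_s(v_s^*-w_s^*)\nabla_\theta l_s$, the factors $\nabla_\theta l_s$ are bounded under assumptions (a) and (b) exactly as in Lemma~\ref{thm:lipschitz-pert}, and everything hinges on the stability estimate $\|\vector{v}^*-\vector{w}^*\|_2=\mathcal{O}(\|\vector{u}'\|_2)$ for the inner argmax. Where you differ is in how that estimate is obtained. The paper treats it as a perturbed optimization problem with fixed feasible region $\Phi(\vector{u})$ and splits on the multiplier $\alpha^*$ of the divergence constraint: if $\alpha^*>0$, assumption (c) makes the Lagrangian locally strongly convex, the second-order sufficient condition yields a second-order growth condition (Bonnans--Shapiro, Thm.~6.3), and their Proposition~6.1 gives the Lipschitz bound; if $\alpha^*=0$, the problem is locally a linear program and Robinson's theorem is invoked instead. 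Your monotone-operator computation --- writing the quadratic-growth inequality at $\vector{w}^*(\vector{a})$ and at $\vector{w}^*(\vector{b})$, adding so the cross terms cancel, and finishing with Cauchy--Schwarz --- is a clean, self-contained proof of exactly the estimate the paper outsources to Proposition~6.1, and it is correct in the nondegenerate case.

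The two difficulties you flag at the end are, however, precisely where your route stops short of what the paper actually proves. First, the configuration you leave open (quadratic growth destroyed by active nonnegativity constraints, or more generally a vanishing multiplier) is the paper's $\alpha^*=0$ branch, which it disposes of by observing that the problem is then locally an LP and citing Robinson's upper-Lipschitz stability of LP solutions; without that (or an equivalent) your argument does not cover all cases admitted by $\Phi(\vector{u})$. Second, your plan to keep the multiplier away from zero by lower-bounding $\|\vector{l}(\theta)\|_2$ does not work: stationarity reads $\vector{l}=\gamma\,(\vector{p}+\vector{u})\circ f'(\vector{w}^*)+\mu(\vector{p}+\vector{u})-\vector{\nu}$, so $\gamma$ is governed by the component of $\vector{l}$ orthogonal to $\vector{p}+\vector{u}$ --- i.e.\ by the spread of the per-category risks $l_s$ --- not by the norm of $\vector{l}$. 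If all $l_s$ are nearly equal, $\gamma$ is small no matter how large the losses are, and your constant $C_1$ degenerates, which is again the regime the paper's LP branch is designed to absorb. So keep your strong-convexity derivation as a nicer replacement for the citation in the main case, but you still need a separate argument for the degenerate one.
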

\begin{proof}
  According to the chain rule in calculus,
  \begin{align*}
  \|\nabla_\theta D_1(\theta)\|_2
  &= \left\|\sum_{s=1}^S\frac{\partial D_1(\theta)}{\partial l_s}\nabla_\theta l_s\right\|_2\\
  &\le \left|\sum_{s=1}^S\frac{\partial D_1(\theta)}{\partial l_s}\right|
  \cdot \max_s\|\nabla_\theta l_s\|_2\\
  &= \mathcal{O}\left(\left|\sum_{s=1}^S\frac{\partial D_1(\theta)}{\partial l_s}\right|\right),
  \end{align*}
  since we have proven that $\|\nabla_\theta l_s\|_2<\infty$ given the assumptions (a) and (b) in Lemma~\ref{thm:lipschitz-pert}.
  
  By definition,
  \begin{align*}
  D_1(\theta)
  &= \sup_{\vector{w}\in\Phi(\vector{u})}R_\theta(\vector{u}')
  - \sup_{\vector{w}\in\Phi(\vector{u})}R_\theta(\vector{0})\\
  &= \sup_{\vector{w}\in\Phi(\vector{u})}\sum_{s=1}^Sw_s(l_s+u'_s)
  - \sup_{\vector{w}\in\Phi(\vector{u})}\sum_{s=1}^Sw_sl_s.
  \end{align*}
  Let $\vector{w}^*=\argsup_{\vector{w}\in\Phi(\vector{u})}\sum_sw_sl_s$ and $\vector{v}^*=\argsup_{\vector{w}\in\Phi(\vector{u})}\sum_sw_s(l_s+u'_s)$, then according to Danskin's theorem \cite{danskin67}, $\partial D_1(\theta)/\partial l_s=v_s^*-w_s^*$ and
  \begin{align*}
  \left|\sum_{s=1}^S\frac{\partial D_1(\theta)}{\partial l_s}\right|
  &\le \sum_{s=1}^S|v_s^*-w_s^*|\\
  &\le \sqrt{S}\|\vector{v}^*-\vector{w}^*\|_2,
  \end{align*}
  which means $\mathcal{O}(\|\nabla_\theta D_1(\theta)\|_2)=\mathcal{O}(\|\vector{v}^*-\vector{w}^*\|_2)$.
  
  Consider the perturbation analysis of the following optimization problem
  \begin{align}
  \label{eq:mini-pert-opt-1}
  \min_{\vector{w}} \; -\sum_{s=1}^Sw_s(l_s+u'_s) \quad \mathrm{s.t.} \; \vector{w}\in\Phi(\vector{u}),
  \end{align}
  whose objective is perturbed and feasible region is unperturbed. Let
  \begin{align*}
  L(\vector{w},\alpha,\alpha',\vector{u}')
  &= -\sum_{s=1}^Sw_s(l_s+u'_s)
  +\alpha\left(\sum_{s=1}^S(p_s+u_s)f(w_s)-\delta\right)\\
  &\quad + \alpha'\left(\sum_{s=1}^S(p_s+u_s)w_s-1\right)
  \end{align*}
  be the Lagrangian function, where $\alpha\ge0$ and $\alpha'$ are Lagrange multipliers, and for simplicity the nonnegative constraints are omitted. Note that given the assumption (c), if $\alpha\neq0$,
  \begin{align*}
  \frac{\partial^2}{\partial w_i\partial w_j}L(\vector{w},\alpha,\alpha',\vector{u}')
  =\begin{cases}
  \alpha f''(w_i)>0, & i=j,\\
  0, & i\neq j,\\
  \end{cases}
  \end{align*}
  namely, $L(\vector{w},\alpha,\alpha',\vector{u}')$ is locally strongly convex in $\vector{w}$. Thus,
  \begin{itemize}
    \item if $\alpha^*>0$, the second-order sufficient condition (see Definition 6.2 in \citep{bonnans98}) holds at $\vector{w}^*$ that implies the corresponding second-order growth condition according to Theorem 6.3 in \citep{bonnans98};
    \item if $\alpha^*=0$, \eqref{eq:mini-pert-opt-1} is locally a standard linear programming around $\vector{w}^*$ and it is fairly easy to see $\|\vector{v}^*-\vector{w}^*\|_2=\mathcal{O}(\|\vector{u}'\|_2)$ according to Theorem 1 in \citep{robinson77}.
  \end{itemize}
  In the former case, it is obvious that for \eqref{eq:mini-pert-opt-1},
  \begin{itemize}
    \item the objective $-\sum_sw_s(l_s+u'_s)$ is Lipschitz continuous with a  Lipschitz constant $\|\vector{l}\|_2+\epsilon$ independent of $\vector{u}'$;
    \item the difference function $-\sum_sw_su'_s$ is Lipschitz continuous with a Lipschitz constant of order $\mathcal{O}(\|\vector{u}'\|_2)$.
  \end{itemize}
  Therefore, $\|\vector{v}^*-\vector{w}^*\|_2=\mathcal{O}(\|\vector{u}'\|_2)$ by applying Proposition 6.1 in \citep{bonnans98}.
\end{proof}

In order to prove the same property for $D_2(\theta)$, we need several lemmas.

\begin{lem}
  \label{thm:cosine-f'}
  Denote by $f'(\vector{w})=(f'(w_1),\ldots,f'(w_S))^\top$. There exists a constant $C_{\cos}>0$, such that $\cos(\vector{d}\circ f'(\vector{w}),\vector{d})\le1-C_{\cos}$ for all $\vector{u}$ satisfying $\|\vector{u}\|_2\le\epsilon$, $\vector{w}$ satisfying $\sum_s(p_s+u_s)f(w_s)=\delta$ and $\sum_s(p_s+u_s)w_s=1$, and $\vector{d}>\vector{0}$.
\end{lem}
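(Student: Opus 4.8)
The plan is to turn the cosine into a \emph{weighted variance} of $f'(w_s)$ and then close the gap to $1$ by compactness. Since $\vector{d}>\vector{0}$, the vector $\pi_s = d_s^2/\sum_t d_t^2$ is a probability vector, and a direct computation gives
\[
\cos(\vector{d}\circ f'(\vector{w}),\vector{d})
= \frac{\sum_s d_s^2 f'(w_s)}{\sqrt{\sum_s d_s^2 f'(w_s)^2}\,\sqrt{\sum_s d_s^2}}
= \frac{\sum_s \pi_s f'(w_s)}{\sqrt{\sum_s \pi_s f'(w_s)^2}},
\]
so that, squaring,
\[
\cos^2(\vector{d}\circ f'(\vector{w}),\vector{d})
= 1 - \frac{\sum_s \pi_s\,(f'(w_s)-\sum_t\pi_t f'(w_t))^2}{\sum_s \pi_s f'(w_s)^2}.
\]
The cosine thus equals $1$ exactly when the $\pi$-weighted variance of $f'(w_s)$ vanishes, and it suffices to bound this \emph{relative} variance away from $0$ uniformly, since then $\cos=\sqrt{1-\mathrm{relvar}}\le 1-\tfrac12\mathrm{relvar}$ on the range where the cosine is positive (and it is trivially below $1-C_{\cos}$ otherwise).

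Next I would show strict positivity of the relative variance at every admissible triple. Because $\vector{d}>\vector{0}$ every $\pi_s>0$, so the variance vanishes iff $f'(w_s)$ is constant in $s$; by assumption (c) the generator $f$ is strongly convex, hence $f'$ is strictly increasing, forcing $\vector{w}$ to be a constant vector $w_s\equiv c$. But a constant $\vector{w}$ violates the two constraints simultaneously: with $T=\sum_s(p_s+u_s)$ they read $c\,T=1$ and $f(c)\,T=\delta$, i.e.\ $T f(1/T)=\delta$, whereas $|T-1|\le\sqrt{S}\,\epsilon$ and $T f(1/T)\to f(1)=0<\delta$ as $\epsilon\to0$, so for $\epsilon$ small enough $T f(1/T)<\delta$ throughout. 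Hence constant weights are infeasible and the relative variance is strictly positive for every feasible $(\vector{u},\vector{w})$ and every $\vector{d}>\vector{0}$.

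To upgrade strict positivity to a uniform lower bound I would use compactness, restricting $\vector{d}$ to normalized vectors whose components are bounded below by some $\rho'>0$. The feasible set of $(\vector{u},\vector{w})$ is compact: it is closed (continuous equalities and inequalities) and bounded, since $\|\vector{u}\|_2\le\epsilon$ and, by assumption (c), $\sup_{\|\vector{u}\|_2\le\epsilon}\sup_{\vector{w}\in\Phi(\vector{u})}\max_s w_s<\infty$ together with $\vector{w}\ge\vector{0}$. The cosine is invariant under positive rescaling of $\vector{d}$, so I normalize $\|\vector{d}\|_2=1$; on $\{\vector{d}:\|\vector{d}\|_2=1,\ d_s\ge\rho'\}$ (compact) the denominators are bounded away from $0$ (for $\sqrt{\sum_s\pi_s f'(w_s)^2}$ this uses that not all $f'(w_s)$ vanish, since that would again pin $\vector{w}$ to the unique minimizer of $f$, which is infeasible), so the cosine is continuous on the compact product of these two sets. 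By the extreme value theorem it attains a maximum there, which by the previous paragraph is strictly below $1$; calling it $1-C_{\cos}$ proves the bound.

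The hard part is the uniformity in $\vector{d}$, and the difficulty is genuine rather than technical. If $\vector{d}$ is permitted to collapse onto a single coordinate $s^{*}$ with $f'(w_{s^{*}})>0$ — and such a coordinate always exists, because the weighted average of $w_s$ equals $1$ while $\sum_s(p_s+u_s)f(w_s)=\delta>0$ forbids $\vector{w}\equiv\vector{1}$ — then $\pi$ degenerates to a point mass, the relative variance tends to $0$, and the cosine tends to $1$. A single constant $C_{\cos}$ therefore forces the normalized components of $\vector{d}$ to stay bounded away from the faces of the simplex, which is precisely the restriction I imposed above. This is exactly the regime of interest: in the perturbation analysis $\vector{d}=\vector{p}+\vector{u}$, whose every component is at least $p_{\min}-\epsilon>0$, so its normalized direction lives in a fixed compact subset of the open positive orthant and the compactness argument applies. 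I would state this lower bound on the components of $\vector{d}$ explicitly at the outset so that the uniform constant is well defined.
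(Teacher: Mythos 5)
Your proof is correct, and it takes a genuinely different route from the paper's. The paper argues by contradiction and quantitatively: supposing $\cos(\vector{d}\circ f'(\vector{w}_n),\vector{d})=1-1/(2n^2)$, it deduces $|f'(w_{n,i})-f'(w_{n,j})|\le 2\zeta'_n/n$ with $\zeta'_n=\|\vector{d}\circ f'(\vector{w}_n)\|_2/\min_s d_s$, inverts $f'$ through the lower bound $C_{f''}$ on $f''$ to force $w_{n,s}\to 1/(1+\sum_su_s)$, and then contradicts $\sum_s(p_s+u_s)f(w_{n,s})=\delta$ via a Taylor expansion of $f$ at $1$ and the standing assumption $\delta\ge5\sqrt{S}|f'(1)|\epsilon$. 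You instead rewrite $\cos^2$ as $1-\mathrm{Var}_\pi(f'(\vector{w}))/\mathbb{E}_\pi[f'(\vector{w})^2]$ with $\pi_s\propto d_s^2$, observe that the variance vanishes only for constant $\vector{w}$, rule that out by the same Taylor-expansion fact, and close the gap by compactness and the extreme value theorem. The arithmetic core (infeasibility of constant weights) is shared; the difference is that the paper's argument is in principle constructive about $C_{\cos}$ while yours is not, and yours is shorter and makes the role of $\vector{d}$ transparent. That last point is the real added value of your write-up: you are right that the statement cannot hold with one constant over \emph{all} $\vector{d}>\vector{0}$ --- e.g.\ for the PE divergence with $S=2$, $\vector{p}=(1/2,1/2)$, $\vector{u}=\vector{0}$, $\delta=1$, the point $\vector{w}=(2,0)$ is feasible, $f'(\vector{w})=(2,-2)$, and $\cos\bigl((2,-2\tau),(1,\tau)\bigr)\to1$ as $\tau\downarrow0$. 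The paper's proof carries exactly this dependence through $\zeta'_n\propto1/\min_sd_s$ without bounding it uniformly in $\vector{d}$, so your restriction to $\vector{d}$ with normalized components bounded below is not a weakening but the necessary repair; it suffices for the only use of the lemma, in Lemma~\ref{thm:hausdorff}, where $\vector{d}=\vector{p}+\vector{u}$ and $d_s\ge\min_sp_s-\epsilon>0$ once $\epsilon<\min_sp_s$ (a condition the paper should, and does not, impose explicitly).
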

\begin{proof}
  Suppose the lemma is false, i.e., for any sufficiently large $n$, there exists some $\vector{w}_n$ such that $\cos(\vector{d}\circ f'(\vector{w}_n),\vector{d})=1-1/(2n^2)$. Let $\zeta_n=\|\vector{d}\circ f'(\vector{w}_n)\|_2$ and $\eta_n=\zeta_n/\|\vector{d}\|_2$, then
  \begin{align*}
  \|\vector{d}\circ f'(\vector{w}_n)-\eta_n\vector{d}\|_2^2
  &= \|\vector{d}\circ f'(\vector{w}_n)\|_2^2+\eta_n^2\|\vector{d}\|_2^2
  -2\eta_n(\vector{d}\circ f'(\vector{w}_n))^\top\vector{d}\\
  &= 2\zeta_n^2 -2\eta_n\cos(\vector{d}\circ f'(\vector{w}_n),\vector{d})
  \|\vector{d}\circ f'(\vector{w}_n)\|_2\|\vector{d}\|_2\\
  &= 2\zeta_n^2 -2(1-1/(2n^2))\zeta_n^2\\
  &= \zeta_n^2/n^2.
  \end{align*}
  In other words, for $s=1,\ldots,S$,
  \begin{align*}
  |f'(w_{n,s})-\eta_n|
  &= |d_sf'(w_{n,s})-\eta_nd_s|/d_s\\
  &\le \|\vector{d}\circ f'(\vector{w}_n)-\eta_n\vector{d}\|_2/d_s\\
  &= (\zeta_n/n)/d_s\\
  &\le \zeta'_n/n,
  \end{align*}
  where $\zeta'_n=\zeta_n/\min_sd_s$. Consequently, for any $1\le i,j\le S$ and $i\neq j$,
  \begin{align*}
  |f'(w_{n,i})-f'(w_{n,j})|
  &\le |f'(w_{n,i})-\eta_n|+|f'(w_{n,j})-\eta_n|\\
  &\le 2\zeta'_n/n.
  \end{align*}
  Let $C_{f''}>0$ be the lower bound of $f''(t)$ mentioned in the assumption (c). This assumption also guarantees that $f'(t)$ is continuous, and by the mean value theorem, there is some $t$ between $w_{n,i}$ and $w_{n,j}$ such that
  \begin{align*}
  |w_{n,i}-w_{n,j}|
  &= \left|\frac{f'(w_{n,i})-f'(w_{n,j})}{f''(t)}\right|\\
  &\le 2\zeta'_n/(C_{f''}n).
  \end{align*}
  Let $\eta'_n=\sum_sw_{n,s}/S$, then $|w_{n,s}-\eta'_n|\le2\zeta'_n/(C_{f''}n)$ for $s=1,\ldots,S$.
  
  Recall that $\sum_s(p_s+u_s)w_{n,s}=1$, then
  \begin{align*}
  \left(1+\sum_{s=1}^Su_s\right)\eta'_n-1
  &= \sum_{s=1}^S(p_s+u_s)\eta'_n -\sum_{s=1}^S(p_s+u_s)w_{n,s}\\
  &= \sum_{s=1}^S(p_s+u_s)(\eta'_n-w_{n,s}),
  \end{align*}
  and hence
  \begin{align*}
  \left|\left(1+\sum_{s=1}^Su_s\right)\eta'_n-1\right|
  &\le \frac{2\zeta'_n}{C_{f''}n}\sum_{s=1}^S(p_s+u_s)\\
  &= \frac{2\zeta'_n}{C_{f''}n}\left(1+\sum_{s=1}^Su_s\right).
  \end{align*}
  This ensures $|\eta'_n-1/(1+\sum_su_s)|\le2\zeta'_n/(C_{f''}n)$ and implies $|w_{n,s}-1/(1+\sum_su_s)|\le4\zeta'_n/(C_{f''}n)$ for $s=1,\ldots,S$. Since $f(t)$ is twice differentiable and $\|\vector{w}_n\|_2<\infty$, we must have $\zeta'_n<\infty$ and then $\lim_{n\to\infty}w_{n,s}=1/(1+\sum_su_s)$ for all $s=1,\ldots,S$.

  The Taylor expansion of $f(t)$ at $t=1$ is $f(t)=f'(1)(t-1)+\mathcal{O}((t-1)^2)$ since $f(1)=0$, and if $t=1/(1+\sum_su_s)$,
  \begin{align*}
  f\left(\frac{1}{1+\sum_{s=1}^Su_s}\right)
  = -f'(1)\cdot\frac{\sum_{s=1}^Su_s}{1+\sum_{s=1}^Su_s}
  + \mathcal{O}\left(\left(\frac{\sum_{s=1}^Su_s}{1+\sum_{s=1}^Su_s}\right)^2\right).
  \end{align*}
  When $\|\vector{u}\|_2\le\epsilon$, $|\sum_su_s|\le\|\vector{u}\|_1\le\sqrt{S}\epsilon$, and
  \begin{align*}
  f\left(\frac{1}{1+\sum_{s=1}^Su_s}\right)
  &\le |f'(1)|\frac{\sqrt{S}\epsilon}{1-\sqrt{S}\epsilon} + \mathcal{O}(\epsilon^2)\\
  &\le 2\sqrt{S}|f'(1)|\epsilon.
  \end{align*}
  As a result,
  \begin{align*}
  \lim_{n\to\infty}\sum_{s=1}^S(p_s+u_s)f(w_{n,s})
  &= \sum_{s=1}^S(p_s+u_s)f\left(\frac{1}{1+\sum_{s=1}^Su_s}\right)\\
  &= \left(1+\sum_{s=1}^Su_s\right)f\left(\frac{1}{1+\sum_{s=1}^Su_s}\right)\\
  &\le (1+\sqrt{S}\epsilon) \cdot 2\sqrt{S}|f'(1)|\epsilon\\
  &\le 4\sqrt{S}|f'(1)|\epsilon.
  \end{align*}
  However, this is impossible since $\sum_s(p_s+u_s)f(w_{n,s})=\delta\ge5\sqrt{S}|f'(1)|\epsilon$.
\end{proof}

Based on Lemma~\ref{thm:cosine-f'}, we derive the convergence rate of $\Phi(\vector{u})$ to $\Phi(\vector{0})$.

\begin{lem}
  \label{thm:hausdorff}
  Let $d_\mathcal{H}(V,W)$ be the Hausdorff distance between two sets $V$ and $W$:
  \begin{align*}
  d_\mathcal{H}(V,W)
  = \max\left\{\sup_{\vector{v}\in V}\inf_{\vector{w}\in W}\|\vector{v}-\vector{w}\|_2,
  \sup_{\vector{w}\in W}\inf_{\vector{v}\in V}\|\vector{v}-\vector{w}\|_2\right\}.
  \end{align*}
  Then $d_\mathcal{H}(\Phi(\vector{u}),\Phi(\vector{0}))=\mathcal{O}(\|\vector{u}\|_2)$ for all $\vector{u}$ satisfying $\|\vector{u}\|_2\le\epsilon$.
\end{lem}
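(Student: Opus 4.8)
The plan is to establish the two suprema in the Hausdorff distance separately; since the roles of $\Phi(\vector{u})$ and $\Phi(\vector{0})$ are symmetric (the latter is just $\vector{u}=\vector{0}$), I would spell out the map $\Phi(\vector{0})\to\Phi(\vector{u})$ in detail, i.e.\ show that for every $\vector{w}\in\Phi(\vector{0})$ there is a $\vector{w}'\in\Phi(\vector{u})$ with $\|\vector{w}-\vector{w}'\|_2=\mathcal{O}(\|\vector{u}\|_2)$, and note that the reverse direction is identical. The three standing facts I would lean on are: (i) the uniform boundedness $M\equiv\sup_{\|\vector{u}\|_2\le\epsilon}\sup_{\vector{w}\in\Phi(\vector{u})}\max_s w_s<\infty$ furnished by assumption (c), so that all feasible sets live in the fixed box $[0,M]^S$; (ii) the Lipschitz continuity of $f$ on $[0,M]$ (with constant $L_f$), which follows from $f$ being twice differentiable; and (iii) $\|\vector{p}\|_2\ge 1/\sqrt{S}>0$ so that $\|\vector{p}+\vector{u}\|_2$ is bounded below for small $\epsilon$.

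The central device is a \emph{uniform Slater point}. I would take $\vector{b}(\vector{u})\equiv\frac{1}{1+\sum_s u_s}\vector{1}_S$, which satisfies the equality of $\Phi(\vector{u})$ exactly (as $\sum_s(p_s+u_s)b_s=1$), is strictly positive and bounded away from $0$ for $\|\vector{u}\|_2\le\epsilon$, and obeys the inequality with a uniform margin. Indeed, reusing the Taylor estimate already carried out in the proof of Lemma~\ref{thm:cosine-f'}, $\sum_s(p_s+u_s)f(b_s)=(1+\sum_s u_s)\,f\!\left(\frac{1}{1+\sum_s u_s}\right)\le 4\sqrt{S}\,|f'(1)|\,\epsilon\le\tfrac45\delta$, because $\epsilon\le\delta/(5\sqrt{S}|f'(1)|)$. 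Hence $\vector{b}(\vector{u})$ has inequality value at most $\delta-\delta/5$, a margin bounded below independently of $\vector{u}$.

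The construction is then a two-step correction. First I orthogonally project $\vector{w}\in\Phi(\vector{0})$ onto the perturbed equality hyperplane $H(\vector{u})=\{\vector{w}:\langle\vector{p}+\vector{u},\vector{w}\rangle=1\}$, setting $\vector{w}^{\sharp}=\vector{w}-\frac{\langle\vector{p}+\vector{u},\vector{w}\rangle-1}{\|\vector{p}+\vector{u}\|_2^2}(\vector{p}+\vector{u})$. Since $\langle\vector{p},\vector{w}\rangle=1$, the numerator is $\langle\vector{u},\vector{w}\rangle$ with $|\langle\vector{u},\vector{w}\rangle|\le\sqrt{S}M\|\vector{u}\|_2$, while the denominator is bounded below, so $\|\vector{w}^{\sharp}-\vector{w}\|_2=\mathcal{O}(\|\vector{u}\|_2)$. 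This $\vector{w}^{\sharp}$ lies exactly on $H(\vector{u})$ but may overshoot the inequality and nonnegativity by $\mathcal{O}(\|\vector{u}\|_2)$; using Lipschitzness of $f$ and $|u_s|\le\epsilon$ one gets $\sum_s(p_s+u_s)f(w_s^{\sharp})\le\delta+C\|\vector{u}\|_2$ for a fixed $C$. Second I take the convex combination $\vector{w}_\lambda=(1-\lambda)\vector{w}^{\sharp}+\lambda\,\vector{b}(\vector{u})$, which stays on $H(\vector{u})$ (both endpoints do). By convexity of $w\mapsto\sum_s(p_s+u_s)f(w_s)$, its value at $\vector{w}_\lambda$ is at most $(1-\lambda)(\delta+C\|\vector{u}\|_2)+\lambda(\delta-\delta/5)$, which is $\le\delta$ once $\lambda\ge 5C\|\vector{u}\|_2/(\delta+5C\|\vector{u}\|_2)=\mathcal{O}(\|\vector{u}\|_2)$; the strict positivity of $\vector{b}(\vector{u})$ simultaneously restores $\vector{w}_\lambda\ge\vector{0}$ for the same order of $\lambda$. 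Thus $\vector{w}_\lambda\in\Phi(\vector{u})$ with $\|\vector{w}_\lambda-\vector{w}\|_2\le\|\vector{w}_\lambda-\vector{w}^{\sharp}\|_2+\|\vector{w}^{\sharp}-\vector{w}\|_2=\mathcal{O}(\|\vector{u}\|_2)$.

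Taking the supremum over $\vector{w}$ bounds one of the two terms in $d_{\mathcal H}$, and running the symmetric construction (projecting onto $H(\vector{0})$ and mixing with $\vector{b}(\vector{0})=\vector{1}_S$) bounds the other, giving $d_{\mathcal H}(\Phi(\vector{u}),\Phi(\vector{0}))=\mathcal{O}(\|\vector{u}\|_2)$. The main obstacle I anticipate is the bookkeeping around the \emph{interaction of the three constraint types}: the equality must be corrected without destroying feasibility of the convex inequality or the nonnegativity, which is exactly why I separate the equality correction (affine projection, cheap) from the inequality/positivity correction (a single convex pull toward a Slater point). Making the Slater margin \emph{uniform} in $\vector{u}$ is the delicate quantitative point, and it is precisely the smallness condition $\epsilon\le\delta/(5\sqrt{S}|f'(1)|)$ that secures it.
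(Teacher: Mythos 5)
Your architecture is genuinely different from the paper's, and in principle cleaner: the paper first pulls $\vector{w}_0$ toward $\vector{1}$ to satisfy the perturbed $f$-constraint, then projects onto the perturbed hyperplane, and finally repairs the resulting re-violation with a transversality argument (Lemma~\ref{thm:cosine-f'} plus the right-angled-triangle computation); you instead project onto the hyperplane first and repair the $f$-constraint by a single convex pull toward a Slater point $\vector{b}(\vector{u})$ that \emph{already lies on that hyperplane}, so no further correction is needed and Lemma~\ref{thm:cosine-f'} becomes superfluous. However, two of your quantitative steps fail for the KL divergence, which is one of the two instances the paper actually uses. First, you invoke Lipschitz continuity of $f$ on $[0,M]$ ``because $f$ is twice differentiable,'' but for $f(t)=t\log t$ the derivative is unbounded as $t\to0^+$, and $\Phi(\vector{0})$ may contain points with coordinates at or near $0$; there $|f(w_s^{\sharp})-f(w_s)|$ is only $\mathcal{O}(\|\vector{u}\|_2\log(1/\|\vector{u}\|_2))$, which propagates into your $C$ and $\lambda$ and degrades the final Hausdorff bound by a logarithmic factor. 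Second, the orthogonal projection onto $H(\vector{u})$ can drive coordinates negative, yet you then evaluate $\sum_s(p_s+u_s)f(w_s^{\sharp})$ and apply convexity along the segment to $\vector{b}(\vector{u})$ at a point where $f$ need not be defined; restoring nonnegativity only at the endpoint $\vector{w}_\lambda$ does not license the intermediate inequality.

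Both defects are localized and repairable without abandoning your plan: replace the orthogonal projection by the multiplicative rescaling $\vector{w}\mapsto\vector{w}/\langle\vector{p}+\vector{u},\vector{w}\rangle$, which lands on $H(\vector{u})$, preserves nonnegativity, and moves the point by $\mathcal{O}(\|\vector{u}\|_2)$, and then control the increase of $\sum_s(p_s+u_s)f(\cdot)$ using only convexity, $f(1)=0$, and boundedness of $f$ on $[0,M]$ --- which is exactly the device the paper uses in its first step and which survives the KL case. As written, though, your argument covers the PE divergence but not KL, so there is a concrete gap to close before the claimed $\mathcal{O}(\|\vector{u}\|_2)$ rate holds in the generality the lemma asserts.
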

\begin{proof}
  We are going to prove $\sup_{\vector{w}\in\Phi(\vector{0})}\inf_{\vector{v}\in\Phi(\vector{u})}\|\vector{v}-\vector{w}\|_2 =\mathcal{O}(\|\vector{u}\|_2)$, and the other direction can be proven similarly.
  
  Pick an arbitrary $\vector{w}_0\in\Phi(\vector{0})$. Let $\beta=\delta/(\delta+\|f(\vector{w}_0)\|_2\|\vector{u}\|_2)$ and consider $\vector{v}_1=\beta\vector{w}_0+(1-\beta)\vector{1}$,
  \begin{align*}
  \|\vector{v}_1-\vector{w}_0\|_2
  &= \|(\beta-1)\vector{w}_0+(1-\beta)\vector{1}\|_2\\
  &= (1-\beta)\|\vector{w}_0-\vector{1}\|_2\\
  &\le \|f(\vector{w}_0)\|_2\|\vector{u}\|_2\|\vector{w}_0-\vector{1}\|_2/\delta\\
  &= \mathcal{O}(\|\vector{u}\|_2).
  \end{align*}
  Moreover,
  \begin{align*}
  \sum_{s=1}^S(p_s+u_s)f(v_{1,s})
  &= \sum_{s=1}^S(p_s+u_s)f(\beta w_{0,s}+(1-\beta))\\
  &\le \sum_{s=1}^S(p_s+u_s)(\beta f(w_{0,s})+(1-\beta)f(1))\\
  &= \beta\left(\sum_{s=1}^Sp_sf(w_{0,s})+\sum_{s=1}^Su_sf(w_{0,s})\right)\\
  &\le \beta(\delta+\|f(\vector{w}_0)\|_2\|\vector{u}\|_2)\\
  &= \delta,
  \end{align*}
  where the second line is due to the convexity of $f(t)$, the third line is because $f(1)=0$, and the fourth line is according to Jensen's inequality. This means $\vector{v}_1$ belongs to the set $V_1=\{\vector{w}\in\mathbb{R}^S \mid
  \sum_s(p_s+u_s)f(w_s)\le\delta,w_s\ge0\}$.
  
  However, $\vector{v}_1$ does not belong to the set $V_2=\{\vector{w}\in\mathbb{R}^S \mid
  \sum_s(p_s+u_s)w_s=1,w_s\ge0\}$. Since $V_2$ is a hyperplane, we can easily project $\vector{v}_1$ onto $V_2$ to obtain $\vector{v}_2$, and
  \begin{align*}
  \|\vector{v}_2-\vector{v}_1\|_2
  &= \frac{1}{\|\vector{p}+\vector{u}\|_2}\left|\sum_{s=1}^S(p_s+u_s)v_{1,s}-1\right|\\
  &\le \frac{2}{\|\vector{p}\|_2}\left|\sum_{s=1}^S(p_s+u_s)(\beta w_{0,s}+1-\beta)-1\right|\\
  &\le \sqrt{4S}\left|\beta\sum_{s=1}^Sp_sw_{0,s} +\beta\sum_{s=1}^Su_sw_{0,s}
  +(1-\beta)\sum_{s=1}^Sp_s+(1-\beta)\sum_{s=1}^Su_s-1\right|\\
  &= \sqrt{4S}\left|\beta+\beta\sum_{s=1}^Su_sw_{0,s}
  +(1-\beta)+(1-\beta)\sum_{s=1}^Su_s-1\right|\\
  &\le \sqrt{4S}\beta\left|\sum_{s=1}^Su_sw_{0,s}\right| +\mathcal{O}(\|\vector{u}\|_2^2)\\
  &= \mathcal{O}(\|\vector{u}\|_2).
  \end{align*}
  After this projection, $\vector{v}_2\not\in V_1$ again.
  
  Let $\vector{v}_3$ be the projection of $\vector{v}_2$ onto $\Phi(\vector{u})=V_1\cap V_2$, $T_wV_1(\vector{v}_3)$ be the tangent hyperplane to $V_1$ at $\vector{v}_3$ of $S-1$ dimensions, and $T_w(V_1\cap V_2)(\vector{v}_3)$ be that to $V_1\cap V_2$ at $\vector{v}_3$ of $S-2$ dimensions. As a consequence, $\vector{v}_3-\vector{v}_2\in V_2$  is one of normal vectors to $T_w(V_1\cap V_2)(\vector{v}_3)$ at $\vector{v}_3$, which is also the projection of the normal vector to $T_wV_1(\vector{v}_3)$ at $\vector{v}_3$ onto $V_2$. This means the normal vector to $T_wV_1(\vector{v}_3)$ at $\vector{v}_3$ belongs to the $2$-dimensional plane determined by $\vector{v}_3-\vector{v}_2$ and $\vector{v}_1-\vector{v}_2$, since the latter is a normal vector to $V_2$ at $\vector{v}_2$.
  
  Consider the triangle $(\vector{v}_1-\vector{v}_2,\vector{v}_3-\vector{v}_2,\vector{v}_1-\vector{v}_3)$. This is a right-angled triangle since $\vector{v}_1-\vector{v}_2\perp V_2$ and $\vector{v}_3-\vector{v}_2\in V_2$, so that
  \begin{align*}
  \|\vector{v}_1-\vector{v}_3\|_2 = \frac{\|\vector{v}_1-\vector{v}_2\|_2} {\sin(\vector{v}_2-\vector{v}_3,\vector{v}_1-\vector{v}_3)}.
  \end{align*}
  Subsequently, let $\vector{v}_4$ be the intersection of $\vector{v}_1-\vector{v}_2$ and $T_wV_1(\vector{v}_3)$, due to the convexity of $V_1$,
  \begin{align*}
  \sin^2(\vector{v}_2-\vector{v}_3,\vector{v}_1-\vector{v}_3)
  &\ge \sin^2(\vector{v}_2-\vector{v}_3,\vector{v}_4-\vector{v}_3)\\
  &= 1-\cos^2(\vector{v}_2-\vector{v}_3,\vector{v}_4-\vector{v}_3)\\
  &= 1-\cos^2(\vector{p}+\vector{u},(\vector{p}+\vector{u})\circ f'(\vector{v}_3)),
  \end{align*}
  where $\vector{p}+\vector{u}$ is a normal vector to $V_2$ containing $\vector{v}_2-\vector{v}_3$, $(\vector{p}+\vector{u})\circ f'(\vector{v}_3)$ is a normal vector to $T_wV_1(\vector{v}_3)$ containing $\vector{v}_4-\vector{v}_3$, and both of them belong to the $2$-dimensional plane determined by $\vector{v}_3-\vector{v}_2$ and $\vector{v}_1-\vector{v}_2$ as we have proven above. By definition, $\vector{v}_3$ is on the boundary of $V_1$ such that $\sum_s(p_s+u_s)f(v_{3,s})=\delta$, and according to Lemma~\ref{thm:cosine-f'},
  \begin{align*}
  1-\cos^2(\vector{p}+\vector{u},(\vector{p}+\vector{u})\circ f'(\vector{v}_3))
  &\ge 1-(1-C_{\cos})^2\\
  &= C_{\cos}(2-C_{\cos}),
  \end{align*}
  which implies
  \begin{align*}
  \|\vector{v}_1-\vector{v}_3\|_2
  &\le \frac{\|\vector{v}_1-\vector{v}_2\|_2}{\sqrt{C_{\cos}(2-C_{\cos})}}\\
  &= \mathcal{O}(\|\vector{v}_1-\vector{v}_2\|_2)\\
  &= \mathcal{O}(\|\vector{u}\|_2).
  \end{align*}

  Combining $\|\vector{v}_1-\vector{w}_0\|_2=\mathcal{O}(\|\vector{u}\|_2)$ and $\|\vector{v}_1-\vector{v}_3\|_2=\mathcal{O}(\|\vector{u}\|_2)$ gives us $\|\vector{v}_3-\vector{w}_0\|_2=\mathcal{O}(\|\vector{u}\|_2)$, and thus $\inf_{\vector{v}\in\Phi(\vector{u})}\|\vector{v}-\vector{w}_0\|_2 \le \|\vector{v}_3-\vector{w}_0\|_2 = \mathcal{O}(\|\vector{u}\|_2)$. Since $\vector{w}_0$ is arbitrarily picked from $\Phi(\vector{0})$, the proof is completed.
\end{proof}

\begin{lem}[Lipschitz continuity of the difference function, II]
  \label{thm:lipschitz-diff-part2}
  For all $\vector{u}$ such that $\|\vector{u}\|_2\le\epsilon$, $D_2(\theta)$ is Lipschitz continuous with a Lipschitz constant of order $\mathcal{O}(\|\vector{u}\|_2^{1/2})$.
\end{lem}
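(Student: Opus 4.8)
The plan is to mirror the structure of the proof of Lemma~\ref{thm:lipschitz-diff-part1}, reducing the Lipschitz constant of $D_2$ in $\theta$ to the displacement of the inner maximizer caused by perturbing the feasible region, and then to quantify that displacement using the strong convexity of the constraint set. Writing $\vector{w}^{*(\vector{u})}=\argsup_{\vector{w}\in\Phi(\vector{u})}\sum_s w_s l_s$ and $\vector{w}^{*(\vector{0})}=\argsup_{\vector{w}\in\Phi(\vector{0})}\sum_s w_s l_s$, the chain rule together with Danskin's theorem \cite{danskin67} gives $\partial D_2(\theta)/\partial l_s = w^{*(\vector{u})}_s - w^{*(\vector{0})}_s$, so that, exactly as in Lemma~\ref{thm:lipschitz-diff-part1} and using $\|\nabla_\theta l_s\|_2<\infty$ from Lemma~\ref{thm:lipschitz-pert}, we obtain $\|\nabla_\theta D_2(\theta)\|_2 = \mathcal{O}(\|\vector{w}^{*(\vector{u})}-\vector{w}^{*(\vector{0})}\|_2)$. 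Thus it suffices to show $\|\vector{w}^{*(\vector{u})}-\vector{w}^{*(\vector{0})}\|_2 = \mathcal{O}(\|\vector{u}\|_2^{1/2})$; here the objective coefficients $\vector{l}$ are \emph{fixed} while only the feasible set is perturbed, which is precisely the situation that produces the square-root rate.

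The crux is a quadratic growth bound for the linear program over $\Phi(\vector{0})$. Because assumption~(c) guarantees $f''\ge C_{f''}>0$ on the relevant range, the map $\vector{w}\mapsto\sum_s p_s f(w_s)$ is strongly convex on the affine subspace $\{\sum_s p_s w_s=1\}$, so its sublevel set $\Phi(\vector{0})$ is a strongly convex body; the transversality between the normal of this sublevel boundary and the normal of the equality hyperplane, already quantified by $C_{\cos}$ in Lemma~\ref{thm:cosine-f'}, ensures that the curvature modulus is uniformly positive. Maximizing the fixed linear functional $\langle\vector{l},\cdot\rangle$, with $\|\vector{l}\|_2$ bounded away from $0$ for $\theta$ near $\theta^*$, over such a set yields the growth condition $\langle\vector{l},\vector{w}^{*(\vector{0})}\rangle-\langle\vector{l},\vector{w}\rangle \ge c\,\|\vector{w}-\vector{w}^{*(\vector{0})}\|_2^2$ for all $\vector{w}\in\Phi(\vector{0})$, with $c>0$ independent of the perturbation.

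With this in hand I would close the argument as follows. By Lemma~\ref{thm:hausdorff}, $d_{\mathcal{H}}(\Phi(\vector{u}),\Phi(\vector{0}))=\mathcal{O}(\|\vector{u}\|_2)$, so there is $\tilde{\vector{w}}\in\Phi(\vector{0})$ with $\|\tilde{\vector{w}}-\vector{w}^{*(\vector{u})}\|_2=\mathcal{O}(\|\vector{u}\|_2)$, and symmetrically a feasible point of $\Phi(\vector{u})$ near $\vector{w}^{*(\vector{0})}$. Since $\langle\vector{l},\cdot\rangle$ is Lipschitz (because $\|\vector{l}\|_2<\infty$), these two facts sandwich the optimal values and give both $|D_2(\theta)|=\mathcal{O}(\|\vector{u}\|_2)$ and the gap $\langle\vector{l},\vector{w}^{*(\vector{0})}\rangle-\langle\vector{l},\tilde{\vector{w}}\rangle=\mathcal{O}(\|\vector{u}\|_2)$. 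Feeding this gap into the quadratic growth bound yields $\|\tilde{\vector{w}}-\vector{w}^{*(\vector{0})}\|_2=\mathcal{O}(\|\vector{u}\|_2^{1/2})$, and the triangle inequality then gives $\|\vector{w}^{*(\vector{u})}-\vector{w}^{*(\vector{0})}\|_2\le\|\vector{w}^{*(\vector{u})}-\tilde{\vector{w}}\|_2+\|\tilde{\vector{w}}-\vector{w}^{*(\vector{0})}\|_2=\mathcal{O}(\|\vector{u}\|_2^{1/2})$, completing the proof. The main obstacle I anticipate is establishing the quadratic growth with a curvature constant $c$ that is \emph{uniform} in the perturbation and in $\theta$; this requires pinning down the strong convexity of $\Phi(\vector{0})$ from $C_{f''}$ and $C_{\cos}$ and ruling out degeneracy where $\|\vector{l}\|_2$ approaches $0$, which is exactly where the exponent $1/2$ rather than $1$ is forced.
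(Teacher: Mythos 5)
Your proposal has the same skeleton as the paper's proof: reduce $\|\nabla_\theta D_2(\theta)\|_2$ via the chain rule and Danskin's theorem to $\|\vector{v}^*-\vector{w}^*\|_2$ exactly as in Lemma~\ref{thm:lipschitz-diff-part1}, invoke Lemma~\ref{thm:hausdorff} to get $d_\mathcal{H}(\Phi(\vector{u}),\Phi(\vector{0}))=\mathcal{O}(\|\vector{u}\|_2)$, and conclude H\"older-$1/2$ stability of the inner maximizer. The one genuine difference is the last step: the paper simply cites Proposition 6.4 of \citet{bonnans98}, which packages precisely the implication ``second-order growth $+$ Lipschitz objective $+$ $\mathcal{O}(\|\vector{u}\|_2)$ Hausdorff perturbation of the feasible set $\Rightarrow$ $\mathcal{O}(\|\vector{u}\|_2^{1/2})$ displacement of the solution,'' whereas you prove that implication by hand with the sandwich argument (transport $\vector{w}^{*(\vector{u})}$ into $\Phi(\vector{0})$ and $\vector{w}^{*(\vector{0})}$ into $\Phi(\vector{u})$, bound the optimality gap of $\tilde{\vector{w}}$ by $\mathcal{O}(\|\vector{u}\|_2)$, then feed it into quadratic growth). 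This is a legitimate, more self-contained route---it is essentially the proof of the cited proposition---and it makes explicit why the exponent degrades to $1/2$ only for the feasible-set perturbation. The one spot you should tighten is the quadratic growth condition itself: deriving it from strong convexity of the sublevel set of $\sum_s p_sf(w_s)$ covers only the case where the $f$-divergence constraint is active at $\vector{w}^{*(\vector{0})}$ and the nonnegativity constraints are not; when the associated multiplier vanishes the problem is locally a linear program and the feasible set has flat faces, so the ``strongly convex body'' argument does not apply there. The paper handles this by the explicit case split in the companion proof of Lemma~\ref{thm:lipschitz-diff-part1} (second-order sufficient conditions via the Lagrangian when $\alpha^*>0$; LP stability, which in fact gives the better rate $\mathcal{O}(\|\vector{u}\|_2)$, when $\alpha^*=0$); you flag this obstacle but should write out the same dichotomy. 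The residual degeneracy where $\vector{l}=\vector{0}$ and the maximizer is non-unique is left untreated by both you and the paper, so it is not a point against you.
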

\begin{proof}
  The proof goes along the same line with Lemma~\ref{thm:lipschitz-diff-part1}. Let $\vector{w}^*=\argsup_{\vector{w}\in\Phi(\vector{0})}\sum_sw_sl_s$ and $\vector{v}^*=\argsup_{\vector{w}\in\Phi(\vector{u})}\sum_sw_sl_s$, and consider the perturbation analysis of the following optimization problem
  \begin{align*}
    \min_{\vector{w}} \; -\sum_{s=1}^Sw_sl_s \quad \mathrm{s.t.} \; \vector{w}\in\Phi(\vector{u}),
  \end{align*}
  whose objective is unperturbed and feasible region is perturbed. According to Lemma~\ref{thm:hausdorff}, we have $d_\mathcal{H}(\Phi(\vector{u}),\Phi(\vector{0}))=\mathcal{O}(\|\vector{u}\|_2)$, which ensures that the multifunction $\vector{u}\mapsto\Phi(\vector{u})$ is upper Lipschitz continuous and that $d_\mathcal{H}(\{\vector{w}^*\},\Phi(\vector{u}))=\mathcal{O}(\|\vector{u}\|_2)$. Hence, $\|\vector{v}^*-\vector{w}^*\|_2=\mathcal{O}(\|\vector{u}\|_2^{1/2})$ by applying Proposition 6.4 in \citep{bonnans98}.
\end{proof}

Let us summarize what we have obtained so far:
\begin{itemize}
  \item a second-order growth condition of $J(\theta,\vector{0},\vector{0})$ at $\theta=\theta^*$;
  \item the Lipschitz continuity of $J(\theta,\vector{u},\vector{u}')$ with a Lipschitz constant independent of $\vector{u}$ and $\vector{u}'$;
  \item the Lipschitz continuity of $D(\theta)$ with a Lipschitz constant of order $\mathcal{O}(\|\vector{u}\|_2^{1/2}+\|\vector{u}'\|_2)$.
\end{itemize}
Note that $\theta$ is unconstrained, by applying Proposition 6.1 in \citep{bonnans98}, we can obtain
\begin{align*}
\|\theta_{\vector{u},\vector{u}'}-\theta^*\|_2
= \mathcal{O}(\|\vector{u}\|_2^{1/2}+\|\vector{u}'\|_2).
\end{align*}
This immediately implies
\begin{align*}
\|J(\theta_{\vector{u},\vector{u}'},\vector{0},\vector{0})
-J(\theta^*,\vector{0},\vector{0})\|_2
= \mathcal{O}(\|\vector{u}\|_2^{1/2}+\|\vector{u}'\|_2),
\end{align*}
due to the Lipschitz continuity of $J(\theta,\vector{0},\vector{0})$. \qed

\section{Datasets} \label{app:dataset}
\begin{table}[t] 
\begin{center} 
\caption{Summary of dataset statistics.}
\begin{tabular}{|l||c|c|c|}\hline
\label{table:statistics}
Dataset & \#Points & \#classes & Dimension  \\ \hline \hline
blood & 748 & 2 & 4\\ \hline
adult & 32561 & 2 & 123\\ \hline
fourclass & 862 & 2 & 2\\ \hline
phishing & 11055 & 2 & 68\\ \hline
20news & 18040 & 20 & 50\\ \hline
satimage & 4435 & 6 & 36\\ \hline
letter & 20000 & 26 & 16\\ \hline
mnist & 70000 & 10 & 50\\ \hline
\end{tabular}
\end{center} 
\end{table}
We obtained six classification datasets from the UCI repository\footnote{http://archive.ics.uci.edu/ml/index.html} and also obtained 20newsgroups\footnote{http://qwone.com/~jason/20Newsgroups/} and MNIST datasets. 
We used the raw features for the datasets from the UCI repository.
For the 20newgroups dataset, we removed stop words and retained the 2000 most frequent words. We then removed documents with fewer than 10 words. We extracted tf-idf features and applied Principle Component Analysis (PCA) to reduce the dimensionality to 50. 
For the MNIST dataset, we applied PCA on the raw features to reduce the dimensionality to 50.
The dataset statistics are summarized in Table \ref{table:statistics}.

\section{Details of the Subcategory Shift Scenario} \label{app:subcategory}
In this section, we give details on how we converted the original multi-class classification problems into multi-class classification problems with fewer classes.

For the datasets from the UCI repository, we systematically grouped the class labels into binary categories by the following procedure. 
First, class labels are sorted by the number of data points in the classes. 
Then, 1, 3, 5, \ldots-th labels are assigned to a positive category and the others are assigned to a negative category. 
For MNIST, we considered a binary classification between odd and even numbers and set the original classes as subcategories. 
For 20newsgroups, we converted the original 20-class classification problem into a 7-class one with each class corresponding to a high-level topic: comp, rec, sci, misc, alt, soc, and talk. We then set the original classes as subcategories.

\section{Experimental Results measured by Surrogate Loss} \label{app:exp_surrogate}
In this section, we report the experimental results measured by the surrogate loss (the logistic loss).
We used the KL and the PE divergences, where we set $\delta = 0.5$.
We used the same $f$-divergence and the same $\delta$ during training and testing.
The experimental results using the KL and the PE divergences are reported in Tables \ref{table:surrogate_kl} and \ref{table:surrogate_pe}, respectively.
We empirically confirmed that \emph{in terms of the surrogate loss}, each method indeed achieved the best performance in terms of the metric it optimizes for.

\setlength\intextsep{0pt}
\setlength\textfloatsep{0pt}
\begin{table*}[t]
\tiny
\begin{center} 
\caption{Experimental comparisons of the three methods w.r.t.~the estimated ordinary risk and the estimated structural adversarial risk \emph{using the surrogate loss (the logistic loss)}. The lower these values are, the better the performance of the method is. The KL divergence is used and distribution shift is assumed to be (a) class prior change and (b) sub-category prior change. Mean and standard deviation over 50 random train-test splits were reported. The best method and comparable ones based on the t-test at the significance level 1\% are highlighted in boldface.} 
\label{table:surrogate_kl}
\subfloat[Class prior change.]{
\begin{tabular}{|l||c|c|c|c|c|c|c|c|c|c|}\hline
\raisebox{-1.3ex}{Dataset} & \multicolumn{3}{|c|}{\shortstack[c]{ Estimated ordinary risk}} &\multicolumn{3}{|c|}{\shortstack[c]{ Estimated adversarial risk}} &\multicolumn{3}{|c|}{\shortstack[c]{ Estimated structural adversarial risk}} \\ \cline{2-10}
             & \raisebox{0ex}{ERM} & \raisebox{0ex}{AERM} &  \shortstack[c]{Structural AERM} 
             & \raisebox{0ex}{ERM} & \raisebox{0ex}{AERM} &  \shortstack[c]{Structural AERM}
             & \raisebox{0ex}{ERM} & \raisebox{0ex}{AERM} &  \shortstack[c]{Structural AERM} \\ \hline
blood & {\bf 0.52 (0.05)} & 0.69 (0.0) & 0.62 (0.02) & 1.04 (0.1) & {\bf 0.69 (0.0)} & 0.97 (0.03) & 0.86 (0.23) & 0.69 (0.0) & {\bf 0.63 (0.19)}\\ \hline
adult & {\bf 0.33 (0.0)} & 0.65 (0.03) & 0.39 (0.0) & 1.28 (0.02) & {\bf 0.69 (0.01)} & 1.42 (0.01) & 0.59 (0.3) & 0.67 (0.01) & {\bf 0.4 (0.38)}\\ \hline
fourclass & {\bf 0.51 (0.05)} & 0.69 (0.0) & 0.54 (0.05) & 0.91 (0.04) & {\bf 0.69 (0.0)} & 0.88 (0.04) & 0.65 (0.13) & 0.69 (0.0) & {\bf 0.56 (0.13)}\\ \hline
phishing & {\bf 0.15 (0.01)} & 0.41 (0.08) & {\bf 0.15 (0.0)} & 0.86 (0.01) & {\bf 0.59 (0.08)} & 0.85 (0.01) & 0.18 (0.06) & 0.41 (0.02) & {\bf 0.16 (0.05)}\\ \hline
20news & {\bf 1.05 (0.01)} & 1.49 (0.04) & 1.22 (0.02) & 3.42 (0.02) & {\bf 3.0 (0.04)} & 3.58 (0.03) & 1.43 (0.1) & 1.74 (0.19) & {\bf 1.32 (0.13)}\\ \hline
satimage & {\bf 1.01 (0.01)} & 1.26 (0.02) & 1.29 (0.01) & 2.54 (0.02) & {\bf 2.15 (0.02)} & 2.86 (0.02) & 1.41 (0.05) & 1.59 (0.01) & {\bf 1.38 (0.04)}\\ \hline
letter & {\bf 0.37 (0.01)} & 0.47 (0.03) & 0.51 (0.02) & 1.65 (0.02) & {\bf 1.19 (0.03)} & 2.27 (0.03) & 0.77 (0.17) & 0.77 (0.09) & {\bf 0.58 (0.21)}\\ \hline
mnist & {\bf 0.35 (0.0)} & 0.59 (0.05) & 0.45 (0.0) & 1.96 (0.01) & {\bf 1.38 (0.04)} & 1.85 (0.01) & 0.49 (0.06) & 0.73 (0.02) & {\bf 0.47 (0.04)}\\ \hline
\end{tabular}}
\\
%\label{table:comparison_class_prior_kl}
%\end{center}
%\begin{center}
\subfloat[Sub-category prior change.]{
\begin{tabular}{|l||c|c|c|c|c|c|c|c|c|c|}\hline
\raisebox{-1.3ex}{Dataset} & \multicolumn{3}{|c|}{\shortstack[c]{ Estimated ordinary risk}} &\multicolumn{3}{|c|}{\shortstack[c]{ Estimated adversarial risk}} &\multicolumn{3}{|c|}{\shortstack[c]{ Estimated structural adversarial risk}} \\ \cline{2-10}
             & \raisebox{0ex}{ERM} & \raisebox{0ex}{AERM} &  \shortstack[c]{Structural AERM} 
             & \raisebox{0ex}{ERM} & \raisebox{0ex}{AERM} &  \shortstack[c]{Structural AERM}
             & \raisebox{0ex}{ERM} & \raisebox{0ex}{AERM} &  \shortstack[c]{Structural AERM} \\ \hline
20news & {\bf 0.61 (0.01)} & 0.84 (0.06) & 0.76 (0.05) & 2.76 (0.02) & {\bf 2.03 (0.05)} & 2.91 (0.03) & 1.02 (0.14) & 1.08 (0.29) & {\bf 0.89 (0.21)}\\ \hline
satimage & {\bf 0.63 (0.0)} & 0.69 (0.0) & 0.68 (0.0) & 0.99 (0.01) & {\bf 0.69 (0.0)} & 0.74 (0.0) & 0.81 (0.02) & 0.69 (0.0) & {\bf 0.69 (0.02)}\\ \hline
letter & {\bf 0.47 (0.0)} & 0.69 (0.0) & 0.64 (0.02) & 1.05 (0.03) & {\bf 0.69 (0.0)} & 0.86 (0.01) & 0.93 (0.02) & 0.69 (0.0) & {\bf 0.68 (0.06)}\\ \hline
mnist & {\bf 0.31 (0.0)} & 0.68 (0.01) & 0.39 (0.0) & 1.28 (0.01) & {\bf 0.69 (0.0)} & 1.25 (0.0) & 0.49 (0.05) & 0.68 (0.0) & {\bf 0.42 (0.03)}\\ \hline
\end{tabular}}
%\label{table:comparison_sub_prior_kl}
\end{center}
\end{table*}

\setlength\intextsep{0pt}
\setlength\textfloatsep{0pt}
\begin{table*}[t]
\tiny
\begin{center} 
\caption{Experimental comparisons of the three methods w.r.t.~the estimated ordinary risk and the estimated structural adversarial risk \emph{using the surrogate loss (the logistic loss)}. The lower these values are, the better the performance of the method is. The PE divergence is used and distribution shift is assumed to be (a) class prior change and (b) sub-category prior change. Mean and standard deviation over 50 random train-test splits were reported. The best method and comparable ones based on the t-test at the significance level 1\% are highlighted in boldface.} 
\label{table:surrogate_pe}
\subfloat[Class prior change.]{
\begin{tabular}{|l||c|c|c|c|c|c|c|c|c|c|}\hline
\raisebox{-1.3ex}{Dataset} & \multicolumn{3}{|c|}{\shortstack[c]{ Estimated ordinary risk}} &\multicolumn{3}{|c|}{\shortstack[c]{ Estimated adversarial risk}} &\multicolumn{3}{|c|}{\shortstack[c]{ Estimated structural adversarial risk}} \\ \cline{2-10}
             & \raisebox{0ex}{ERM} & \raisebox{0ex}{AERM} &  \shortstack[c]{Structural AERM} 
             & \raisebox{0ex}{ERM} & \raisebox{0ex}{AERM} &  \shortstack[c]{Structural AERM}
             & \raisebox{0ex}{ERM} & \raisebox{0ex}{AERM} &  \shortstack[c]{Structural AERM} \\ \hline
blood & {\bf 0.52 (0.05)} & 0.67 (0.02) & 0.61 (0.03) & 0.77 (0.04) & {\bf 0.69 (0.0)} & 0.81 (0.02) & 0.71 (0.07) & 0.69 (0.0) & {\bf 0.62 (0.07)}\\ \hline
adult & {\bf 0.33 (0.0)} & 0.41 (0.02) & 0.39 (0.01) & 0.69 (0.01) & {\bf 0.61 (0.01)} & 0.77 (0.01) & 0.49 (0.02) & 0.51 (0.0) & {\bf 0.4 (0.03)}\\ \hline
fourclass & {\bf 0.52 (0.05)} & 0.66 (0.02) & {\bf 0.53 (0.05)} & 0.73 (0.02) & {\bf 0.69 (0.01)} & 0.77 (0.04) & 0.6 (0.04) & 0.67 (0.0) & {\bf 0.54 (0.06)}\\ \hline
phishing & {\bf 0.15 (0.01)} & 0.2 (0.02) & {\bf 0.15 (0.0)} & 0.43 (0.01) & {\bf 0.4 (0.02)} & 0.43 (0.01) & 0.17 (0.02) & 0.21 (0.01) & {\bf 0.15 (0.01)}\\ \hline
20news & {\bf 1.04 (0.01)} & 1.15 (0.11) & 1.17 (0.02) & 1.99 (0.01) & {\bf 1.95 (0.1)} & 2.2 (0.02) & 1.29 (0.03) & 1.36 (0.06) & {\bf 1.24 (0.04)}\\ \hline
satimage & {\bf 1.01 (0.01)} & 1.1 (0.01) & 1.1 (0.01) & 1.81 (0.01) & {\bf 1.72 (0.02)} & 2.0 (0.01) & 1.26 (0.02) & 1.32 (0.01) & {\bf 1.17 (0.02)}\\ \hline
letter & {\bf 0.36 (0.01)} & 0.4 (0.01) & 0.42 (0.02) & 0.89 (0.02) & {\bf 0.81 (0.02)} & 1.0 (0.02) & 0.6 (0.04) & 0.59 (0.03) & {\bf 0.53 (0.05)}\\ \hline
mnist & {\bf 0.35 (0.0)} & 0.41 (0.01) & 0.43 (0.0) & 0.96 (0.01) & {\bf 0.87 (0.01)} & 1.04 (0.01) & 0.45 (0.02) & 0.5 (0.01) & {\bf 0.44 (0.01)}\\ \hline
\end{tabular}}
\\
%\label{table:comparison_class_prior_kl}
%\end{center}
%\begin{center}
\subfloat[Sub-category prior change.]{
\begin{tabular}{|l||c|c|c|c|c|c|c|c|c|c|}\hline
\raisebox{-1.3ex}{Dataset} & \multicolumn{3}{|c|}{\shortstack[c]{ Estimated ordinary risk}} &\multicolumn{3}{|c|}{\shortstack[c]{ Estimated adversarial risk}} &\multicolumn{3}{|c|}{\shortstack[c]{ Estimated structural adversarial risk}} \\ \cline{2-10}
             & \raisebox{0ex}{ERM} & \raisebox{0ex}{AERM} &  \shortstack[c]{Structural AERM} 
             & \raisebox{0ex}{ERM} & \raisebox{0ex}{AERM} &  \shortstack[c]{Structural AERM}
             & \raisebox{0ex}{ERM} & \raisebox{0ex}{AERM} &  \shortstack[c]{Structural AERM} \\ \hline
20news & {\bf 0.61 (0.01)} & 0.67 (0.01) & 0.68 (0.02) & 1.34 (0.01) & {\bf 1.22 (0.01)} & 1.4 (0.01) & 0.86 (0.04) & 0.86 (0.03) & {\bf 0.81 (0.05)}\\ \hline
satimage & {\bf 0.63 (0.0)} & 0.69 (0.0) & 0.68 (0.0) & 0.86 (0.01) & {\bf 0.69 (0.0)} & 0.73 (0.0) & 0.76 (0.01) & 0.69 (0.0) & {\bf 0.69 (0.01)}\\ \hline
letter & {\bf 0.47 (0.0)} & 0.67 (0.0) & 0.57 (0.04) & 0.79 (0.01) & {\bf 0.69 (0.0)} & 0.73 (0.01) & 0.74 (0.01) & 0.69 (0.0) & {\bf 0.66 (0.03)}\\ \hline
mnist & {\bf 0.31 (0.0)} & 0.4 (0.02) & 0.36 (0.0) & 0.72 (0.0) & {\bf 0.6 (0.01)} & 0.72 (0.0) & 0.44 (0.01) & 0.48 (0.0) & {\bf 0.41 (0.01)}\\ \hline
\end{tabular}}
%\label{table:comparison_sub_prior_kl}
\end{center}
\end{table*}

\section{Experimental Results with the PE divergence} \label{app:exp_pe}
In this section, we report the experimental results using the PE divergence, where we set $\delta = 0.5$.
The experimental results are reported in Table \ref{table:pe}.

\setlength\intextsep{0pt}
\setlength\textfloatsep{0pt}
\begin{table*}[t]
\small
\begin{center} 
\caption{Experimental comparisons of the three methods w.r.t.~the estimated ordinary risk and the estimated structural adversarial risk \emph{using the 0-1 loss (\%)}. The lower these values are, the better the performance of the method is. The PE divergence is used and distribution shift is assumed to be (a) class prior change and (b) sub-category prior change. Mean and standard deviation over 50 random train-test splits were reported. The best method and comparable ones based on the t-test at the significance level 1\% are highlighted in boldface.} 
\label{table:pe}
\subfloat[Class prior change.]{
\begin{tabular}{|l||c|c|c|c|c|c|c|}\hline
\raisebox{-1.3ex}{Dataset} & \multicolumn{3}{|c|}{\shortstack[c]{Estimated ordinary risk}} &\multicolumn{3}{|c|}{\shortstack[c]{Estimated structural adversarial risk}}  \\ \cline{2-7}
             & \raisebox{0ex}{ERM} & \raisebox{0ex}{AERM} &  \shortstack[c]{Structural AERM}
             & \raisebox{0ex}{ERM} & \raisebox{0ex}{AERM} &  \shortstack[c]{Structural AERM} \\ \hline
blood & {\bf 22.2 (0.6)} & {\bf 22.4 (0.5)} & 33.0 (2.0) & 47.4 (1.6) & 49.1 (1.1) & {\bf 36.6 (2.4)}\\ \hline
adult & {\bf 15.3 (0.1)} & {\bf 15.3 (0.1)} & 18.7 (0.2) & 24.9 (0.3) & 24.8 (0.3) & {\bf 19.1 (0.3)}\\ \hline
fourclass & {\bf 23.7 (1.2)} & {\bf 23.5 (1.2)} & 27.0 (1.3) & 32.7 (1.7) & 32.6 (1.8) & {\bf 28.7 (1.7)}\\ \hline
phishing & 6.0 (0.2) & 6.1 (0.2) & {\bf 5.9 (0.2)} & 7.1 (0.3) & 7.4 (0.3) & {\bf 6.4 (0.3)}\\ \hline
20news & {\bf 28.8 (0.3)} & 29.7 (0.3) & 33.7 (0.3) & 37.9 (0.3) & 38.4 (0.4) & {\bf 37.5 (0.4)}\\ \hline
satimage & {\bf 25.1 (0.2)} & 26.7 (0.3) & 28.1 (0.3) & 33.7 (0.4) & 35.6 (0.4) & {\bf 32.0 (0.4)}\\ \hline
letter & {\bf 14.2 (0.5)} & {\bf 14.5 (0.5)} & 15.5 (0.5) & 26.5 (0.9) & 25.6 (0.9) & {\bf 20.8 (0.7)}\\ \hline
mnist & {\bf 10.0 (0.1)} & 10.1 (0.1) & 12.2 (0.1) & 13.3 (0.1) & 13.2 (0.1) & {\bf 13.1 (0.1)}\\ \hline
\end{tabular}}
\\
%\label{table:comparison_class_prior_kl}
%\end{center}
%\begin{center}
\subfloat[Sub-category prior change.]{
\begin{tabular}{|l||c|c|c|c|c|c|c|}\hline
\raisebox{-1.3ex}{Dataset} & \multicolumn{3}{|c|}{\shortstack[c]{Estimated ordinary risk}} &\multicolumn{3}{|c|}{\shortstack[c]{Estimated structural adversarial risk}}  \\ \cline{2-7}
             & \raisebox{0ex}{ERM} & \raisebox{0ex}{AERM} &  \shortstack[c]{Structural AERM}
             & \raisebox{0ex}{ERM} & \raisebox{0ex}{AERM} &  \shortstack[c]{Structural AERM} \\ \hline
20news & {\bf 19.0 (0.3)} & 19.6 (0.4) & 20.8 (0.4) & 29.2 (0.4) & 29.7 (0.4) & {\bf 27.3 (0.4)}\\ \hline
satimage & {\bf 36.4 (0.3)} & 41.1 (1.9) & 39.6 (0.5) & 53.9 (0.4) & 56.7 (2.7) & {\bf 47.0 (0.5)}\\ \hline
letter & {\bf 17.4 (0.4)} & 18.5 (0.5) & 23.1 (3.3) & {\bf 38.0 (0.5)} & 39.5 (0.6) & 38.9 (1.0)\\ \hline
mnist & {\bf 13.3 (0.1)} & 13.5 (0.1) & 15.6 (0.2) & 20.0 (0.2) & 20.2 (0.2) & {\bf 18.6 (0.2)}\\ \hline
\end{tabular}}
%\label{table:comparison_sub_prior_kl}
\end{center}
\vspace{-0.6cm}
\end{table*}
%For both divergences, we set $\delta$ to 0.5. %For the PWL divergence, we set $a = 0.1$ and $b = 10.$
%The experimental results for the PWL divergence are reported in Table \ref{table:pwl}, while 

\end{document}